\newcommand{\dee}{\mathop{\mathrm{d}\!}}
\newcommand{\dt}{\,\dee t}
\def\balign#1\ealign{\begin{align}#1\end{align}}
\def\baligns#1\ealigns{\begin{align*}#1\end{align*}}
\def\balignat#1\ealign{\begin{alignat}#1\end{alignat}}
\def\balignats#1\ealigns{\begin{alignat*}#1\end{alignat*}}
\def\bitemize#1\eitemize{\begin{itemize}#1\end{itemize}}
\def\benumerate#1\eenumerate{\begin{enumerate}#1\end{enumerate}}
\newenvironment{talign*}
 {\csname align*\endcsname}
 {\endalign}
\newenvironment{talign}
 {\csname align\endcsname}
 {\endalign}
\def\balignst#1\ealignst{\begin{talign*}#1\end{talign*}}
\def\balignt#1\ealignt{\begin{talign}#1\end{talign}}
\let\originalleft\left
\let\originalright\right
\renewcommand{\left}{\mathopen{}\mathclose\bgroup\originalleft}
\renewcommand{\right}{\aftergroup\egroup\originalright}
\def\tinycitep*#1{{\tiny\citep*{#1}}}
\def\tinycitealt*#1{{\tiny\citealt*{#1}}}
\def\tinycite*#1{{\tiny\cite*{#1}}}
\def\smallcitep*#1{{\scriptsize\citep*{#1}}}
\def\smallcitealt*#1{{\scriptsize\citealt*{#1}}}
\def\smallcite*#1{{\scriptsize\cite*{#1}}}
\def\mbb#1{\mathbb{#1}}
\def\mrm#1{\mathrm{#1}}
\newcommand{\norm}[1]{\left\lVert#1\right\rVert}
\theoremstyle{plain}
\newtheorem*{remark}{\textbf{Remark}}
\newcommand{\E}{\mathbb{E}}
\newcommand{\R}{\mathbb{R}}
\newcommand{\boldA}{{\boldsymbol{A}}}
\newcommand{\boldB}{{\boldsymbol{B}}}
\newcommand{\boldF}{{\boldsymbol{F}}}
\newcommand{\boldH}{{\boldsymbol{H}}}
\newcommand{\boldI}{{\boldsymbol{I}}}
\newcommand{\boldJ}{{\boldsymbol{J}}}
\newcommand{\boldP}{{\boldsymbol{P}}}
\newcommand{\boldS}{{\boldsymbol{S}}}
\newcommand{\boldU}{{\boldsymbol{U}}}
\newcommand{\bW}{{\boldsymbol{W}}}
\newcommand{\bX}{{\boldsymbol{X}}}
\newcommand{\bK}{{\boldsymbol{K}}}
\newcommand{\bolde}{{\boldsymbol{e}}}
\newcommand{\bx}{{\boldsymbol{x}}}
\newcommand{\by}{{\boldsymbol{y}}}
\newcommand{\bz}{{\boldsymbol{z}}}
\newcommand{\boldtheta}{{\boldsymbol{\theta}}}
\newcommand{\boldTheta}{{\boldsymbol{\Theta}}}
\newcommand{\boldvarepsilon}{{\boldsymbol{\varepsilon}}}
\newcommand{\calE}{{\mathcal{E}}}
\newcommand{\calF}{{\mathcal{F}}}
\newcommand{\calH}{{\mathcal{H}}}
\newcommand{\calN}{{\mathcal{N}}}
\def\R{\mathbb{R}}
\def\<{\left\langle} % Angle brackets
\def\>{\right\rangle}
\def\norm#1{\left\|{#1}\right\|} % A norm with 1 argument
\def\E{\mbb{E}} % Expectation symbol
\newcommand{\Tr}[1]{\mathrm{tr}\LL(#1\RR)}
\def\Cov{\mrm{Cov}} % Covariance symbol
\DeclareSymbolFont{rsfs}{U}{rsfs}{m}{n}
\DeclareSymbolFontAlphabet{\mathscrsfs}{rsfs}
\providecommand{\argmin}{\mathop\mathrm{arg min}}
\providecommand{\diag}{\mathop\mathrm{diag}}
\def\supp#1{\mathrm{supp}({#1})}
\newtheorem{theo}{Theorem}
\newtheorem*{theo*}{Theorem}
\newtheorem{lemm}[theo]{Lemma}
\newtheorem{coro}[theo]{Corollary}
\newtheorem{prop}[theo]{Proposition}  
\newtheorem*{prop*}{Proposition}
\renewenvironment{proof}{\noindent\textbf{Proof.}\hspace*{.3em}}{\qed\\}
\newenvironment{proof-sketch}{\noindent\textbf{Proof Sketch}
  \hspace*{1em}}{\qed\bigskip\\}
\newenvironment{proof-idea}{\noindent\textbf{Proof Idea}
  \hspace*{1em}}{\qed\bigskip\\}
\newenvironment{proof-of-lemma}[1][{}]{\noindent\textbf{Proof of Lemma {#1}}
  \hspace*{1em}}{\qed\\}
\newenvironment{proof-of-theorem}[1][{}]{\noindent\textbf{Proof of Theorem {#1}}
  \hspace*{1em}}{\qed\\}
\newenvironment{proof-attempt}{\noindent\textbf{Proof Attempt}
  \hspace*{1em}}{\qed\bigskip\\}
\def\theequation{\thesection.\arabic{equation}}
\newcommand\numberthis{\addtocounter{equation}{1}\tag{\theequation}}
\newcommand{\bt}{\boldsymbol{\theta}}
\newcommand{\hbt}{\hat{\boldsymbol{\theta}}}
\newcommand{\dpar}[2]{\frac{\partial #1}{\partial #2}}
\newcommand*{\LL}{\left}
\newcommand*{\RR}{\right}
\newcommand{\bP}{\boldP}
\newcommand{\bSigma}{{\boldsymbol{\Sigma}}}
\newcommand{\btheta}{{\boldsymbol{\theta}}}
\newcommand{\bI}{{\boldsymbol{I}}}
\newcommand{\LPiPx}{{L_2(P_X)}}
\newcommand*\samethanks[1][\value{footnote}]{\footnotemark[#1]}
\title{\vspace{-4mm} When Does Preconditioning Help or Hurt Generalization?}
\author{\thanks{Alphabetical ordering. Correspondence to: Denny Wu (\texttt{dennywu@cs.toronto.edu}).}
Shun-ichi Amari\thanks{RIKEN Center for Brain Science. \texttt{amari@brain.riken.jp}.} ,\, 
Jimmy Ba\thanks{University of Toronto and Vector Institute for Artificial Intelligence. \texttt{\{jba,rgrosse,dennywu\}@cs.toronto.edu}.} ,\,
Roger Grosse\samethanks[3] ,\,
Xuechen Li\thanks{Google Research, Brain Team. Member of the Google AI Residency Program. \texttt{lxuechen@cs.toronto.edu}.}
\\ 
\!Atsushi Nitanda\thanks{University of Tokyo and RIKEN Center for Advanced Intelligence Project. \texttt{\{nitanda,taiji\}@mist.i.u-tokyo.ac.jp}.} ,\,  
Taiji Suzuki\samethanks[5] ,\, 
Denny Wu\samethanks[3] ,\,
Ji Xu\thanks{Columbia University. \texttt{jixu@cs.columbia.edu}.\vspace{-2mm}}
}
\begin{document}
\etocdepthtag.toc{mtchapter}
\etocsettagdepth{mtchapter}{subsection}
\etocsettagdepth{mtappendix}{none}

\maketitle

\vspace{-3.5mm}  
\begin{abstract}
While second order optimizers such as natural gradient descent (NGD) often speed up optimization, their effect on generalization has been called into question. 
% For instance, it has been pointed out that gradient descent (GD), in contrast to many preconditioned updates, converges to small Euclidean norm solutions in overparameterized models, leading to favorable generalization properties. 
This work presents a more nuanced view on how the \textit{implicit bias} of first- and second-order methods affects the comparison of generalization properties.
We provide an exact asymptotic bias-variance decomposition of the generalization error of overparameterized ridgeless regression under a general class of preconditioner $\boldsymbol{P}$, and consider the inverse population Fisher information matrix (used in NGD) as a particular example. 
We determine the optimal $\boldsymbol{P}$ for both the bias and variance, and find that the relative generalization performance of different optimizers depends on the label noise and the ``shape'' of the signal (true parameters): when the labels are noisy, the model is misspecified, or the signal is misaligned with the features, NGD can achieve lower risk; conversely, GD generalizes better than NGD under clean labels, a well-specified model, or aligned signal. 
Based on this analysis, we discuss several approaches to manage the bias-variance tradeoff, and the potential benefit of interpolating between GD and NGD. We then extend our analysis to regression in the reproducing kernel Hilbert space and demonstrate that preconditioned GD can decrease the population risk faster than GD.
Lastly, we empirically compare the generalization error of first- and second-order optimizers in neural network experiments, and observe robust trends matching our theoretical analysis. 
\end{abstract}

\section{Introduction}
\label{sec:intro}
{

Due to the significant and growing cost of training large-scale machine learning systems (e.g.~neural networks \cite{brown2020language}), there has been much interest in algorithms that speed up optimization.
Many such algorithms make use of various types of second-order information, and can be interpreted as minimizing the empirical risk (or the training error) $L(f_\btheta)$ via a preconditioned gradient descent update: 
\begin{align}
    \btheta_{t+1} = \btheta_{t} -\eta\bP(t)\nabla_{\btheta_t} L(f_{\btheta_t}), \quad t = 0,1,\ldots
    \label{eq:preconditioned_gradient}
\end{align}
Setting $\bP \!=\! \bI$ recovers ordinary gradient descent (GD). Choices of $\bP$ which exploit second-order information include the inverse Fisher information matrix, which gives the natural gradient descent (NGD) \cite{amari1998natural}; the inverse Hessian, which gives Newton's method \cite{lecun2012efficient}; and diagonal matrices estimated from past gradients, corresponding to various adaptive gradient methods \cite{duchi2011adaptive,kingma2014adam}. By using second-order information, these preconditioners often alleviate the effect of pathological curvature and speed up optimization.
 
However, the typical goal of learning is not to fit a finite training set, but to construct predictors that generalize beyond the training data.
Although second-order methods can lead to faster optimization, their effect on generalization has been largely under debate.
NGD \cite{amari1998natural}, as well as Adagrad \cite{duchi2011adaptive} and its successors \cite{kingma2014adam}, was originally justified in online learning, where efficient optimization directly translates to good generalization.
Nonetheless, there remains the possibility that, in the finite data setting, these preconditioned updates are more (or less) prone to overfitting than GD. For example, several works reported that in neural network optimization, adaptive or second-order methods generalize worse than GD and its stochastic variants \cite{wilson2017marginal,keskar2017improving}, while other empirical studies suggested that second-order methods can achieve comparable, if not better generalization \cite{xu2020second,zhang2018three}. 
We aim to understand when preconditioning using second-order information can help or hurt generalization under fixed training data.

\noindent
\parbox{0.63\linewidth}{
\vspace{1.2mm}
~~~~The generalization property of different optimizers relates to the discussion of \textit{implicit bias} \cite{gunasekar2018characterizing,zhang2016understanding}, i.e.~preconditioning may lead to a different converged solution (even under the same training loss), as shown in Figure~\ref{fig:implicit_bias_illustration}. While many explanations have been proposed, our starting point is the well-known observation that GD often implicitly regularizes the parameter $\ell_2$ norm. For instance in overparameterized least squares regression, GD and many first-order methods find the minimum $\ell_2$ norm solution from zero initialization (without explicit regularization), but preconditioned updates often do not. This being said, while the minimum norm interpolant may generalize well ~\cite{bartlett2019benign}, it is unclear whether preconditioning always leads to inferior solutions -- even in the simple setting of overparameterized linear regression, \textit{quantitative} understanding of how preconditioning affects generalization is largely lacking.
\vspace{1.2mm} 
} 
\parbox{0.01\linewidth}{\hspace{0.1cm}}
\parbox{0.35\linewidth}{ 
% \vspace{-0.1cm} 
{ 
\vspace{0.5mm}
\begin{minipage}[t]{1.0\linewidth}
\centering
{\includegraphics[width=0.98\linewidth]{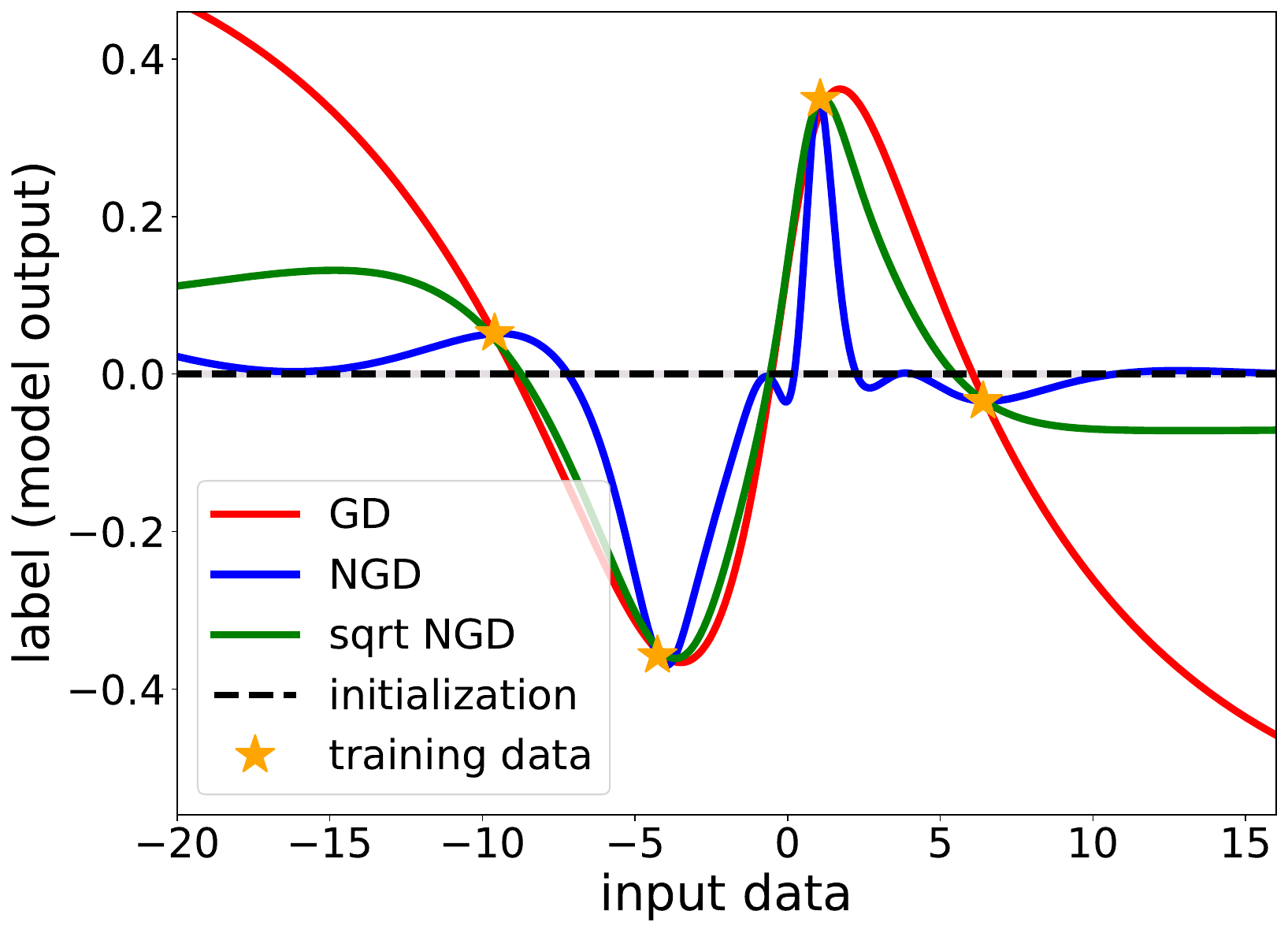}} \\ \vspace{-2.5mm}
\captionof{figure}{\small 1D illustration of different implicit biases: function output of interpolating two-layer sigmoid networks trained with preconditioned gradient descent.}
\label{fig:implicit_bias_illustration}
\vspace{0.5mm}
\end{minipage} 
}
} 
 
Motivated by the observations above, in this work we analyze the least squares regression model, which is convenient and also interesting for several reasons: $(i)$ the Hessian and Fisher matrix coincide and are not time varying, $(ii)$ the optimization trajectory and stationary solution admit an analytical form both with and without preconditioning, $(iii)$ due to overparameterization, different $\bP$ may give solutions with contrasting generalization properties.
Despite its simplicity, linear regression often yields insights and testable predictions for more complex problems such as neural network optimization; indeed we validate the conclusions of our analysis in neural network experiment (although a rigorous connection is not established). 

Our results are organized as follows. In Section~\ref{sec:risk}, we compute the stationary ($t\!\to\!\infty$) generalization error of update~\eqref{eq:preconditioned_gradient} for overparameterized linear regression (unregularized) under time-invariant preconditioner. Extending previous analysis in the proportional limit \cite{hastie2019surprises,dobriban2018high}, we consider a more general random effects model and derive the exact population risk in its \textit{bias-variance decomposition} via random matrix theory. 
We then characterize choices of $\bP$ that achieve the optimal bias or variance within a general class of preconditioners. 
Our analysis focuses on the comparison between GD, for which $\bP$ is identity, and NGD, for which $\bP$ is the inverse population Fisher information matrix\footnote{From now on we use NGD to denote the preconditioned update with the inverse \textit{population} Fisher, and we write ``sample NGD'' when $\bP$ is the inverse or pseudo-inverse of the sample Fisher; see Section~\ref{sec:related} for discussion.}.
Our characterization reveals that the comparison of generalization performance is affected by the following factors:  
\begin{enumerate}[leftmargin=*,topsep=0.5mm,itemsep=0.5mm] 
    \item \textbf{Label Noise:} Additive noise in the labels leads to the \textit{variance} term in the risk. We prove that NGD achieves the optimal variance among a general class of preconditioned updates. 
    \item \textbf{Model Misspecification:} 
    Under misspecification, there does not exist a perfect $f_\btheta$ that recovers the true function (target). We argue that this factor is similar to additional label noise, and thus NGD may also be beneficial when the model is misspecified. 
    \item \textbf{Data-Signal-Alignment:} 
    Alignment describes how the target signal distributes among the input features.
    % \footnote{Our notion of alignment relates to the \textit{source condition} in RKHS \cite{cucker2002mathematical} (see Section~\ref{subsec:RKHS}).} and affects the \textit{bias} term. 
    We show that GD achieves lower \textit{bias} when signal is isotropic, whereas NGD is preferred under ``misalignment''
    --- when the target function focuses on small feature directions. 
\end{enumerate}

\noindent
In addition to the decomposition of stationary risk, our findings in Section~\ref{sec:bias-variance} and \ref{sec:experiment} are summarized as: 
\begin{itemize}[leftmargin=*,topsep=0.75mm,itemsep=0.6mm] 
    \item In Section~\ref{subsec:interpolate} and \ref{subsec:early-stop} we discuss how the bias-variance tradeoff can be realized by different choices of preconditioner $\bP$ (e.g.~interpolating between GD and NGD) or early stopping.
    \item In Section~\ref{subsec:RKHS} we extend our analysis to regression in reproducing kernel Hilbert spaces (RKHS) and show that under early stopping, a preconditioned update interpolating between GD and NGD achieves minimax optimal convergence rate in much fewer steps, and thus reduces the population risk faster than GD.
    \item In Section \ref{sec:experiment} we empirically test how well our predictions from the linear regression setting carry over to neural networks: under a student-teacher setup, we compare the generalization of GD with preconditioned updates and illustrate the influence of all aforementioned factors. The performance of neural networks under a variety of manipulations results in trends that align with our analysis of linear model. 
\end{itemize}

}
\section{Background and Related Works}
\label{sec:related}
{

\paragraph{Natural Gradient Descent.}
NGD is a second-order optimization method originally proposed in~\cite{amari1997neural}. 
Consider a data distribution $p(\bx)$ on the space $\mathcal{X}$, a function $f_\boldtheta: \mathcal{X} \to \mathcal{Z}$ parameterized by $\boldtheta$, and
a loss function $L(\bX, f_\boldtheta) = \frac{1}{n} \sum_{i=1}^n l( y_i, f_\boldtheta(\bx_i))$, where $l: \mathcal{Y} \times \mathcal{Z} \to \R$.
Also suppose a probability distribution $p(y|\bz) = p(y| f_\boldtheta(\bx))$ is defined on the space of labels as part of the model.
Then, the natural gradient is the direction of steepest ascent in the Fisher information norm given by
$
    \tilde{\nabla}_\theta L(\bX, f_\boldtheta) = {\boldF}^{-1} \nabla_\theta L(\bX, f_\boldtheta),
$
where 
\vspace{-0.03cm}
\begin{align}
    \boldF = \mathbb{E} [
        \nabla_\boldtheta \log p(\bx, y | \boldtheta) \nabla_\boldtheta \log p(\bx, y | \boldtheta)^\top
        ]
      = -\mathbb{E}[ 
        \nabla_\boldtheta^2 \log p(\bx, y | \boldtheta)
        ]
\label{eq:fisher_defn}
\end{align}
is the \emph{Fisher information matrix}, or simply the (population) Fisher. Note the expectations in \eqref{eq:fisher_defn} are under the joint distribution of the model $p(\bx, y| \boldtheta) = p(\bx) p(y| f_\boldtheta(\bx) )$.
In the literature, the Fisher is sometimes defined under the empirical data distribution, i.e.~based on a finite set of training examples $\{\bx_i\}_{i=1}^n$~\cite{amari2000adaptive}. We instead refer to this quantity as the \emph{sample Fisher}, the properties of which influence optimization and have been studied in various works \cite{karakida2018universal,karakida2019normalization,kunstner2019limitations,thomas2020interplay}. 
Note that in linear and kernel regression (unregularized) under the squared loss, sample Fisher-based preconditioned updates give the same stationary solution as GD (see \cite{zhang2019fast} and Section~\ref{sec:risk}), whereas population Fisher-based update may not.  

While the population Fisher is typically difficult to obtain, extra unlabeled data can be used in its estimation, which empirically improves generalization under appropriately chosen damping~\cite{pascanu2013revisiting}. 
Moreover, under structural assumptions, estimating the Fisher with parametric approaches can be more sample-efficient~\cite{martens2015optimizing,grosse2016kronecker,ollivier2015riemannian,marceau2016practical}, and thus closing the gap between the sample and population Fisher.

% The term \emph{empirical Fisher} has also appeared in the literature and refers to the case where the expectation in~\eqref{eq:fisher_defn} is modified to be under the joint empirical distribution of data and labels. 
% Its effect on optimization has been empirically studied~\cite{kunstner2019limitations}, and our work will mostly focus on preconditioning using Fisher and the sample Fisher.

When the per-instance loss $l$ is the negative log-probability of an exponential family, the sample Fisher coincides with the \emph{generalized Gauss-Newton matrix}~\cite{martens2014new}.
In least squares regression, which is the focus of this work, the quantity also coincides with the Hessian due to the linear prediction function. Therefore, we take NGD as a representative example of preconditioned update, and we expect our findings to also translate to other second-order methods (not including adaptive gradient methods) applied to regression problems.

\vspace{-2.5mm}  
\paragraph{Analysis of Preconditioned Gradient Descent.}
While \cite{wilson2017marginal} outlined an example on fixed training data where GD generalizes better than adaptive methods, in the online learning setting, for which optimization speed relates to generalization, several works have shown the advantage of preconditioning \cite{duchi2011adaptive,levy2019necessary,zhang2019algorithmic}. In addition, global convergence and generalization guarantees were derived for the sample Fisher-based update in neural networks in the kernel regime~\cite{zhang2019fast,cai2019gram,karakida2020understanding}. 
Lastly, the generalization of different optimizers also connects to the notion of ``sharpness''  \cite{keskar2016large,dinh2017sharp}, and it has been argued that second-order updates tend to find sharper minima \cite{wu2018sgd}.

Two concurrent works also discussed the generalization performance of preconditioned updates. \cite{wadia2020whitening} connected second-order methods to data whitening in linear models, and qualitatively showed that whitening (thus second-order update) harms generalization in certain cases.
\cite{vaswani2020each} analyzed the complexity of the maximum $\bP$-margin solution in linear classification problems.
We emphasize that instead of \textit{upper bounding} the risk (e.g.~Rademacher complexity), which may not decide the optimal $\bP$ (for generalization), we compute the \textit{exact risk} for least squares regression, which allows us to precisely compare different preconditioners.

}

\section{Asymptotic Risk of Ridgeless Interpolants}
\label{sec:risk}
{
\allowdisplaybreaks  
We consider a student-teacher setup: given $n$ training samples $\{\bx_i\}_{i=1}^n$, where labels are generated by a teacher model (target function) $f^*\!: \R^d\!\to\!\R$ with additive noise $y_i = f^*(\bx_i) + \varepsilon_i$, we learn a linear student model $f_{\btheta}$ by minimizing the (empirical) squared loss: $L(\bX,f) = \sum_{i=1}^n \LL(y_i-\bx_i^\top\bt\RR)^2$. We assume a random design: $\bx_i = \bSigma_\bX^{1/2}\bz_i$, where $\bz_i\in\R^d$ is an i.i.d.~random vector with zero-mean, unit-variance, and finite 12th moment, and $\varepsilon$ is i.i.d.~noise independent to $\bz$ with mean 0 and variance $\sigma^2$. 
Our goal is to compute the population risk $R(f) = \E_{P_X}[(f^*(\bx)-f(\bx))^2]$ in the proportional asymptotic limit: 
\begin{itemize}[leftmargin=*,topsep=0.5mm]
    \item \textbf{(A1) Overparameterized Proportional Limit:} $n,d\rightarrow\infty$, $d/n\to \gamma \in (1, \infty)$.
\end{itemize}
 
(A1) entails that the number of features (or trainable parameters) is larger than the number of samples, and there exist multiple empirical risk minimizers with potentially different generalization properties.

Denote $\bX = [\bx_1^\top,..., \bx_n^\top]^\top \in\R^{n\times d}$ the matrix of training data and $\by\in\R^{n}$ the corresponding label vector. We optimize the parameters $\bt$ via a preconditioned gradient flow with preconditioner $\bP(t)\in\R^{d\times d}$,
\vspace{-0.5mm} 
\begin{align}
    \dpar{\bt(t)}{t} = -\bP(t)\dpar{L(\bt(t))}{\bt(t)} = \frac{1}{n} \bP(t)\bX^\top(\by - \bX \bt(t)), \quad \bt(0) = 0.
\end{align}
\vspace{-4.5mm} 

As previously mentioned, in this linear setup, many common choices of preconditioner do not change through time: under Gaussian likelihood, the sample Fisher (and also Hessian) corresponds to the sample covariance $\bX^\top\bX/n$ up to variance scaling, whereas the population Fisher corresponds to the population covariance $\boldF=\bSigma_\bX$. We thus limit our analysis to fixed preconditioner of the form $\bP(t)=:\bP$.  

% Denote parameters at time $t$ under preconditioned gradient update with fixed $\bP$ as $\bt_\bP(t)$. For positive definite $\bP$, the gradient flow trajectory from zero initialization is given as
% \eq{ 
%     \bt_\bP(t) = \bP\bX^\top\left[\bI_n-\exp\left(-\frac{t}{n}\bX \bP\bX^\top\right) \right](\bX\bP\bX^\top)^{-1}\by,
% }
% The stationary solution is obtained by taking the large $t$ limit, which we denote as: $\hbt_\bP \!:=\! \lim_{t\to \infty} \bt_\bP(t) \!=\! \bP\bX^\top(\bX\bP\bX^\top)^{-1}\by$. 
% It is straightforward to check that the discrete time gradient descent update (with appropriately chosen step size) and other variants that do not alter the span of the gradient (e.g.~stochastic gradient or momentum update) converge to this stationary solution as well.  

\noindent
\parbox{0.7\linewidth}{
\vspace{1mm}
~~~~Denote parameters at time $t$ under preconditioned gradient update with fixed $\bP$ as $\bt_\bP(t)$. 
For positive definite $\bP$, the stationary solution is given as: $\hbt_\bP \!:=\! \lim_{t\to \infty} \bt_\bP(t) \!=\! \bP\bX^\top(\bX\bP\bX^\top)^{-1}\by$. 
It is straightforward to check that the discrete time gradient descent update (with appropriate step size) and other variants that do not alter the span of the gradient (e.g.~stochastic gradient or momentum) converge to the same stationary solution as well.  
\vspace{1.6mm} 

~~~~Intuitively speaking, if the data distribution (blue contour in Figure~\ref{fig:implicit_bias_2D}) is not isotropic, then some feature directions will be more ``important'' than others. In this case uniform $\ell_2$ shrinkage (which GD implicitly provides) may not be most desirable, and certain $\bP$ that takes the data geometry into account may lead to better generalization performance instead. The above intuition will be made rigorous in the following subsections.

\vspace{0.5mm} 
} 
\parbox{0.01\linewidth}{\hspace{0.1cm}}
\parbox{0.28\linewidth}{ 
% \vspace{-0.1cm} 
{ 
\vspace{0.5mm}
\begin{minipage}[t]{1.0\linewidth}
\centering
{\includegraphics[width=0.96\linewidth]{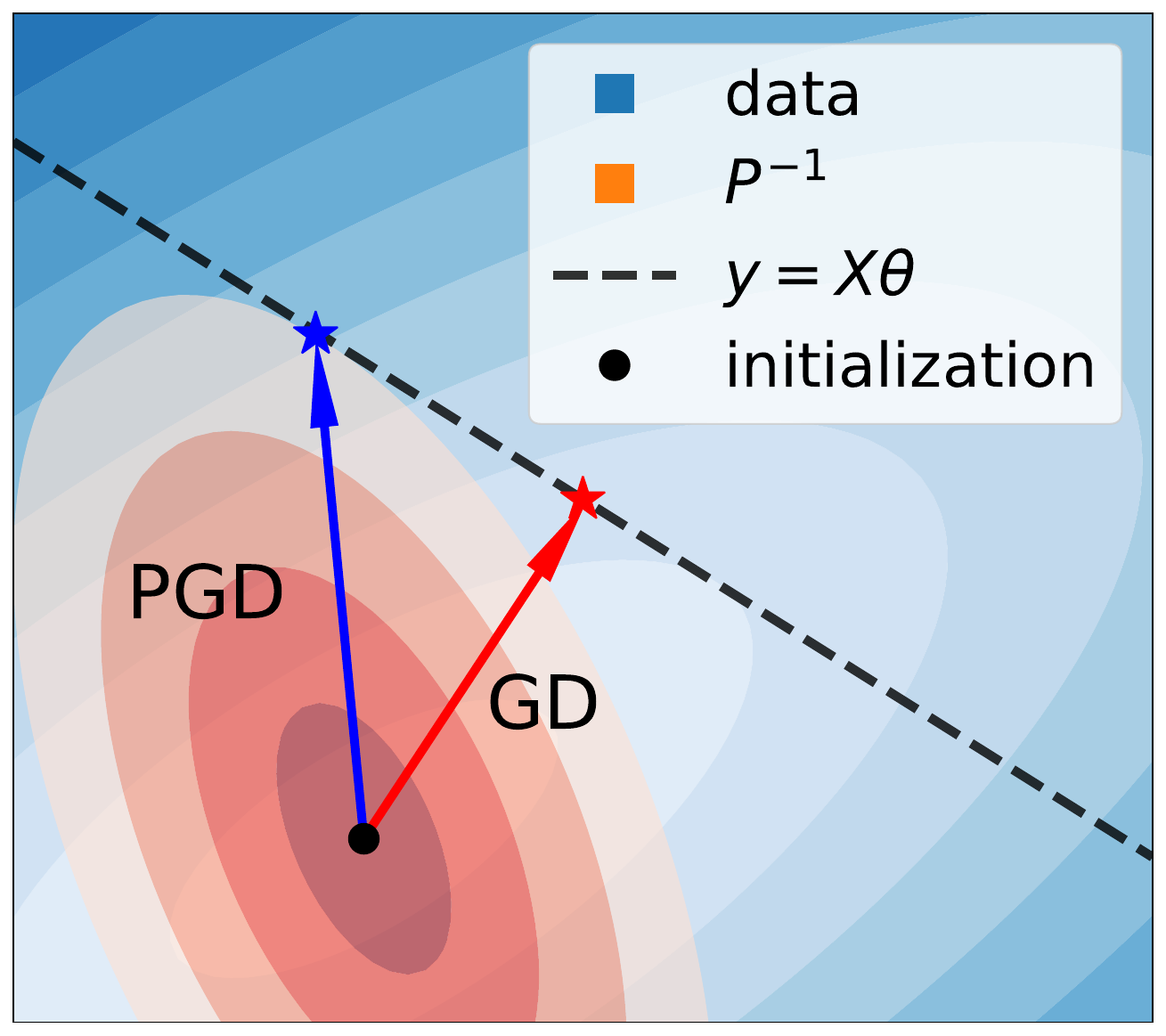}} \\ \vspace{-2.4mm}
\captionof{figure}{\small Geometric illustration (2D) of how the interpolating $\bt_\bP$ depends on the preconditioner.} 
\label{fig:implicit_bias_2D}
% \vspace{0.5mm}
\end{minipage}
}
} 

\begin{remark}
For positive definite $\bP$, the estimator $\hbt_\bP$ is the minimum $\norm{\bt}_{\bP^{-1}}$ norm interpolant: $\hbt_\bP\!=\!\mathrm{arg\,min}_{\bt} \norm{\bt}_{\bP^{-1}}, \, \mathrm{s.t.}\, \bX\bt \!=\! \by$. For GD this translates to the $\ell_2$ norm of the parameters, whereas for NGD ($\bP \!=\! \boldF^{-1} \!=\! \bSigma_{\bX}^{-1}$), the implicit bias is the $\norm{\bt}_{\boldF}$ norm. Since $\E_{P_X}[f(\bx)^2] \!=\! \norm{\bt}_{\bSigma_\bX}^2$, NGD finds an interpolating function with smallest norm under the data distribution. 
We empirically observe this divide between small parameter norm and function norm in neural networks as well (see Figure~\ref{fig:implicit_bias_illustration} and Appendix~\ref{subsec:implicit_bias_appendix}). 
\end{remark}   
% \vspace{-0.25cm}

\noindent
We highlight the following choices of $\bP$ and the corresponding stationary solution $\hbt_\bP$ as $t\to\infty$.

\vspace{-1.5mm} 

\noindent
\parbox{0.66\linewidth}{
\begin{itemize}[leftmargin=*,itemsep=0.3mm]
    \item \textbf{Identity:} $\bP \!=\! \bI_d$ recovers GD that converges to the minimum $\ell_2$ norm interpolant (also true for momentum GD and SGD), which we write as $\hbt_\bI := \bX^\top(\bX\bX^\top)^{-1}\by$ and refer to as the \textit{GD solution}.
    \item \textbf{Population Fisher:} $\bP \!=\!\boldF^{-1}\!=\!\bSigma_\bX^{-1}$ leads to the estimator $\hbt_{\boldF^{-1}}$, which we refer to as the \textit{NGD solution}.
    \item \textbf{Sample Fisher:} since the sample Fisher is rank-deficient, we may add a damping $\bP = (\bX^\top\bX + \lambda\bI_d)^{-1}$ or take the pseudo-inverse $\bP  = (\bX^\top\bX)^\dagger$. In both cases, the gradient is still spanned by $\bX$, and thus the update finds the same min $\ell_2$-norm solution $\hbt_\bI$ (also true for full-matrix Adagrad~\cite{agarwal2018case}), although the trajectory differs, as shown in Figure~\ref{fig:demo} (see Figure~\ref{fig:implicit-bias} for neural networks). 
\end{itemize}
\vspace{-4.mm}
\begin{remark}
The above choices reveal a gap between population- and sample-based preconditioners: while the sample Fisher accelerates optimization \cite{zhang2019fast}, the following sections demonstrate certain generalization properties only possessed by the population Fisher.  
\end{remark} 
} 
\parbox{0.015\linewidth}{\hspace{0.1cm}}
\parbox{0.31\linewidth}{
{ 
% \vspace{-0.5mm}
\begin{minipage}[t]{1.01\linewidth}
\centering
{\includegraphics[width=1.01\linewidth]{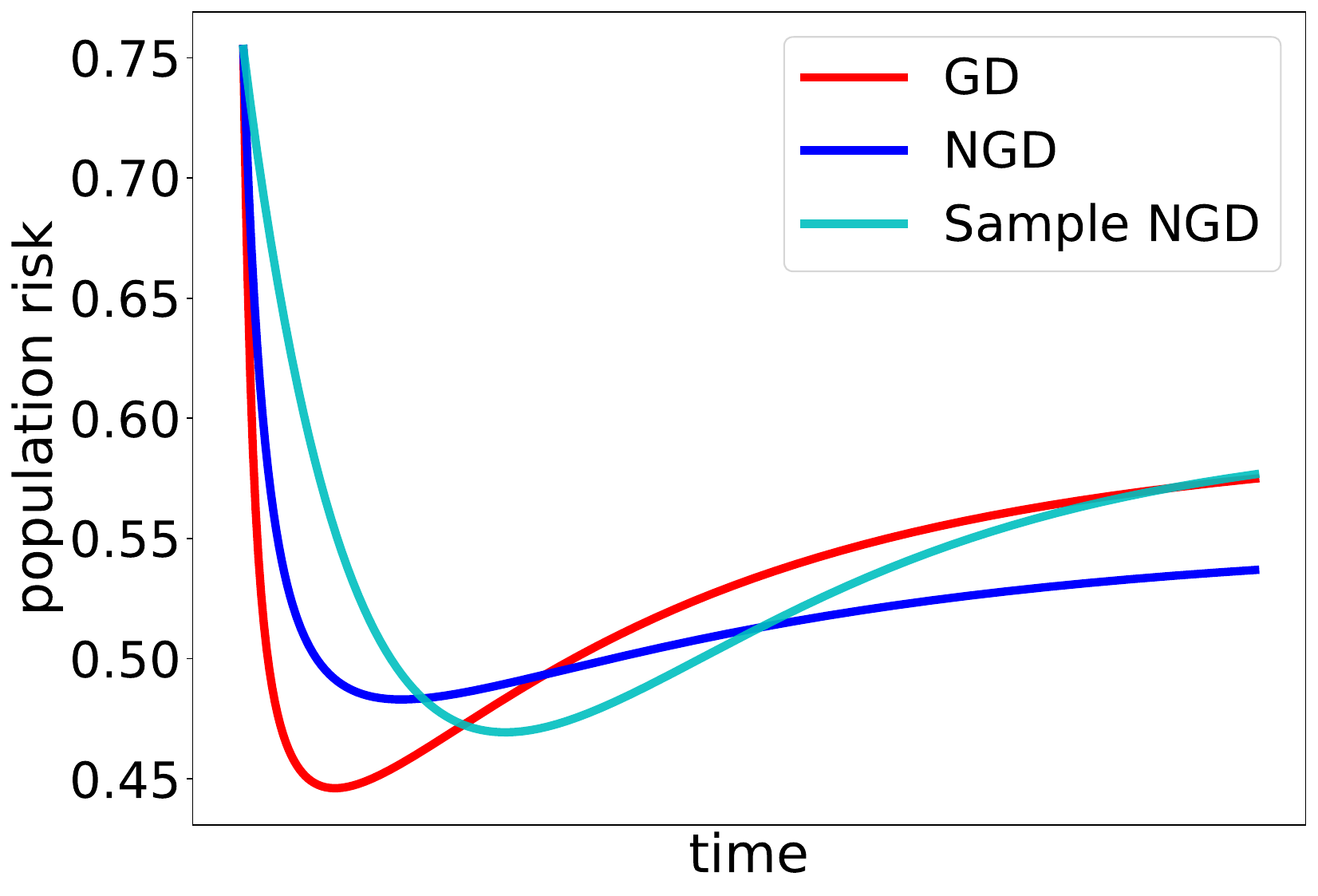}} \\ \vspace{-2mm}
\captionof{figure}{\small Population risk of preconditioned linear regression vs.~time with the following $\bP$: $\bI$ (red), $\bSigma_\bX^{-1}$ (blue) and $(\bX^\top\bX)^\dagger$ (cyan). Time is rescaled differently for each curve (convergence speed is not comparable).
Note that GD and sample NGD give the same stationary risk.}
\label{fig:demo}
\vspace{-0.5mm}
\end{minipage}
}
} 

We compare the population risk of the GD solution $\hbt_\bI$ and the NGD solution $\hbt_{\boldF^{-1}}$ in its bias-variance decomposition w.r.t. the label noise~\cite{hastie2019surprises}, and discuss the two components separately: 
\begin{align}
    R(\bt) = \underbrace{\E_{P_X}[(f^*(\bx) - \bx^\top\E_{P_\varepsilon}[\bt])^2]}_{B(\bt) \text{, bias}} + \underbrace{\Tr{\Cov(\bt)\bSigma_\bX}}_{V(\bt)  \text{, variance}}.  
\end{align}  
\vspace{-3.7mm} 

Note that the \textit{bias} does not depend on the label noise $\varepsilon$, and the \textit{variance} does not depend on the teacher model $f^*$. Additionally, given that $f^*$ can be independently decomposed into a linear component on features $\bx$ and a residual: $f^*(\bx) = \langle\bx,\bt^*\rangle + f^*_c(\bx)$, 
we can further decompose the bias term into a \textit{well-specified} component $\norm{\bt^* - \E\bt}_{\bSigma_\bX}^2$, which captures the difficulty in learning $\btheta^*$, and a \textit{misspecified} component, 
which corresponds to the error due to fitting $f^*_c$ (beyond the class of functions the student can represent).  

\subsection{The Variance Term: NGD is Optimal}
\vspace{-0.2mm} 
 
We first characterize the stationary variance which is independent to the teacher model $f^*$. We restrict ourselves to preconditioners satisfying the following assumption on the spectral distribution: 
\begin{itemize}[leftmargin=*,topsep=0.1mm,itemsep=0.5mm]
    \item \textbf{(A2) Converging Eigenvalues:} $\bP$ is positive definite and as $n,d\to\infty$, the spectral distribution of $\bSigma_{\bX\bP} := \bP^{1/2}\bSigma_{\bX}\bP^{1/2}$ converges weakly to $\boldH_{\bX\bP}$ supported on $[c,C]$ for $c,C>0$.
\end{itemize}
The following theorem characterizes the asymptotic variance and the corresponding optimal $\bP$. 
\begin{theo}
\label{theo:variance}
Given (A1-2), the asymptotic variance is given as
\begin{align}
V(\hbt_\bP) \to \sigma^2\LL(\lim_{\lambda\to 0_+} \frac{m'(-\lambda)}{m^{2}(-\lambda)} - 1\RR),
\label{eq:variance}
\end{align} 
where $m(z)>0$ is the Stieltjes transform of the limiting distribution of eigenvalues of $\frac{1}{n}\bX\bP\bX^\top$ (for $z$ beyond its support) defined as the solution to $m^{-1}(z) = -z + \gamma\int \tau(1+\tau m(z))^{-1}\mathrm{d}\boldH_{\bX\bP}(\tau)$.

Furthermore, under (A1-2), $V(\hbt_\bP) \ge \sigma^2 (\gamma-1)^{-1}$, and the equality is obtained when $\bP = \boldF^{-1} = \bSigma_\bX^{-1}$. 
\end{theo}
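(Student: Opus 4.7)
The plan proceeds in three stages: an exact expression for the variance, reduction to a standard sample covariance model, and a Cauchy--Schwarz argument for the lower bound.

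Starting from $\hbt_{\bP} = \bP\bX^\top(\bX\bP\bX^\top)^{-1}\by$ with $\by = \bX\boldbeta^\star + \boldvarepsilon$, a direct calculation of $\Cov(\hbt_\bP)$ over the noise $\boldvarepsilon$ gives
\begin{align}
V(\hbt_{\bP}) \;=\; \sigma^2\,\mathrm{tr}\bigl(\bP\bX^\top(\bX\bP\bX^\top)^{-2}\bX\bP\bSigma_{\bX}\bigr).
\end{align}
Under (A3), writing $\bSigma_{\bX} = \boldU\boldD_{\bX}\boldU^\top$, $\bP = \boldU\boldD_{\bP}\boldU^\top$, and $\bX = \boldZ\boldU\boldD_{\bX}^{1/2}$ with $\boldZ$ having i.i.d.\ entries (so that $\tilde{\boldZ} := \boldZ\boldU$ has the same distribution as $\boldZ$), the matrix $\tfrac{1}{n}\bX\bP\bX^\top$ equals in distribution a standard sample covariance matrix with ``population'' $\boldD := \boldD_{\bX}\boldD_{\bP}$, whose empirical spectrum converges to $F_{\bX\bP}$. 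Equivalently, via the change of variables $\tilde{\bX} = \bX\bP^{1/2}$ and $\tilde{\bSigma} = \bP^{1/2}\bSigma_{\bX}\bP^{1/2}$, the expression above is exactly the variance of the ridgeless min-norm interpolator in a standard anisotropic regression, evaluated in the $\tilde{\bSigma}$ metric.

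Next, I would invoke the trace-functional asymptotics of \cite{rubio2011spectral} (equivalently, the $\lambda\to 0_+$ limit of the ridge regression variance in \cite{dobriban2018high}), together with the derivative identity $\tfrac{1}{n}\mathrm{tr}(\boldB+\lambda\bI)^{-2} \to m'(-\lambda)$, where $\boldB = \tfrac{1}{n}\bX\bP\bX^\top$ and $m(z)$ is the Stieltjes transform characterized by the fixed-point equation in the statement. This yields
\begin{align}
V(\hbt_{\bP}) \;\to\; \sigma^2\Bigl(\lim_{\lambda\to 0_+}\frac{m'(-\lambda)}{m(-\lambda)^{2}} - 1\Bigr).
\end{align}
The main obstacle in this step is justifying the exchange of the ridgeless limit $\lambda\to 0_+$ with the random matrix limit $n\to\infty$; it is controlled by a hard-edge lower bound on the smallest eigenvalue of $\boldB$ (ensured by (A1)--(A3) and the $12$th-moment assumption on $\boldz$) together with concentration of resolvent quadratic forms.

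For the lower bound, I would differentiate the Stieltjes fixed-point equation implicitly with respect to $z$ to obtain, with $\tau \sim F_{\bX\bP}$,
\begin{align}
\frac{m'(z)}{m(z)^{2}} \;=\; \frac{1}{1 - \gamma\, m(z)^{2}\,\E\bigl[\tau^2/(1+\tau m(z))^2\bigr]}.
\end{align}
At $z = 0_-$ the fixed point reads $1/(\gamma m) = \E[\tau/(1+\tau m)]$, while Cauchy--Schwarz gives $(\E[\tau/(1+\tau m)])^2 \le \E[\tau^2/(1+\tau m)^2]$. Combining the two yields $\gamma\, m^2\,\E[\tau^2/(1+\tau m)^2] \ge 1/\gamma$, so that substituting back produces $V(\hbt_{\bP}) \ge \sigma^2/(\gamma-1)$. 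Equality in Cauchy--Schwarz holds iff $\tau/(1+\tau m)$ is almost surely constant, i.e.\ $F_{\bX\bP}$ is a point mass, which under (A3) amounts to $\boldD_{\bP}\propto\boldD_{\bX}^{-1}$; combined with the scale invariance $\hbt_{c\bP} = \hbt_{\bP}$, the minimizer is $\bP = \bSigma_{\bX}^{-1} = \boldF^{-1}$, i.e.\ NGD.
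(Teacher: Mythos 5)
Your proposal is correct and follows essentially the same route as the paper's proof: reduce to the whitened model $\bar{\bX}=\bX\bP^{1/2}$ with limiting spectral measure $F_{\bX\bP}$, invoke the ridge-variance asymptotics of \cite{hastie2019surprises,dobriban2018high,rubio2011spectral}, differentiate the Stieltjes fixed-point equation, and apply Jensen/Cauchy--Schwarz with equality iff $F_{\bX\bP}$ is a point mass. Your added remarks on justifying the $\lambda\to 0_+$ interchange via a hard-edge eigenvalue bound are a welcome extra precision but do not change the argument.
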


Formula \eqref{eq:variance} is a direct extension of \cite[Thorem 4]{hastie2019surprises}, which can be obtained from \cite[Thorem 2.1]{dobriban2018high} or \cite[Thorem 1.2]{ledoit2011eigenvectors}. 
% We note that the eigenvalue condition in (A2) may also be relaxed as in \cite{xu2019many}.
Theorem~\ref{theo:variance} implies that preconditioning with the inverse population Fisher $\boldF$ results in the optimal stationary variance, which is supported by Figure~\ref{fig:ridgeless-1}(a). In other words, when the labels are noisy so that the risk is dominated by the variance, we expect NGD to generalize better upon convergence. 
We emphasize that this advantage is only present when the population Fisher is used, but not its sample-based counterpart (which converges to $\hbt_\bI$). 
In Appendix \ref{subsec:approximate_fisher_appendix} we discuss the substitution error in replacing the population Fisher $\boldF$ with a sample-based estimate using unlabeled data. 
\vspace{-2.8mm}

\paragraph{Misspecification $\approx$ Label Noise.}
Under model misspecification, there does not exist a linear student that perfectly recovers the teacher model $f^*$, which we may decompose as: $f^*(\bx) = \bx^\top\btheta^* + f^*_c(\bx)$. 
In the simple case where $f_c^*$ is an independent linear function on unobserved features (considered in \cite[Section 5]{hastie2019surprises}): $y_i = \bx_i^\top\btheta^* \!+\! \bx_{c,i}^\top\btheta^c \!+\! \varepsilon_i$,  where $\bx_{c,i}\in \R^{d_c}$ is features independent to $\bx_i$, we can show that the additional error in the \textit{bias} term due to misspecification is analogous to the \textit{variance} term above: 

\noindent
\parbox{0.68\linewidth}{
\vspace{-1.5mm}
\begin{coro} 
\label{coro:misspecify}
Under (A1)(A2), for the above unobserved features model with $\E[\bx^c\bx^{c\top}]=\bSigma_\bX^c$ and $\E[\btheta^c\btheta^{c\top}] = d_c^{-1}\bSigma_{\btheta}^c$, the additional error (in the bias term) due to misspecification can be written as $B_c(\hbt_\bP) = d_c^{-1} \Tr{\bSigma_{\bX}^c\bSigma_{\btheta}^c}(V(\hbt_\bP) + 1)$, where $V(\hbt_\bP)$ is the variance term in \eqref{eq:variance}.  
\end{coro}
\vspace{-0.5mm}  
In this case, misspecification can be interpreted as additional label noise, for which NGD is optimal by Theorem~\ref{theo:variance}. 
While Corollary~\ref{coro:misspecify} describes one specific example of misspecification, we expect such characterization to hold under broader settings. 
In particular, \cite[Remark 5]{mei2019generalization} indicates that for many nonlinear $f^*_c$, the misspecified bias is same as variance due to label noise. This result is only shown for isotropic data, but we empirically observe similar phenomenon under general covariance in Figure~\ref{fig:misspecification_nonlinear}, in which $f^*_c$ is a quadratic function.
Observe that NGD leads to lower bias compared to GD as we further misspecify the teacher model.

\vspace{-0.5mm}
} 
\parbox{0.01\linewidth}{\hspace{0.1cm}}
\parbox{0.3\linewidth}{ 
{ 
% \vspace{1mm} 
\begin{minipage}[t]{0.99\linewidth}
\centering
{\includegraphics[width=0.97\linewidth]{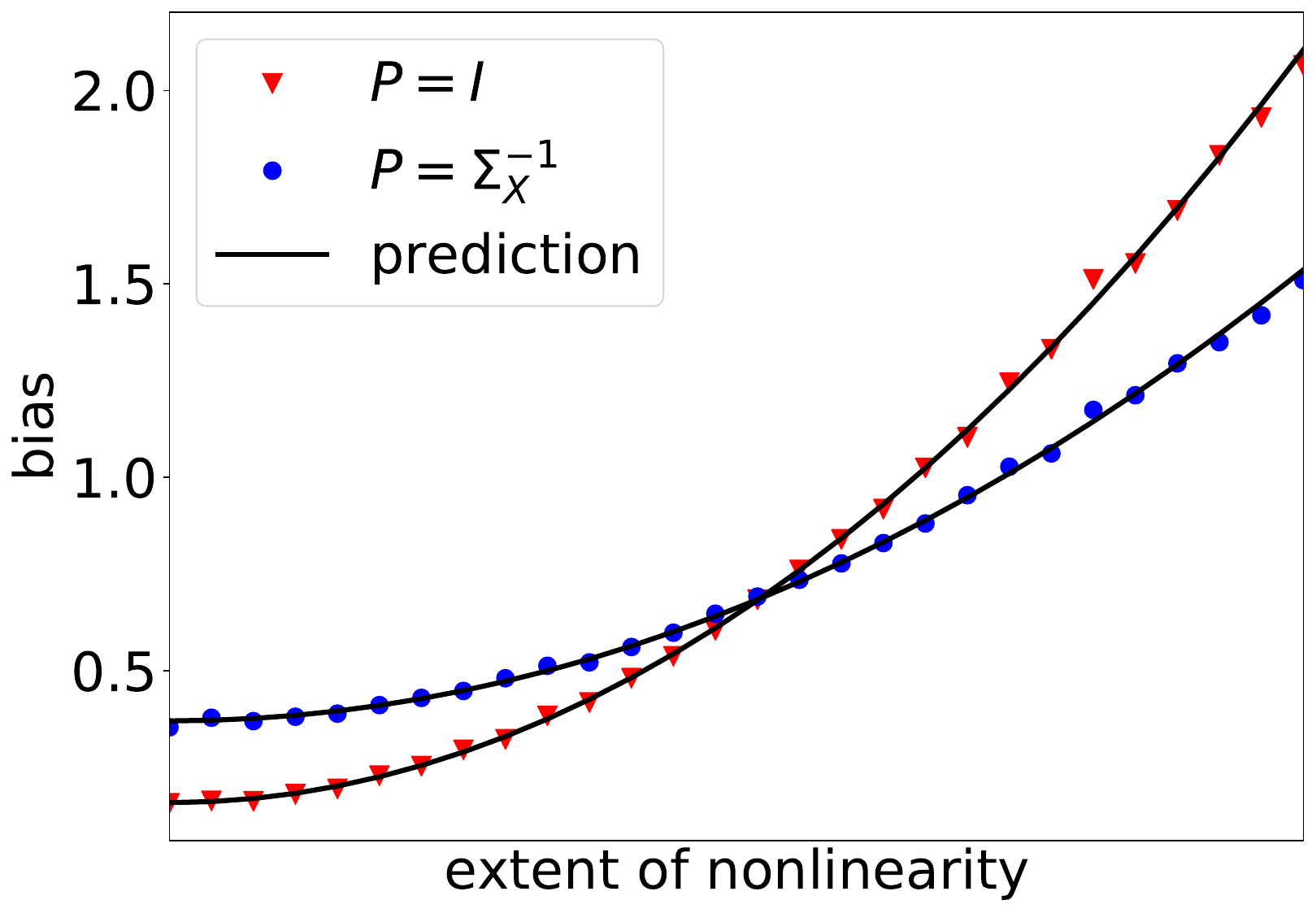}} \\ \vspace{-2.5mm}
\captionof{figure}{\small Misspecified bias with $\bSigma_{\btheta} \!=\! \bI_d$ (favors GD) and $f^*_c(\bx) \!=\! \alpha(\bx^\top\bx\!-\! \Tr{\bSigma_\bX})$, where $\alpha$ controls the extent of nonlinearity. Predictions are generated by matching $\sigma^2$ with second moment of $f^*_c$.} 
\label{fig:misspecification_nonlinear}
\vspace{-2mm}
\end{minipage}
}
} 
  
\subsection{The Bias Term: Alignment and ``Difficulty'' of Learning}
\label{subsec:bias-well_spec}
{
We now analyze the bias term when the teacher model is linear on the input features $\bx$ (hence well-specified): $f^*(\bx) = \bx^\top\btheta^*$. Extending the random effects hypothesis in \cite{dobriban2018high}, we consider a more general prior on $\btheta^*$: $\E[\btheta^*\btheta^{*\top}] =  d^{-1}\bSigma_{\btheta}$, and assume the following joint relations on the covariances and the preconditioner\footnote{Note that (A2)(A3) covers many common choices of preconditioner, such as the population Fisher and variants of the sample Fisher (which is degenerate but leads to the same minimum $\ell_2$ norm solution as GD).}: 

\begin{itemize}[leftmargin=*,topsep=0.5mm,itemsep=0.5mm]
    \item \textbf{(A3) Joint Convergence:} $\bSigma_\bX$ and $\bP$ share the same eigenvectors $\boldU$, and $\|\bP^{-1/2}\bSigma_{\btheta}\bP^{-1/2}\|_2$ is finite. 
    The empirical distributions of elements of $(\bolde_x,\bolde_\theta,\bolde_{xp})$ jointly converge to random variables $(\upsilon_x,\upsilon_{\theta},\upsilon_{xp})$ supported on $[c',\infty)$ for $c'>0$, where $\bolde_x$, $\bolde_{xp}$ are eigenvalues of $\bSigma_\bX$ and $\bSigma_{\bX\bP}$, and $\bolde_\theta = \diag{(\boldU^\top\bSigma_{\theta}\boldU)}$. 
\end{itemize}
We remark that when $\bP \!=\! \bI_d$, previous works \cite{hastie2019surprises,xu2019many} considered the special case of isotropic prior $\bSigma_{\btheta} \!=\! \bI_d$. Our assumption thus allows for analysis of the bias term under much more general $\bSigma_{\bt}$\footnote{Two concurrent works \cite{wu2020optimal,richards2020asymptotics} also considered similar relaxation of $\bSigma_\theta$ in the context of ridge regression.}, which gives rise to interesting phenomena that are not captured by simplified settings, such as non-monotonic bias and variance for $\gamma>1$ (see Figure~\ref{fig:non-monotone}), and the epoch-wise double descent phenomenon (see Appendix~\ref{subsec:epoch_wise_appendix}). Under this general setup, we have the following asymptotic characterization of the bias term:  

\begin{wrapfigure}{R}{0.266\textwidth}  
\vspace{-4.5mm}
\centering 
\includegraphics[width=0.26\textwidth]{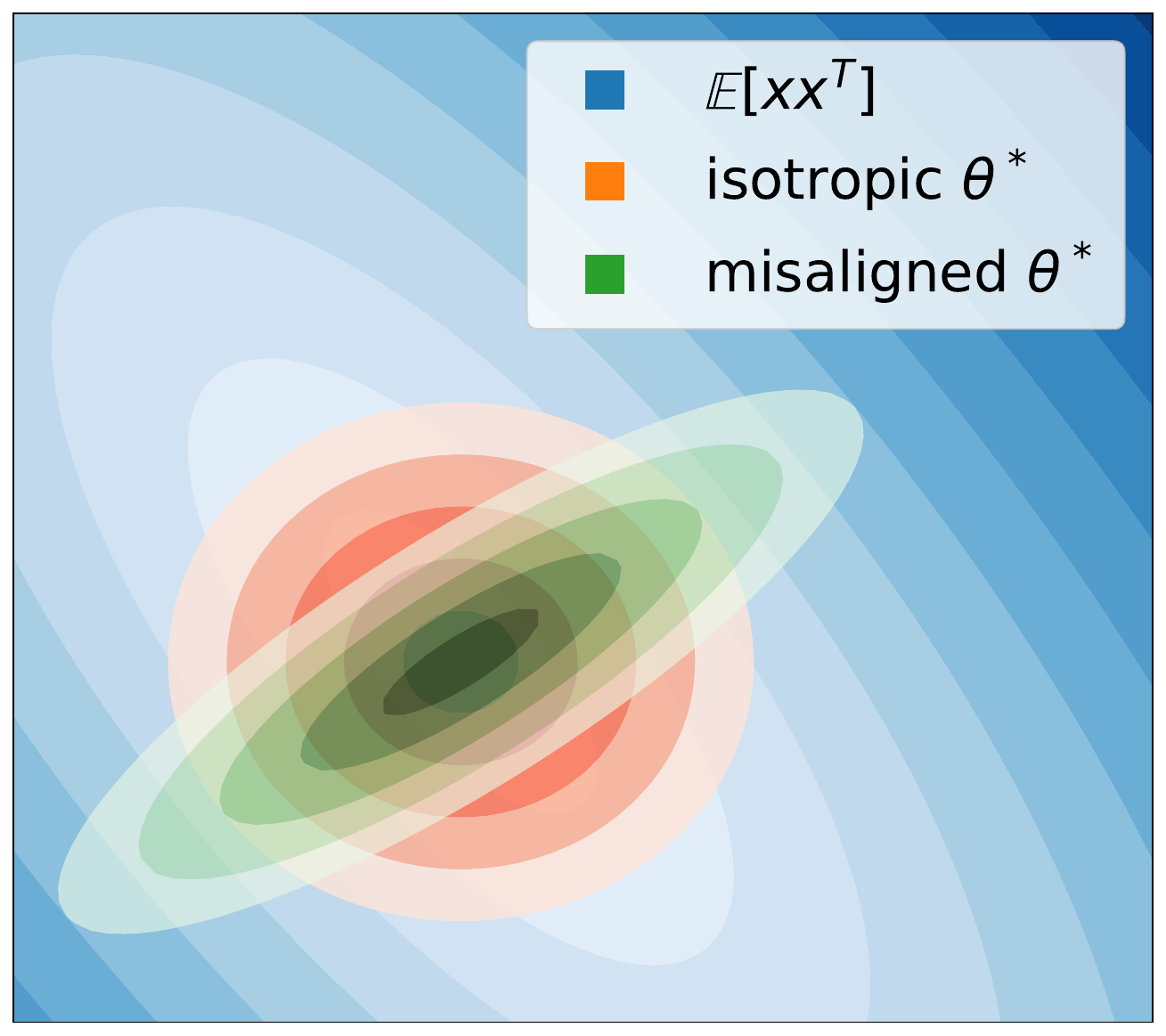}
\vspace{-6.4mm} 
\caption{\small Intuition of isotropic and misaligned teacher $\bt^*$.}   
\label{fig:alignment-illustration}
\vspace{-6.6mm}  
\end{wrapfigure}  

\begin{theo}
\label{theo:bias}
Under (A1)(A3), the expected bias $B(\hbt_\bP):=\E_{\btheta^*}[B(\hbt_\bP)]$ is given as
\begin{align*}
    B(\hbt_\bP) \to \lim_{\lambda\to 0_+} 
    \frac{m'(-\lambda)}{m^{2}(-\lambda)}
    \E\LL[\frac{\upsilon_x\upsilon_{\theta}}{(1 + \upsilon_{xp}m(-\lambda))^{2}}\RR],
    \numberthis
    \label{eq:bias}
\end{align*}
where expectation is taken over $\upsilon$ and $m(z)$ is the Stieltjes transform defined in Theorem~\ref{theo:variance}.

Furthermore, for $\bP$ satisfying (A3),
the optimal bias is achieved by $\bP = \boldU\diag{(\bolde_{\theta})}\boldU^\top$.

\end{theo}

\begin{figure}[t] 
% \vspace{-3.5mm}
\centering
\begin{minipage}[t]{0.32\linewidth}
\centering
{\includegraphics[width=0.94\textwidth]{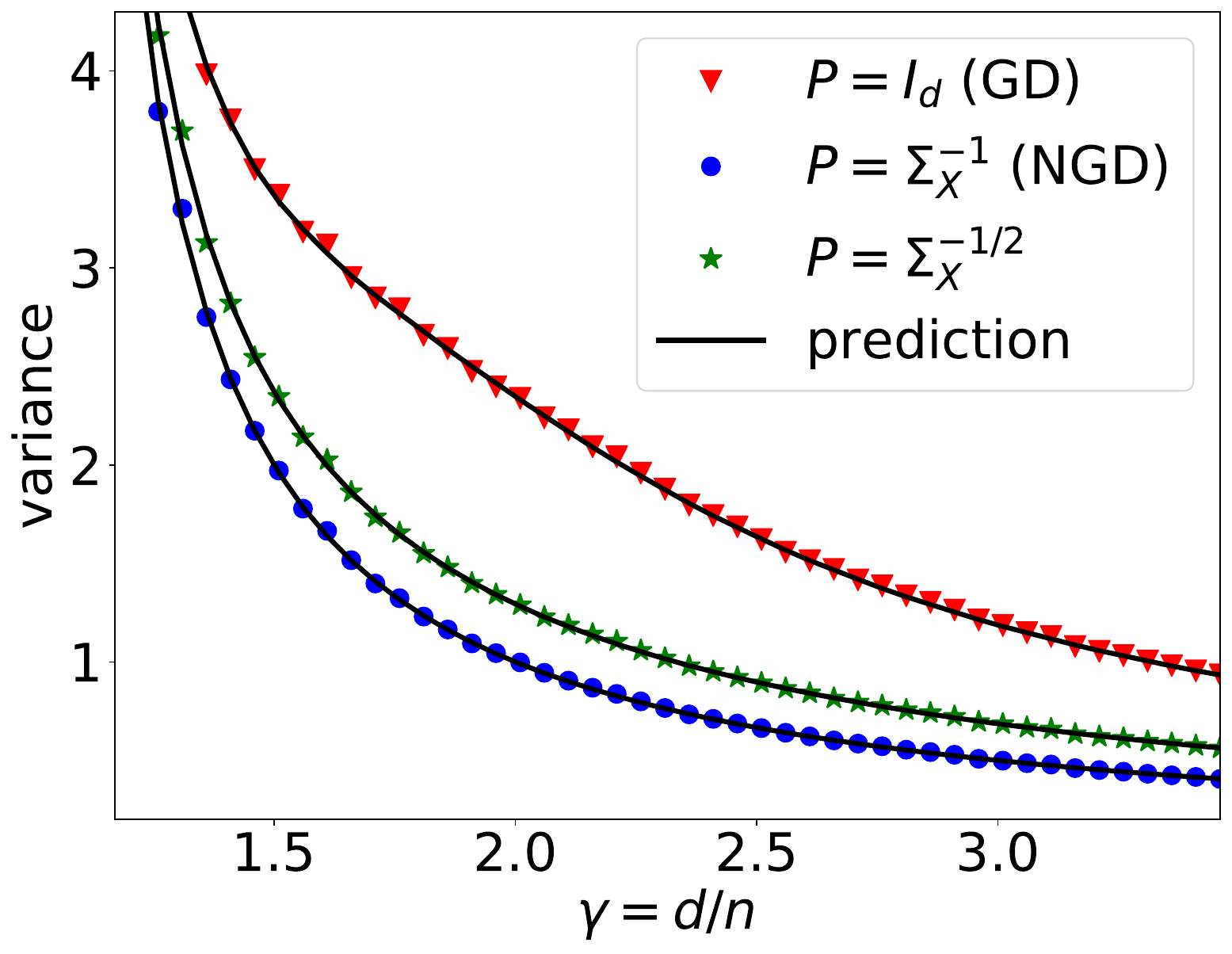}} \\ \vspace{-0.1cm}
\small (a) variance.
\end{minipage}
\begin{minipage}[t]{0.325\linewidth}
\centering
{\includegraphics[width=0.96\textwidth]{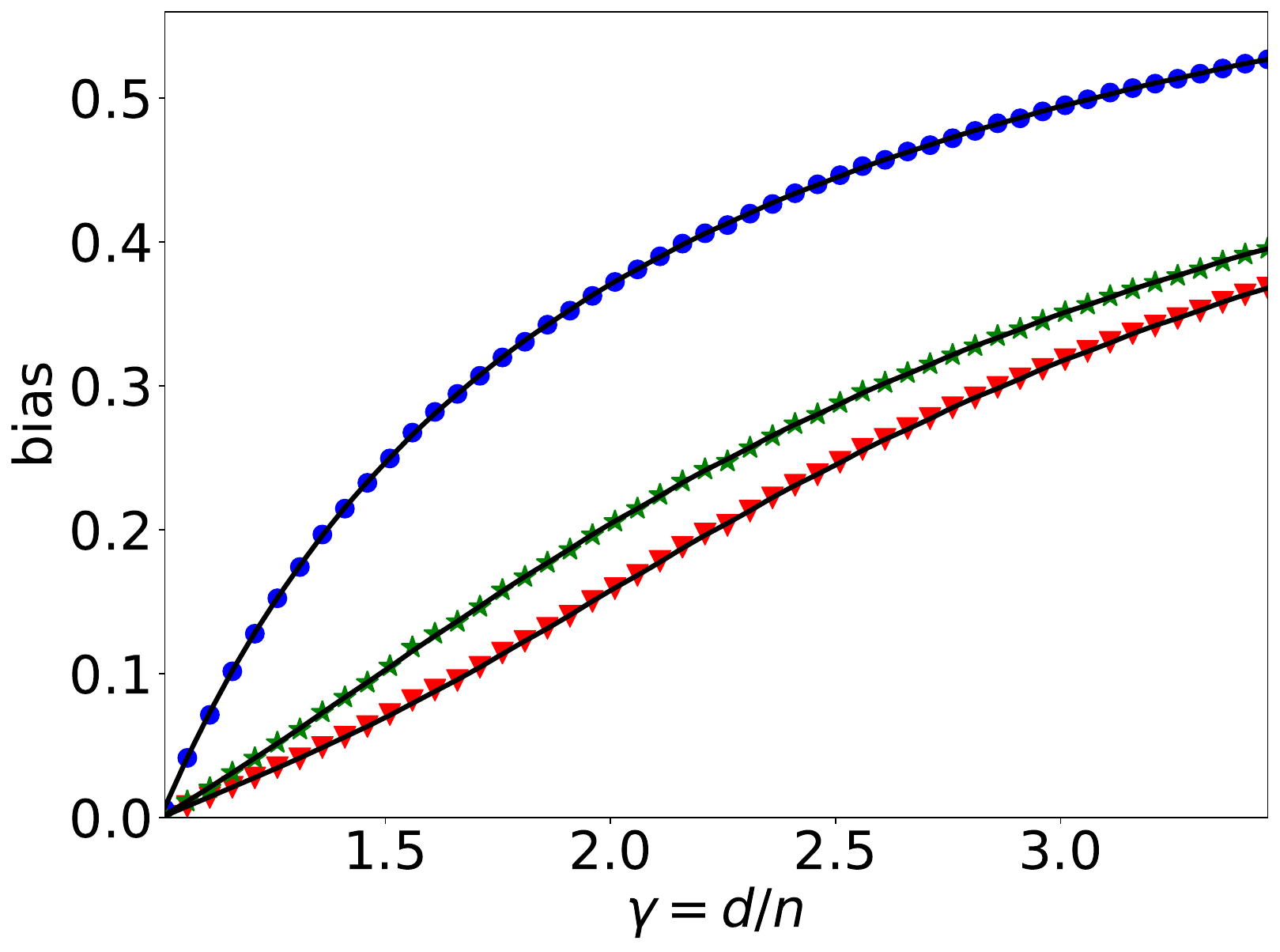}} \\ \vspace{-0.1cm}
\small (b) well-specified bias (isotropic).
\end{minipage}
\begin{minipage}[t]{0.32\linewidth}
\centering 
{\includegraphics[width=0.975\textwidth]{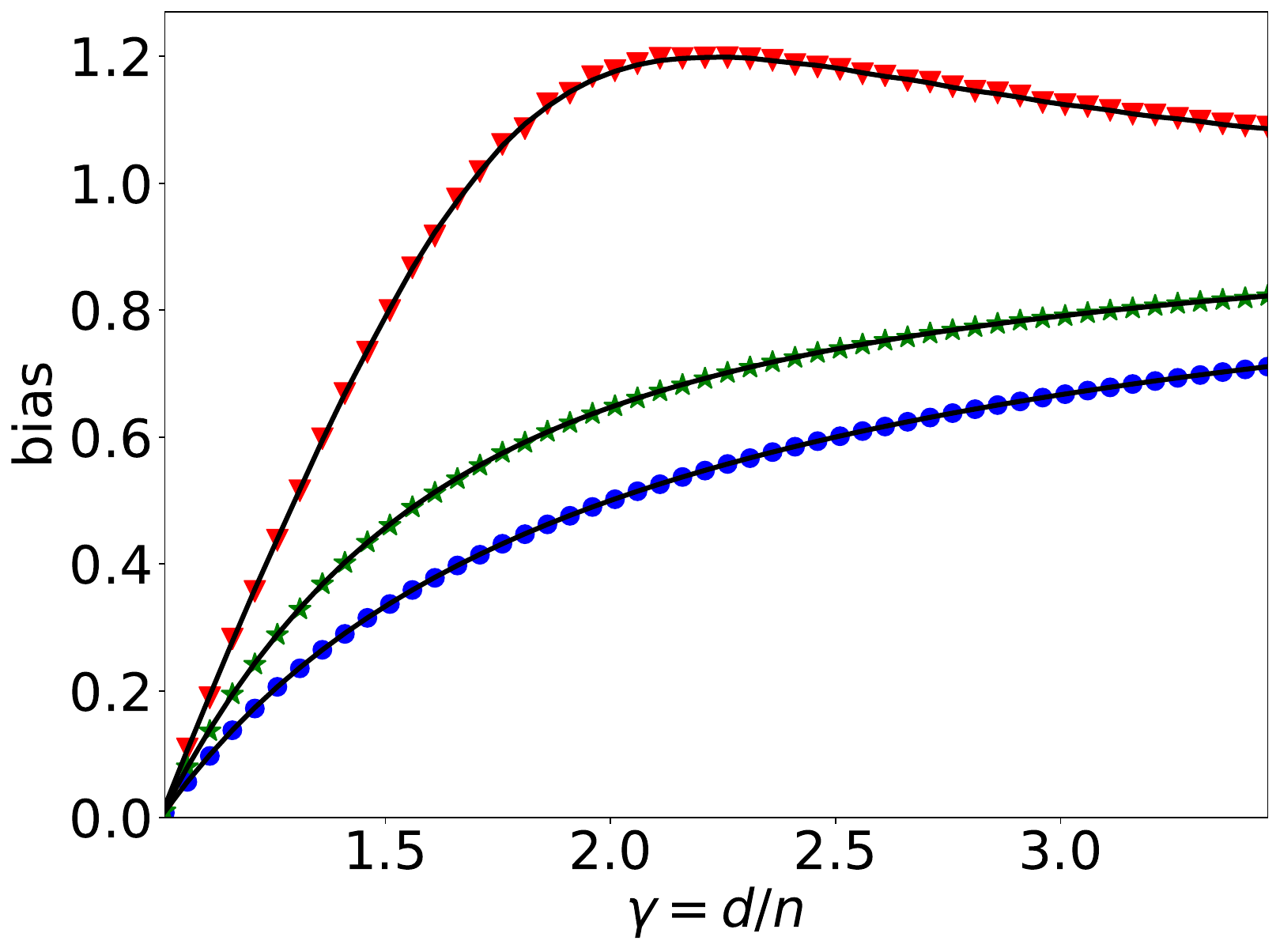}} \\ \vspace{-0.1cm}  
\small (c) well-specified bias (misaligned). 
\end{minipage}
\vspace{-0.15cm} 
\caption{\small We set eigenvalues of $\bSigma_\bX$ as two equally-weighted point masses with $\kappa_X = 20$ and $\norm{\bSigma_\bX}_F^2 = d$; empirical values (dots) are computed with $n=300$. 
(a) NGD (blue) achieves minimum variance. 
(b) GD (red) achieves lower bias under isotropic signal: $\bSigma_{\btheta} = \bI_d$.
(c) NGD achieves lower bias under ``misalignment'': $\bSigma_\bX = \bSigma_{\btheta}^{-1}$.
}
\label{fig:ridgeless-1}
% \vspace{0.5mm}
\end{figure}  

Note that the optimal $\bP$ depends on the ``orientation'' of the teacher model $\bSigma_{\btheta}$, which is usually not known in practice. This result can thus be interpreted as a \textit{no-free-lunch} characterization in choosing an optimal preconditioner for the bias term \textit {a priori}. As a consequence of the theorem, when the true parameters $\btheta^*$ have roughly equal magnitude (isotropic), GD achieves lower bias (see Figure~\ref{fig:ridgeless-1}(b) where $\bSigma_{\btheta}\!=\!\bI_d$). On the other hand, NGD leads to lower bias when $\bSigma_\bX$ is ``misaligned'' with $\bSigma_{\btheta}$, i.e.~when $\btheta^*$ focus on the least varying directions of input features (see Figure~\ref{fig:ridgeless-1}(c) where $\bSigma_{\btheta}\!=\!\bSigma_\bX^{-1}$), in which case learning is intuitively difficult since the features are not useful.
\vspace{-2.8mm}
   
\paragraph{Connection to Source Condition.} The ``difficulty'' of learning above relates to the \textit{source condition} in RKHS literature \cite{cucker2002mathematical} (i.e., $\E[\bSigma_\bX^{r/2}\bt^*]<\infty$, see (A4) in Section~\ref{subsec:RKHS}), in which the coefficient $r$ can be interpreted as a measure of ``misalignment''.
To elaborate this connection, we consider the setting of $\bSigma_{\bt}=\bSigma_\bX^{r}$: note that as $r$ decreases, the teacher $\btheta^*$ focuses more on input features with small magnitude, thus the learning problem becomes harder, and vice versa. In this case we can show a clear transition in $r$ for the comparison between GD and NGD.
\begin{prop}[Informal]
\label{prop:source_condition}
When $\bSigma_\btheta = \bSigma_\bX^{r}$, there exists a transition point $r^* \in (-1,0)$ such that GD achieves lower (higher) stationary bias than NGD if and only if $r>\!(<)\,r^*$.  
\end{prop} 
\vspace{-0.5mm}
The above proposition confirms that for the stationary bias (well-specified), NGD outperforms GD in the misaligned setting (i.e., when $r$ is small), whereas GD has an advantage when the signal is aligned (larger $r$). For formal statement and more discussion on the transition point $r^*$ see Appendix~\ref{subsec:source_condition_appendix}.

}

}

\section{Bias-variance Tradeoff}
\label{sec:bias-variance}
{
Our characterization of the stationary risk suggests that preconditioners that achieve the optimal bias and variance are generally different (except when $\bSigma_\bX \!=\! \bSigma_{\btheta}^{-1}$). 
This section discusses how the bias-variance tradeoff can be realized by interpolating between preconditioners or by early stopping. 
Additionally, we analyze the nonparametric least squares setting and show that by balancing the bias and variance, a preconditioned update that interpolates between GD and NGD also decreases the population risk faster than GD.  

\subsection{Interpolating between Preconditioners}
{
\label{subsec:interpolate}

\parbox{0.66\linewidth}{ 
Depending on the orientation of the teacher model, we may expect a bias-variance tradeoff in choosing $\bP$. Intuitively, given $\bP_1$ that minimizes the bias and $\bP_2$ that minimizes the variance, it is possible that a preconditioner ``in between'' $\bP_1$ and $\bP_2$ could balance the bias and variance and thus generalize better under certain SNR. 
The following proposition confirms this intuition in the setup of general $\bSigma_\bX$ and isotropic $\bSigma_\btheta$, for which GD ($\bP\!=\!\bI_d$) achieves optimal stationary bias and NGD ($\bP\!=\!\boldF^{-1}$) achieves optimal stationary variance\footnotemark.  
\begin{prop}[Informal] 
Let $\bSigma_\bX\!\neq\!\bI_d$ and  $\bSigma_{\btheta}\!=\!\bI_d$.
Consider the following choices of interpolation scheme (under appropriate scaling of $\bSigma_\bX$): (i) $\bP_\alpha \!=\! \alpha\bSigma_\bX^{-1} \!+\! (1\!-\!\alpha)\bI_d$, (ii) $\bP_\alpha = \LL(\alpha\bSigma_\bX \!+\! (1\!-\!\alpha)\bI_d\RR)^{-1}$, (iii) $\bP_\alpha \!=\! \bSigma_\bX^{-\alpha}$. The stationary variance monotonically decreases with $\alpha\in [0,1]$ for all three choices. For (i), the stationary bias monotonically increases with $\alpha\in [0,1]$, whereas for (ii) and (iii), the bias monotonically increases with $\alpha$ in a range that depends on $\bSigma_\bX$.
\label{prop:interpolate}  
\end{prop}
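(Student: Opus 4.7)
The plan is to apply the risk formulas from Theorems~\ref{theo:variance} and \ref{theo:bias} and then study their monotonicity in $\alpha$ spectrally. Under (A3) the limiting spectrum $F_{\bX\bP_\alpha}$ of $\bSigma_\bX\bP_\alpha$ is the pushforward of the spectrum of $\bSigma_\bX$ under an eigenvalue-wise map $\upsilon_{xp}=\phi_\alpha(\upsilon_x)$: $\phi_\alpha(v)=\alpha+(1-\alpha)v$ in case (i), $\phi_\alpha(v)=v/(\alpha v+1-\alpha)$ in case (ii), and $\phi_\alpha(v)=v^{1-\alpha}$ in case (iii). After the ``appropriate scaling'' of $\bSigma_\bX$ (which normalizes $\E[\upsilon_x]$ so that $\phi_1\equiv 1$), all three interpolations smoothly deform the spectrum from $F_{\bSigma_\bX}$ (GD, $\alpha=0$) to a point mass at $1$ (NGD, $\alpha=1$), with $\phi_\alpha(v)$ being monotone in $\alpha$ for each fixed $v$.

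For the variance, rearranging the Silverstein fixed point gives the scalar constraint $\int (1+\tau m_\alpha)^{-1}\,dF_\alpha(\tau) = 1-\gamma^{-1}$ for $m_\alpha := \lim_{\lambda\to 0_+} m(-\lambda)$, and differentiating the same equation in $z$ yields $m'_\alpha/m_\alpha^2 = 1/(1-h_\alpha)$ where $h_\alpha := \gamma\int u_\alpha(\tau)^2\,dF_\alpha(\tau)$ with $u_\alpha(\tau) := \tau m_\alpha/(1+\tau m_\alpha)$. Hence $V(\hbt_{\bP_\alpha})/\sigma^2 = h_\alpha/(1-h_\alpha)$, so monotonic decrease of $V$ reduces to monotonic decrease of $h_\alpha$. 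Since $\int u_\alpha\,dF_\alpha = \gamma^{-1}$, we have the useful identity $h_\alpha = \gamma\,\Var_{F_\alpha}(u_\alpha) + \gamma^{-1}$, reducing the claim to showing that the dispersion of the monotone transform $u_\alpha$ under $F_\alpha$ decreases as the spectrum is flattened by $\phi_\alpha$. I would then differentiate $h_\alpha$ in $\alpha$, eliminating $dm_\alpha/d\alpha$ via the constraint, and show that the resulting expression has the sign of a covariance between the increasing maps $v\mapsto\partial_\alpha\phi_\alpha(v)/\phi_\alpha(v)$ and $v\mapsto u_\alpha(\phi_\alpha(v))$, which is nonnegative by a Chebyshev/FKG inequality because for each of (i), (ii), (iii) both maps are monotone in $v$.

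For the bias with $\bSigma_\btheta=\bI_d$, Theorem~\ref{theo:bias} gives $B(\hbt_{\bP_\alpha}) = (m'_\alpha/m_\alpha^2)\,\E[\upsilon_x/(1+\phi_\alpha(\upsilon_x)\,m_\alpha)^2]$. The prefactor $m'_\alpha/m_\alpha^2 = 1/(1-h_\alpha)$ \emph{decreases} in $\alpha$ by the variance analysis, so monotonic increase of $B$ requires the expectation to grow sufficiently fast. In case (i), $\phi_\alpha$ is affine in $\alpha$ and $\phi_\alpha(\upsilon_x)-\E[\upsilon_x]$ shrinks linearly toward $0$, and I would show via a Cauchy--Schwarz/Jensen type inequality (exploiting the convexity of $x\mapsto 1/(1+mx)^2$ and the fact that $\phi_\alpha$ is a convex combination of $\upsilon_x$ and its mean) that the expectation's rate of increase dominates the prefactor's decrease uniformly on $[0,1]$. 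For (ii) and (iii), $\phi_\alpha$ is no longer a convex combination with the constant mean, so the same comparison holds only for $\alpha$ in an explicit interval determined by the spectrum of $\bSigma_\bX$; outside this range the prefactor's decrease can dominate. The threshold is characterized by the sign of $\tfrac{d}{d\alpha}\log\E[\upsilon_x/(1+\phi_\alpha(\upsilon_x)m_\alpha)^2] + \tfrac{d}{d\alpha}\log(1-h_\alpha)$, which can be expressed purely in terms of $F_{\bSigma_\bX}$ and $\gamma$.

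The main obstacle is controlling the sign of $dh_\alpha/d\alpha$ and of the analogous derivative in the bias, given that $m_\alpha$ itself depends on $\alpha$ through the implicit Silverstein equation. The cleanest route is to differentiate the constraint $\int(1+\tau m_\alpha)^{-1}dF_\alpha = 1-\gamma^{-1}$ to obtain a closed-form expression for $dm_\alpha/d\alpha$ as a ratio of two explicit $F_\alpha$-integrals, substitute this into $dh_\alpha/d\alpha$ and into the derivative of the bias expectation, and reduce the resulting signs to correlation/Chebyshev inequalities between monotone functions of $\upsilon_x$. The difficulty is largely algebraic rather than conceptual; once the monotonicity of $\phi_\alpha(v)$ in $\alpha$ for fixed $v$ (and vice versa) is properly exploited, the inequalities fall into the standard rearrangement framework.
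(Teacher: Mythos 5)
Your proposal follows essentially the same route as the paper: express the stationary bias and variance through the limiting Stieltjes transform $m_\alpha(0)$, differentiate the self-consistent equation to eliminate $\mathrm{d}m_\alpha/\mathrm{d}\alpha$, and reduce the sign of each derivative to Chebyshev-type correlation inequalities between monotone eigenvalue maps, with the sub-interval restriction for (ii) and (iii) traced to a monotonicity condition (in the paper, that $x\,\partial_\alpha f(x;\alpha)$ be decreasing) that fails for small $\alpha$. The only place your sketch diverges is the intermediate heuristic for the bias (prefactor decrease vs.\ growth of the expectation, which the paper does not separately establish); the paper instead shows the total derivative is nonnegative via a four-term decomposition, each term handled by exactly the change-of-measure plus correlation-inequality device you describe in your closing paragraph.
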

\vspace{-0.15cm} 
} 
\parbox{0.015\linewidth}{\hspace{0.1cm}}
\parbox{0.315\linewidth}{
\vspace{-0.1cm} 
{ 
\begin{minipage}[t]{1.0\linewidth}
\centering
{\includegraphics[width=1\linewidth]{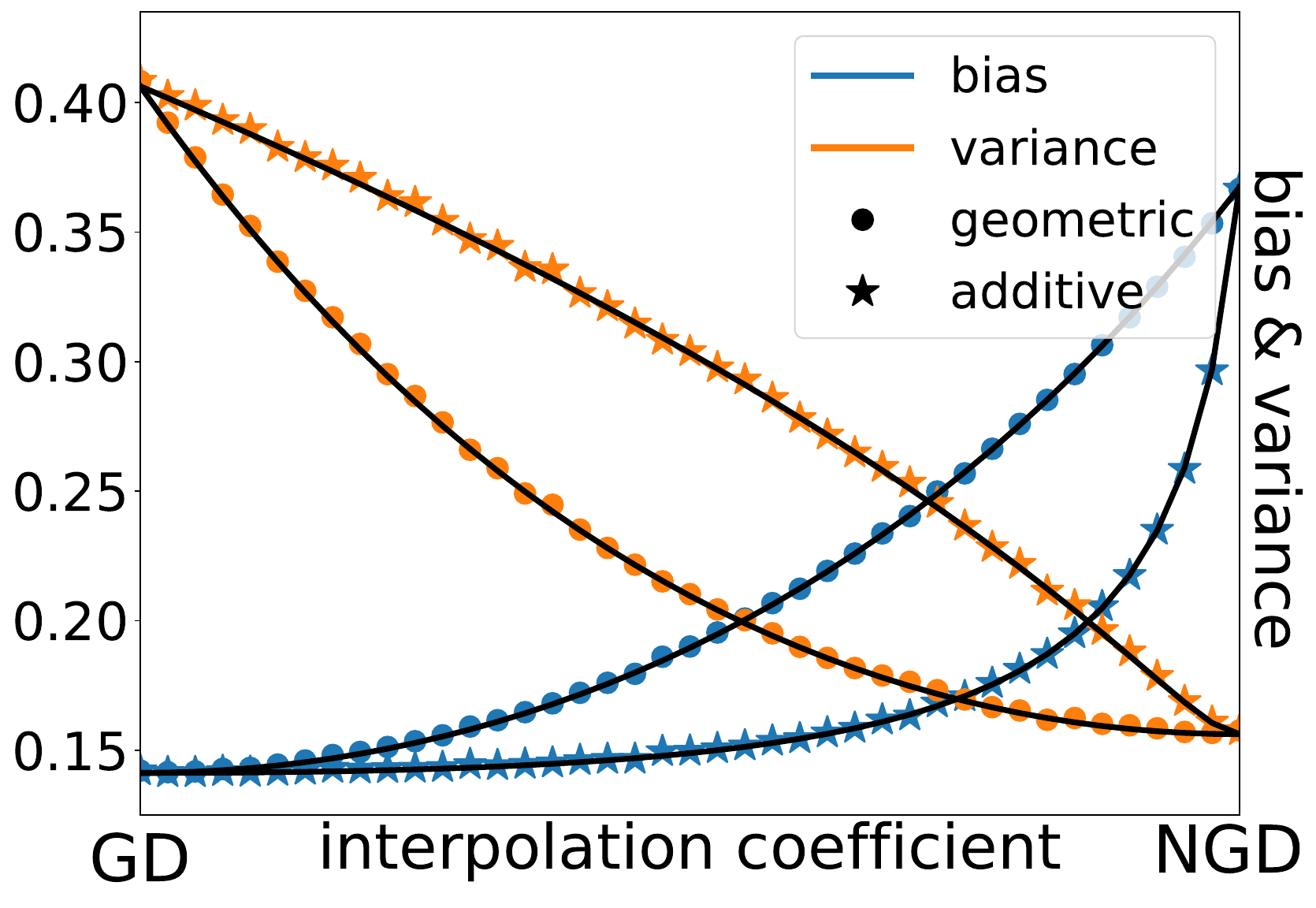}} \\ \vspace{-0.1cm}
\captionof{figure}{\small Bias-variance tradeoff with $\kappa_X \!=\! 25$, $\bSigma_{\bt} \!=\! \bI_d$ and $\text{SNR} \!=\! 32/5$. As we additively ($ii$) or geometrically ($iii$) interpolate from GD to NGD (left to right), the stationary bias (blue) increases and the stationary variance (orange) decreases.}   
\label{fig:tradeoff}
\vspace{-0.1cm}
\end{minipage}
}
} 
\footnotetext{Note that this setup reduces to the random effects model studied in \cite{dobriban2018high,xu2019many}.} 
 
In other words, as the signal-to-noise ratio (SNR) decreases (i.e., more label noise added), one can increase $\alpha$, which makes the update closer to NGD, to improve generalization, and vice versa\footnote{In Appendix~\ref{subsec:interpolate-proof} we empirically verify the monotonicity of the bias term over all $\alpha\!\in\![0,1]$ beyond the proposition.} (small $\alpha$ entails GD-like update).
This intuition is supported by Figure~\ref{fig:tradeoff} and~\ref{fig:ridgeless-2}(c): at certain SNR, a preconditioner that interpolates between $\bSigma_\bX^{-1}$ and $\bSigma_{\btheta}$ can achieve lower stationary risk than both GD and NGD.  
 
\begin{remark} 
Two of the aforementioned interpolation schemes are analogous to common choices in practice: The additive interpolation (ii) corresponds to damping to stably invert the Fisher, whereas the geometric interpolation (iii) includes the ``conservative'' square-root scaling in adaptive gradient methods \cite{duchi2011adaptive,kingma2014adam}. 
\end{remark}
}

\subsection{The Role of Early Stopping}
{\label{subsec:early-stop}
Thus far we considered the stationary solution of the unregularized objective. 
It is known that the bias-variance tradeoff can also be controlled by either explicit or algorithmic regularization. Here we briefly comment on the effect of early stopping, starting from the monotonicity of the variance term w.r.t.~time.  
\begin{prop}
\label{prop:monotone}
For all $\bP$ satisfying (A2), the variance $V(\bt_\bP(t))$ monotonically increases with time $t$.
\end{prop}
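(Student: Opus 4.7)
The variance $V(\bt_\bP(t))$ depends only on the label-noise component of the trajectory, so my plan is to isolate this component, diagonalize in the eigenbasis of $\widetilde{\boldA} := \bX\bP\bX^\top$, and reduce the monotonicity in $t$ to a scalar calculation.

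From the closed-form trajectory and $\by = \bX\btheta^* + \boldvarepsilon$, the noise-dependent part of $\bt_\bP(t)$ equals $\bt_\bP^\varepsilon(t) = \bP\bX^\top[\bI_n - e^{-t\widetilde{\boldA}/n}]\widetilde{\boldA}^{-1}\boldvarepsilon$, so $\Cov(\bt_\bP(t)\mid\bX) = \sigma^2\,\bP\bX^\top\,g_t(\widetilde{\boldA})\,\bX\bP$ where $g_t$ is the matrix function with scalar symbol $g_t(\lambda) := (1-e^{-\lambda t/n})^2/\lambda^2$. Writing the eigendecomposition $\widetilde{\boldA} = \boldR\boldLambda\boldR^\top$ with $\boldLambda = \mathrm{diag}(\lambda_1,\dots,\lambda_n)$ and using cyclic invariance of the trace,
\[
V(\bt_\bP(t)\mid\bX) \;=\; \sigma^2\,\Tr{g_t(\boldLambda)\,\boldC} \;=\; \sigma^2\sum_{i=1}^n C_{ii}\,g_t(\lambda_i), \qquad \boldC := \boldR^\top\bX\bP\bSigma_\bX\bP\bX^\top\boldR.
\]

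The factorization $\bX\bP\bSigma_\bX\bP\bX^\top = (\bX\bP^{1/2})(\bP^{1/2}\bSigma_\bX\bP^{1/2})(\bP^{1/2}\bX^\top)$ shows that this matrix, and hence $\boldC$, is positive semi-definite, so $C_{ii}\ge 0$. Monotonicity of $V$ therefore reduces to the pointwise scalar claim that $g_t(\lambda)$ is non-decreasing in $t$ for every $\lambda\ge 0$; and indeed $\tfrac{d}{dt}g_t(\lambda) = 2(1-e^{-\lambda t/n})e^{-\lambda t/n}/(n\lambda) \ge 0$ for $\lambda>0$, with the $\lambda = 0$ case handled by continuous extension. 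Since the argument is quenched in $\bX$, the monotonicity transfers to any marginal over $\bX$ or $\btheta^*$.

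The only mild obstacle is the trace manipulation in step 2: one must cycle $g_t(\widetilde{\boldA})$ against $\bX\bP\bSigma_\bX\bP\bX^\top$ so that the $t$-dependence isolates into a non-negative function of the eigenvalues of $\widetilde{\boldA}$ multiplied against a fixed PSD matrix. Assumption (A3) simplifies the picture further---under codiagonalizability the entire computation can be performed in a joint eigenbasis of $\bSigma_\bX$ and $\bP$---but the argument does not actually require codiagonalizability; any positive definite $\bP$ suffices, and no random-matrix machinery is needed.
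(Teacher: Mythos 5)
Your proposal is correct and follows essentially the same route as the paper: both start from the closed-form noise covariance $\sigma^2\,\bP\bX^\top(\bI_n-e^{-t\boldS_\bP/n})^2\boldS_\bP^{-2}\bX\bP$ and reduce monotonicity in $t$ to positive semi-definiteness plus an elementary scalar fact, the paper by differentiating the Frobenius norm and invoking $\Tr{\boldA\boldB}\ge\lambda_{\min}(\boldA)\Tr{\boldB}$, you by diagonalizing $\bX\bP\bX^\top$ and checking $\partial_t g_t(\lambda)\ge 0$ pointwise. Your cyclic regrouping into the manifestly PSD matrix $\bX\bP\bSigma_\bX\bP\bX^\top$ does let you drop the codiagonalizability assumption (A3), which the paper's trace inequality implicitly relies on to make $\bSigma_\bX\bP$ symmetric PSD, so your version is a slight (valid) strengthening.
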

\vspace{-0.5mm}

The proposition confirms the intuition that early stopping reduces overfitting to label noise.
Variance reduction can benefit GD in its comparison with NGD, which achieves the lowest stationary variance.
% (this also holds for the misspecified bias, which is analogous to the variance). 
Indeed, Figure~\ref{fig:demo} and \ref{fig:early_stop-2} show that GD may be favored under early stopping even if NGD has lower stationary risk.

On the other hand, early stopping may not always improve the bias in the well-specified case.
While a complete analysis is difficult partially due to the potential non-monotonicity of the bias term (see Appendix~\ref{subsec:epoch_wise_appendix}), we speculate that previous findings for the stationary bias also translate to early stopping.  
As a concrete example, we consider well-specified settings in which either GD or NGD achieves the optimal stationary bias, and demonstrate that such optimality is also preserved under early stopping:

\begin{prop}
\label{prop:early-stop}
Assume (A1) and denote the optimal early stopping bias as $B^{\mathrm{opt}}(\bt) \triangleq \mathrm{inf}_{t\ge 0} B(\bt(t))$. When $\bSigma_{\btheta} = \bSigma_\bX^{-1}$, we have $B^{\mathrm{opt}}(\bt_\bP)\!\ge\! B^{\mathrm{opt}}(\bt_{\boldF^{-1}})$ for all preconditioners $\bP$ satisfying (A3). Whereas if $\bSigma_{\btheta} = \bI_d$, then $B^{\mathrm{opt}}(\bt_{\boldF^{-1}})\ge B^{\mathrm{opt}}(\bt_\bI)$.
\end{prop}
\vspace{-0.5mm}

Figure~\ref{fig:early_stop-2} illustrates that the observed trend in the stationary bias (well-specified) is indeed preserved under optimal early stopping: GD or NGD achieves lower early stopping bias under isotropic or misaligned teacher model, respectively.
We leave a more precise characterization of this observation as future work.
}

\subsection{Fast Decay of Population Risk}
{\label{subsec:RKHS}
Our previous analysis suggests that certain preconditioners can achieve lower population risk (generalization error), but does not address which method decreases the risk more efficiently.
Knowing that preconditioned updates often accelerates optimization, one natural question to ask is, 
is this speedup also present for the population risk under fixed dataset?
We answer this question in the affirmative in a slightly different model: we study least squares regression in the RKHS, and show that a preconditioned gradient update that interpolates between GD and NGD achieves the minimax optimal rate in much fewer iterations than GD. 
 
We provide a brief outline and defer the detailed setup to Appendix~\ref{subsec:RKHS_setup}. Let $\calH$ be an RKHS included in $L_2(P_X)$ equipped with a bounded kernel function $k$, and $K_\bx \in \calH$ be the Riesz representation of the kernel function. Define $S$ as the canonical operator from $\calH$ to $L_2(P_X)$, and write $\Sigma = S^*S$ and $L = SS^*$. We aim to learn the teacher model $f^*$ under the following standard regularity conditions: 
\begin{itemize}[leftmargin=*,itemsep=0.2mm,topsep=0.75mm] 
    \item \textbf{(A4) Source Condition:} $\exists r \!\in \!(0,\infty)$ and $M \!>\! 0$ such that $f^* \!=\! L^r h^*$ for $h^* \!\in\! L_2(P_X)$ and $\norm{f^*}_{\infty} \!\leq\! M$.
    \item \textbf{(A5) Capacity Condition:} There exists $s > 1$ such that $\Tr{\Sigma^{1/s}} < \infty$ and $2r + s^{-1} > 1$.
    \item \textbf{(A6) Regularity of RKHS:} $\exists \mu\in[s^{-1},1]$ and $C_\mu>0$ such that $\sup_{\bx\in\supp{P_X}}\norm{\Sigma^{1/2-1/\mu}K_\bx}_\calH \le C_\mu$.  
\end{itemize}
Note that in the source condition (A4), the coefficient $r$ controls the complexity of the teacher $f^*$ and relates to the notion of ``alignment'' discussed in Section~\ref{subsec:bias-well_spec}: large $r$ indicates a smoother teacher model which is ``easier'' to learn, and vice versa\footnote{We remark that most previous works, including \cite{rudi2017falkon}, considered the case where $r\ge 1/2$ which implies $f^*\!\in\!\calH$.}~\cite{steinwart2009optimal}. 
On the other hand, (A5)(A6) are common assumptions that provide capacity control of the RKHS (e.g., \cite{caponnetto2007optimal,pillaud2018statistical}). 
Given $n$ training points $\{(\bx_i,y_i)\}_{i=1}^n$, we consider the following preconditioned update on the student model $f_t\in\calH$:  
% \vspace{-0.1cm} 
\begin{align}
    f_t = f_{t-1}  - \eta (\Sigma + \alpha I)^{-1}(\hat{\Sigma} f_{t-1} - \hat{S}^*Y), \quad f_0 = 0,
\label{eq:RKHS-update}
\end{align}
% \vspace{-0.5cm}
where $\hat{\Sigma} = \frac{1}{n} \sum_{i=1}^n K_{\bx_i} \!\otimes\! K_{\bx_i}$ and $\hat{S}^*Y = \frac{1}{n}\sum_{i=1}^n y_i K_{\bx_i}$. In this setup, the population Fisher corresponds to the covariance operator $\Sigma$, and thus \eqref{eq:RKHS-update} can be interpreted as \textit{additive} interpolation between GD and NGD: update with large $\alpha$ behaves like GD, and small $\alpha$ like NGD. Related to our update is the FALKON algorithm~\cite{rudi2017falkon}, which is a preconditioned gradient method for kernel ridge regression.
The key distinction is that we consider optimizing the original objective (instead of a regularized version as in FALKON) under early stopping. This is important since we aim to understand \textit{how preconditioning affects generalization}, and thus explicit regularization should not be taken into account (for more discussion see Appendix~\ref{subsec:RKHS_setup}).

The following theorem shows that with appropriately chosen $\alpha$, the preconditioned update~\eqref{eq:RKHS-update} leads to more efficient decrease in the population risk compared to GD, due to faster decay of the bias term.  

\begin{theo}[Informal]
Under (A4-6) and sufficiently large $n$, the population risk of $f_t$ can be written as 
$R(f_t) = \norm{S f_t - f^*}_{\LPiPx}^2 \leq B(t) + V(t)$, where $B(t)$ and $V(t)$ are defined in Appendix~\ref{sec:proof}.
Given $r\!\ge\!1/2$ or $\mu\!\le\!2r$, 
the preconditioned update \eqref{eq:RKHS-update} with $\alpha = n^{-\frac{2s}{2rs+1}}$ achieves  the minimax optimal convergence rate
$R(f_t) = \tilde{O}\LL(n^{-\frac{2rs}{2rs+1}}\RR)$ in $t \!=\! \Theta(\log n)$ steps, whereas ordinary gradient descent requires $t \!=\!  \Theta\LL(n^{\frac{2rs}{2rs+1}}\RR)$ steps.
\label{theo:RKHS}
\end{theo}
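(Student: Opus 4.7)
My plan combines filter-function calculus with operator concentration, in the spirit of \cite{caponnetto2007optimal,pillaud2018statistical,pagliana2019implicit}. By linearity of recursion \eqref{eq:RKHS-update} in the labels $Y_i = f^*(\bx_i) + \varepsilon_i$, I would split $f_t = f_t^B + f_t^V$, where $f_t^B$ is driven by the noiseless responses and $f_t^V$ by $\{\varepsilon_i\}$; this yields $R(f_t)\le 2B(t)+2V(t)$ with $B(t)=\|Sf_t^B-f^*\|_{\LPiPx}^2$ and $V(t)=\|Sf_t^V\|_{\LPiPx}^2$, so bias and variance can be bounded separately.

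To handle each term, I would first analyze the idealized iteration obtained by replacing $\hat\Sigma$ by $\Sigma$ in \eqref{eq:RKHS-update}. Since $(\Sigma+\alpha I)^{-1}$ and $\Sigma$ commute, the recursion diagonalizes and the residual filter at spectral location $\lambda$ is $r_t(\lambda):=(1-\eta\lambda/(\lambda+\alpha))^t$, which for $\eta=1$ reduces to $(\alpha/(\lambda+\alpha))^t$. This acts as a sharp low-pass: $r_t(\lambda)\approx 1$ for $\lambda\ll\alpha$ (as in ridge regression at level $\alpha$) and $r_t(\lambda)\le(\alpha/\lambda)^t$ for $\lambda\gg\alpha$, so that only $t=\Theta(\log n)$ steps are needed to drive $r_t(\lambda)$ uniformly below the ridge residual $\alpha/(\lambda+\alpha)$. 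Under the source condition (A5), this gives a population bias of order $\alpha^{2\min(r,1)}$; under the capacity condition (A6), bounding the reconstruction filter by $\phi_t(\lambda)\le\min(1/\lambda,\,t/(\lambda+\alpha))$ yields a population variance $\sigma^2\Tr{\Sigma\phi_t(\Sigma)^2}/n=\tilde O(\alpha^{-1/s}/n)$.

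Next I would transfer these population-level rates to the empirical iteration, where $(\Sigma+\alpha I)^{-1}$ and $\hat\Sigma$ no longer commute. Following the standard RKHS toolkit, I would control weighted deviations such as $\|(\Sigma+\alpha I)^{-1/2}(\hat\Sigma-\Sigma)(\Sigma+\alpha I)^{-1/2}\|_\op$ and $\|(\Sigma+\alpha I)^{-1/2}(\hat S^*Y - S^*f^*)\|_\calH$ via operator Bernstein inequalities, with assumption (A7) supplying the deterministic envelope $\|\Sigma^{1/2-1/\mu}K_\bx\|_\calH\le C_\mu$. Provided $n\alpha\gg\polylog(n)$, these deviations are $o(1)$, so the empirical iterates inherit the population bias and variance up to $\polylog$ factors. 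Balancing $\alpha^{2\min(r,1)}$ against $\alpha^{-1/s}/n$ then selects the prescribed $\alpha$ and yields the minimax rate $\tilde O(n^{-2rs/(2rs+1)})$ after $t=\Theta(\log n)$ steps. For the GD comparison, the same filter calculus specialized to $\alpha=0$ gives the classical identification $\alpha_{\mathrm{eff}}^{\mathrm{GD}}\asymp 1/(\eta t)$, so matching the optimal effective regularization forces $t$ to be polynomial in $n$.

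The main technical hurdle will be propagating operator-concentration errors through $t=\Theta(\log n)$ iterations without exponential-in-$t$ blowup, and handling the low-regularity case $r<1/2$ where $f^*$ need not lie in $\calH$. The dichotomy ``$r\ge 1/2$ or $\mu\le 2r$'' in the theorem is precisely what allows the residual operator applied to $f^*$ to be converted, via the embedding (A7), into an $\calH$-valued quantity amenable to concentration; without either condition the $L_2(P_X)$--$\calH$ duality cannot be routed while preserving the $\alpha^{\min(r,1)}$ scaling of the bias.
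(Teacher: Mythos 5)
Your plan is broadly aligned with the paper's strategy (population reference iteration, filter calculus for the bias, operator Bernstein for the sample fluctuations, and the effective-regularization identification $\alpha_{\mathrm{eff}}^{\mathrm{GD}}\asymp 1/(\eta t)$ for the GD comparison), but it starts from a different top-level decomposition. The paper splits $\|Sf_t-f^*\|^2\lesssim\|Sf_t-S\bar f_t\|^2+\|S\bar f_t-f^*\|^2$, where $\bar f_t$ is the deterministic iteration driven by $(\Sigma, S^*f^*)$, so that the "bias" is purely a population quantity handled by the filter lemma, and all randomness (both design and label noise) sits in the first term. You instead split the labels into signal and noise, so your $B(t)$ still carries design fluctuations; this is a legitimate standard alternative, but it means your $B(t),V(t)$ are not the quantities the theorem refers to, and the design-fluctuation part of your "bias" term ends up requiring exactly the same perturbation analysis you defer to the end. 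That deferred step is where the paper's proof does its real work: expanding $(I-\eta\Sigma_\lambda^{-1/2}\hat\Sigma\Sigma_\lambda^{-1/2})^t$ around $(I-\eta\Sigma_\lambda^{-1}\Sigma)^t$ telescopically gives $t$ perturbation terms each of size $\|\Sigma_\lambda^{-1/2}(\Sigma-\hat\Sigma)\Sigma_\lambda^{-1/2+r}\|$, which is controlled by an asymmetric operator Bernstein inequality using (A7); the factor $t\eta$ (not exponential in $t$) is what makes $t=\Theta(\log n)$ harmless, and the condition $2r\ge\mu$ (or $r\ge 1/2$) is precisely what makes this weighted deviation $\tilde O(n^{-rs/(2rs+1)})$. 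You correctly diagnose that this is the crux and why the dichotomy appears, but the proposal does not execute it, so as written it is a plan rather than a proof.

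One concrete error: you claim the idealized bias is of order $\alpha^{2\min(r,1)}$. The saturation at $r=1$ is a property of one-shot ridge, not of the iterated preconditioned filter: for $\sigma\ge\alpha$ the residual $(1-\eta\sigma/(\sigma+\alpha))^t$ is already $\le e^{-\eta t/2}$, and for $\sigma<\alpha$ one simply uses $\sigma^{2r}\le\alpha^{2r}$, giving $B(t)\lesssim e^{-\eta t}\vee(\alpha/\eta t)^{2r}$ for every $r>0$ (the paper's Lemma C.2). With your saturated bound, balancing against $\alpha^{-1/s}/n$ would give the wrong (suboptimal) rate whenever $r>1$, which is inside the theorem's claimed regime $r\ge 1/2$. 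Dropping the $\min(\cdot,1)$ fixes this and recovers $\alpha=n^{-s/(2rs+1)}$ and the rate $\tilde O(n^{-2rs/(2rs+1)})$.
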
 
\vspace{-0.5mm}

We comment that the optimal interpolation coefficient $\alpha$ and stopping time $t$ are chosen to balance the bias $B(t)$ and variance $V(t)$. Note that $\alpha$ depends on the teacher model in the following way: for $n>1$, $\alpha$ decreases as $r$ becomes smaller, which corresponds to non-smooth and ``difficult'' $f^*$, and vice versa. This agrees with our previous observation that NGD is advantageous when the teacher model is difficult to learn.
We defer empirical verification of this result to Appendix~\ref{sec:additional_figure}.  

}

}

\section{Neural Network Experiments} 
\label{sec:experiment}
{

\subsection{Protocol}
We compare the generalization performance of GD and NGD in neural network settings and illustrate the influence of the following factors: $(i)$ label noise; $(ii)$ model misspecification; $(iii)$ signal misalignment. We also show that interpolating between GD and NGD can be advantageous due to bias-variance tradeoff.

We consider the MNIST and CIFAR10~\cite{krizhevsky2009learning} datasets.
To create a student-teacher setup, we split the original training set into two equal halves, one of which along with the original labels is used to pretrain the teacher, and the other along with the teacher's labels is used to distill~\cite{hinton2015distilling,bucilua2006model} the student. 
We refer to the splits as the \textit{pretrain} and \textit{distill} split, respectively.
In all scenarios, the teacher is either a two-layer fully-connected ReLU network~\cite{nair2010rectified} or a ResNet~\cite{he2016deep}; whereas the student model is a two-layer ReLU net. 
We normalize the teacher's labels (logits) following~\cite{ba2014deep} before potentially adding label noise and fit the student model by minimizing the L2 loss.
Student models are trained on a subset of the distill split with full-batch updates. 
We implement NGD using Hessian-free optimization~\cite{martens2010deep}.
To estimate the population Fisher, we use 100k unlabeled data obtained by possibly applying data augmentation. 
We report the test error when the training error is below $0.2\%$ of the training error at initialization as a proxy for the stationary risk. 
We defer detailed setup to Appendix~\ref{sec:experiment_setup} and additional results to Appendix~\ref{sec:additional_figure}.

\subsection{Empirical Findings}

\begin{figure}[!htb] 
\centering
\vspace{-2mm} 
\begin{minipage}[t]{0.325\linewidth}
\centering
{\includegraphics[width=0.95\textwidth]{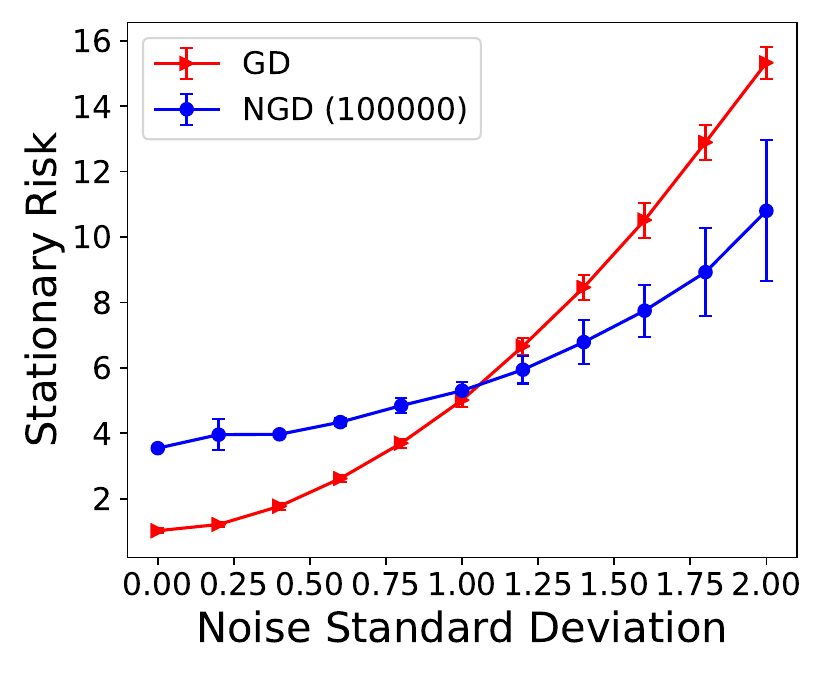}} \\ \vspace{-0.10cm}
\small (a) label noise (MNIST).
\end{minipage}
\begin{minipage}[t]{0.325\linewidth}
\centering
{\includegraphics[width=0.95\textwidth]{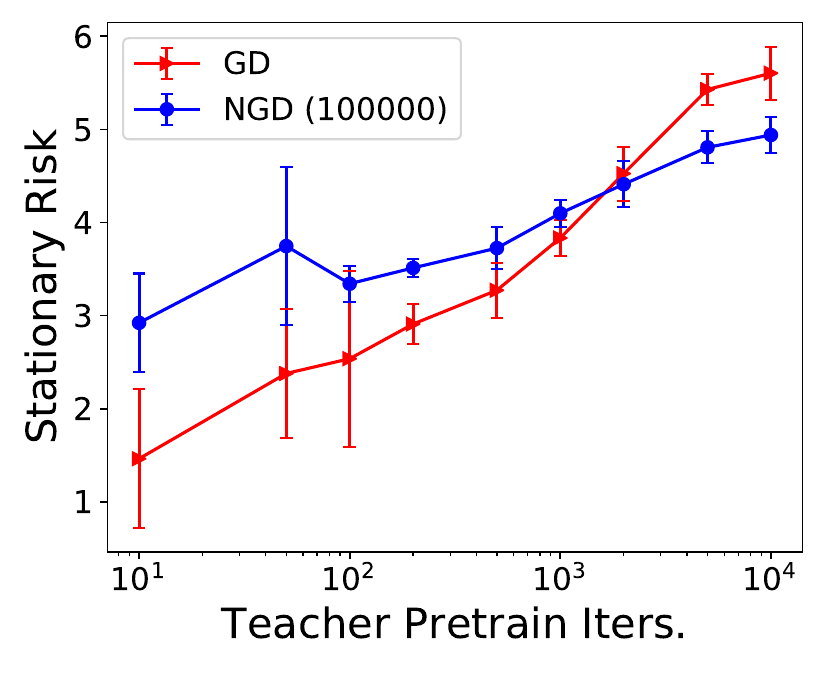}} \\ \vspace{-0.10cm}
\small (b) misspecification (CIFAR-10).
\end{minipage}
\begin{minipage}[t]{0.325\linewidth}
\centering
{\includegraphics[width=0.95\textwidth]{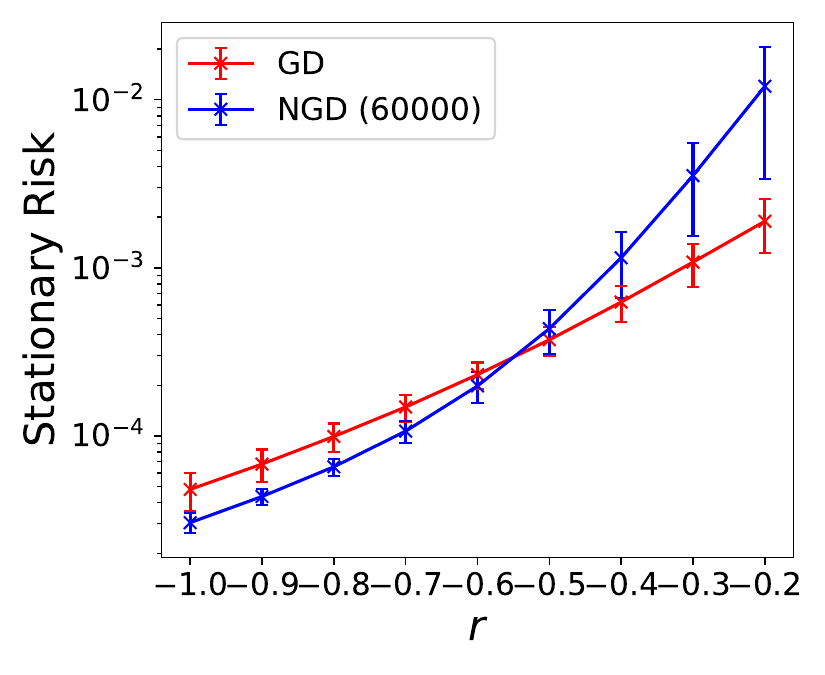}} \\ \vspace{-0.10cm}
\small (c) misalignment (MNIST).
\end{minipage}
\vspace{-1.25mm}
\caption{\small
Comparison between NGD and GD. 
Error bar is one standard deviation away from mean over five independent runs. 
Numbers in parentheses denote amount of unlabeled examples for estimating the Fisher. 
}
\label{fig:dl_generalization}
\vspace{-3.75mm} 
\end{figure}

\paragraph{Label Noise.}
We pretrain the teacher with the full pretrain split and use $1024$ examples from the distill split to fit the student. 
For both the student and teacher, we use a two-layer ReLU net with $80$ hidden units.
We corrupt the labels with isotropic Gaussian noise whose standard deviation we vary. 
Figure~\ref{fig:dl_generalization}(a) shows that as the noise level increases (variance begins to dominate), the stationary risk of both NGD and GD worsen, with GD worsening faster, which aligns with our observation in Figure~\ref{fig:ridgeless-1}. 

\vspace{-2.7mm}
\paragraph{Misspecification.}
We use the same two-layer student from the label noise experiment and a ResNet-20 teacher.
To vary the misspecification level, we take teacher models with the same initialization but varying amount of pretraining.
Intuitively, large teacher models that are trained more should be more complex and thus likely to be outside of functions that a small two-layer student can represent (therefore the problem becomes more misspecified).
Indeed, Figure~\ref{fig:dl_generalization}(b) shows that NGD eventually achieves better generalization as the number of training steps for the teacher increases. 
In Figure~\ref{fig:yky} of Appendix~\ref{subsec:yky_appendix} we report a heuristic measure of model misspecification that relates to the student's NTK matrix \cite{jacot2018neural}, and confirm that the quantity increases as more label noise is added or as the teacher model is trained longer.

\vspace{-2.7mm}
\paragraph{Misalignment.}
We set the student and teacher to be the same two-layer ReLU network. We construct the teacher model by perturbing the student's initialization, the direction of which is given by ${\boldF}^{r}$, where $\boldF$ is the student's Fisher (estimated using extra unlabeled data) and $r\!\in\![-1,0]$. Intuitively, as $r$ decreases, the important parameters of the teacher (i.e.~larger update directions) becomes misaligned with the student's gradient, and thus learning becomes more ``difficult''.
While this analogy is rather superficial due to the non-convex nature of neural network optimization, Figure~\ref{fig:dl_generalization}(c) shows that as $r$ becomes smaller (setup is more misaligned), NGD begins to generalize better than GD (in terms of stationary risk).

\vspace{-2.7mm}
\paragraph{Interpolating between Preconditioners.} We validate our observations in Section~\ref{sec:risk} and \ref{sec:bias-variance} on the difference between the sample Fisher and population Fisher, and the potential benefit of interpolating between GD and NGD, in neural network experiments.
Figure~\ref{fig:interpolate_NN}(a) shows that as we decrease the number of unlabeled data in estimating the Fisher, which renders the preconditioner closer to the sample Fisher, the stationary risk becomes more akin to that of GD, especially in the large noise setting. This agrees with our remark on sample vs.~population Fisher in Section~\ref{sec:risk} and Appendix~\ref{subsec:implicit_bias_appendix}.
 
Figure~\ref{fig:interpolate_NN}(b)(c) supports the bias-variance tradeoff discussed in Section~\ref{subsec:interpolate} in neural network settings. 
In particular, we interpret the left end of the figure to correspond to the bias-dominant regime (due to the same architecture of two-layer MLP for the student and teacher), and the right end to be the variance-dominant regime (due to the large label noise). 
Observe that at certain SNR, a preconditioner that interpolates (additively or geometrically) between GD and NGD can achieve lower stationary risk.

\begin{figure}[!htb]
\centering
\vspace{-0.1cm} 
\begin{minipage}[t]{0.325\linewidth}
\centering
{\includegraphics[width=0.95\textwidth]{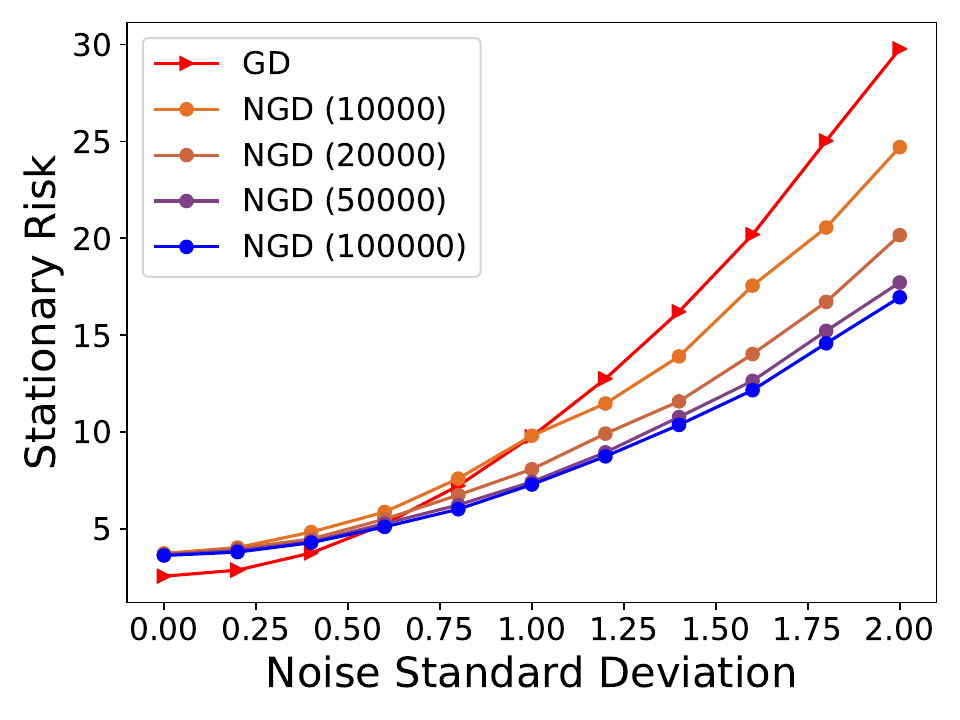}} \\ \vspace{-0.10cm}
\small (a) interpolation between sample and population Fisher (CIFAR-10).
\end{minipage}
\begin{minipage}[t]{0.325\linewidth}
\centering
{\includegraphics[width=0.95\textwidth]{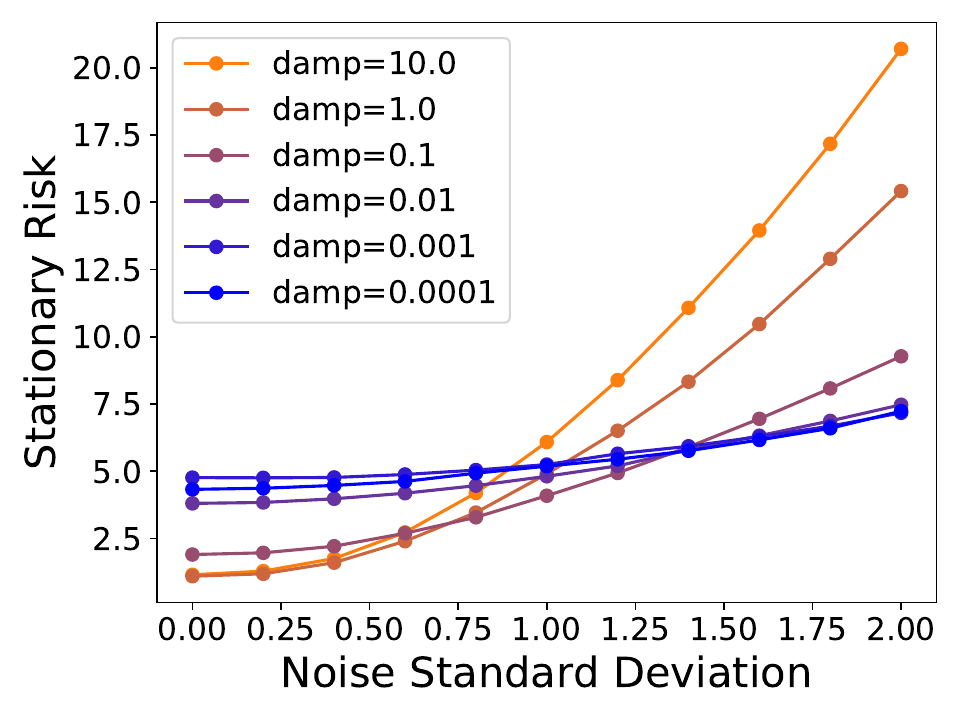}} \\ \vspace{-0.10cm}
\small (b) additive interpolation between GD and NGD (MNIST).
\end{minipage}
\begin{minipage}[t]{0.325\linewidth}
\centering
{\includegraphics[width=0.95\textwidth]{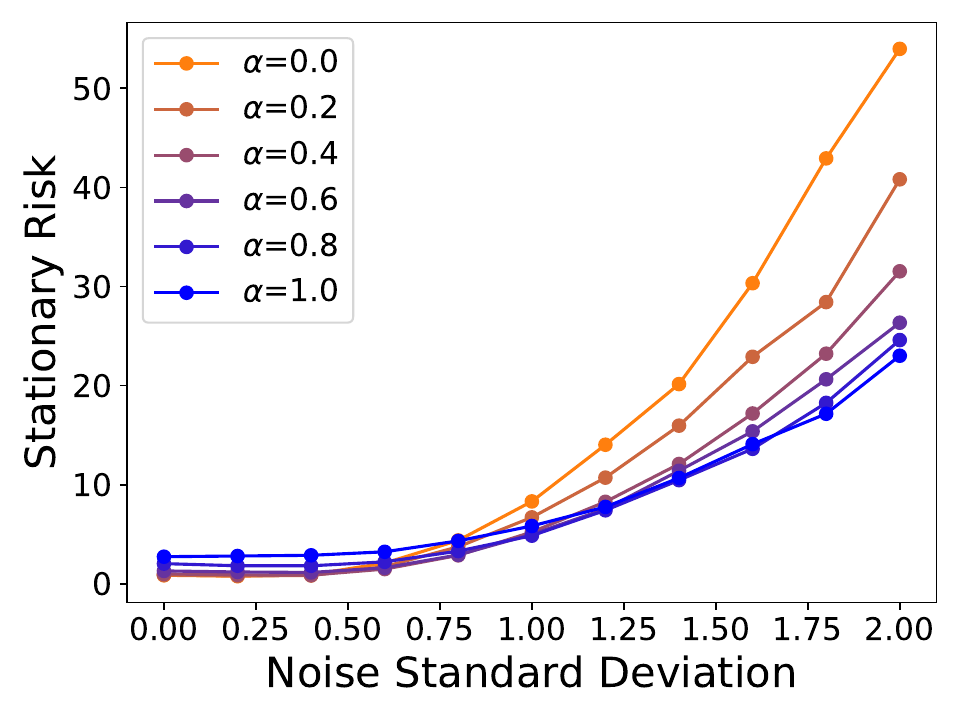}} \\ \vspace{-0.10cm}
\small (c) geometric interpolation between GD and NGD (MNIST). 
\end{minipage}
\vspace{-0.05cm}
\caption{\small
(a) numbers in parentheses indicate the amount of unlabeled data used in estimating the Fisher $\boldF$; we expect the estimated Fisher to be closer to the sample Fisher when the number of unlabeled data is small.
(a) additive interpolation $\boldP = (\boldF + \alpha \bI_d)^{-1}$; larger damping parameter yields update closer to GD (orange). 
(b) geometric interpolation $\boldP = \boldF^{-\alpha}$; larger $\alpha$ parameter yields update closer to that of NGD (blue).
}
\label{fig:interpolate_NN}
\end{figure}
\vspace{-1.5mm}

}

\section{Discussion and Conclusion}
\label{sec:conclusion}
{
 
We analyzed the generalization properties of a general class of preconditioned gradient descent in overparameterized least squares regression, with particular emphasis on natural gradient descent. 
We identified three factors that affect the comparison of generalization performance of different optimizers, the influence of which we also empirically observed in neural network\footnote{We however note that observations in linear or kernel models do not always translate to neural networks -- many recent works have demonstrated such a gap (e.g., see \cite{ghorbani2019limitations,allen2019can,yang2020feature,suzuki2020benefit}).}.  
We then determined the optimal preconditioner for each factor. While the optimal $\bP$ is usually not known in practice, we provided justification for common algorithmic choices by discussing the bias-variance tradeoff.
Note that our current theoretical setup is limited to fixed preconditioners or those that do not alter the span of gradient, and thus does not cover many adaptive gradient methods; characterizing these optimizers in similar setting is an important problem.
In addition, there are many other ingredients that influence generalization, such as loss functions \cite{taheri2020sharp},
and explicit (e.g.~weight decay\footnote{In a companion work \cite{wu2020optimal} we characterized the impact of $\ell_2$ regularization in overparameterized linear regression.} \cite{lewkowycz2020training}) or implicit regularization (e.g.~large learning rate~\cite{li2019towards}); understanding the interplay between preconditioning and these factors would be an interesting future direction. 
 
It is worth noting that our optimal preconditioner may require knowledge of population second-order statistics, which we empirically approximate using extra unlabeled data. Consequently, our results suggest that different ``types'' of second-order information (sample vs.~population) may affect generalization differently.
Broadly speaking, there are two types of practical approximate second-order optimizers for neural networks. 
Some algorithms, such as Hessian-free optimization \cite{martens2010deep,martens2012training,desjardins2013metric}, approximate second-order matrices (typically the Hessian or Fisher) using the exact matrix on finite training examples. In high-dimensional problems, this sample-based approximation may be very different from the population quantity (e.g.~it is necessarily degenerate in the overparameterized regime). 
Other algorithms fit a parametric approximation to the Fisher, such as diagonal \cite{duchi2011adaptive,kingma2014adam}, quasi-diagonal \cite{ollivier2015riemannian}, or Kronecker-factored \cite{martens2015optimizing}. 
If the parametric assumption is accurate, these approximations are more statistically efficient and thus may lead to better approximation to the population Fisher.
Our analysis reveals a difference between sample- and population-based preconditioned updates (in terms of generalization properties), which may also suggest a separation between the two kinds of approximate second-order optimizers.  
As future work, we intend to investigate this discrepancy in real-world problems. 

}

\bigskip
\bigskip

\subsection*{Acknowledgement}
{
The authors would like to thank (in alphabetical order) Murat A.~Erdogdu, Fartash Faghri, Ryo Karakida, Yiping Lu, Jiaxin Shi, Shengyang Sun, Yusuke Tsuzuku, Guodong Zhang, Michael Zhang and Tianzong Zhang for helpful comments and discussions. 
The authors are also grateful to Tomoya Murata for his contribution to preliminary studies on preconditioned update in the RKHS. 
\vspace{-0.1mm}  
 
\noindent
JB and RG were supported by the CIFAR AI Chairs program.
JB and DW were partially supported by LG Electronics and NSERC. 
AN was partially supported by JSPS Kakenhi (19K20337) and JST-PRESTO.
TS was partially supported by JSPS Kakenhi
(26280009, 15H05707 and 18H03201), Japan Digital Design and JST-CREST.
JX was supported by a Cheung-Kong Graduate School of Business Fellowship.  
Resources used in preparing this research were provided, in part, by the Province of Ontario, the Government of Canada through CIFAR, and companies sponsoring the Vector Institute.  
}

\newpage 
{
 
\small
% \fontsize{10}{11.5}\selectfont    

\bibliography{others/citation}
\bibliographystyle{amsalpha}

}

\newpage
\appendix

{

\etocdepthtag.toc{mtappendix}
\etocsettagdepth{mtchapter}{none}
\etocsettagdepth{mtappendix}{subsection}
\tableofcontents
}
\newpage

\section{Discussion on Additional Results}
\label{sec:additional_result}
{

\subsection{Implicit Bias of GD vs.~NGD}
\label{subsec:implicit_bias_appendix}

It is known that gradient descent is the steepest descent with respect to the $\ell_2$ norm, i.e.~the update direction is constructed to decrease the loss under small changes in the parameters measured by the $\ell_2$ norm \cite{gunasekar2018characterizing}.
Following this analogy, NGD is the steepest descent with respect to the KL divergence on the predictive distributions \cite{martens2014new}; this can be interpreted as a proximal update which penalizes how much the predictions change on the data distribution.  

Intuitively, the above discussion suggests GD tend to find solution that is close to the initialization in the Euclidean distance between parameters, whereas NGD prefers solution close to the initialization in terms of the function outputs on $P_X$. This observation turns out to be exact in the case of ridgeless interpolant under the squared loss, as remarked in Section~\ref{sec:risk}.  Moreover, Figure~\ref{fig:implicit_bias_illustration} and \ref{fig:implicit-bias} confirms the same trend in the optimization of overparameterized neural network. In particular, we observe that  

\begin{itemize}[leftmargin=*,itemsep=0.1mm,topsep=0.1mm] 
    \item GD results in small changes in the parameter, whereas NGD results in small changes in the function.
    \item preconditioning with the pseudo-inverse of the sample Fisher, i.e., $\bP = (\boldJ^\top\boldJ)^\dagger$, leads to implicit bias similar to that of GD (also noted in \cite{zhang2019fast}), but not NGD with the population Fisher.
    \item ``interpolating'' between GD and NGD ($\bP\!=\!\boldF^{-1/2}$, green) results in properties in between GD and NGD.
\end{itemize}

% \vspace{1.2mm}
\begin{remark}
Qualitatively speaking, the small change in the function output is the essential reason that NGD performs well under noisy labels in the interpolation setting: NGD seeks to interpolate the training data by changing the function only ``locally'', so that memorizing the noisy labels has small impact on the ``global'' shape of the learned function (see Figure~\ref{fig:implicit_bias_illustration}).
\end{remark}

\begin{figure}[!htb] 
\vspace{-1.5mm} 
\centering
\begin{minipage}[t]{0.4\linewidth}
\centering
{\includegraphics[width=0.94\textwidth]{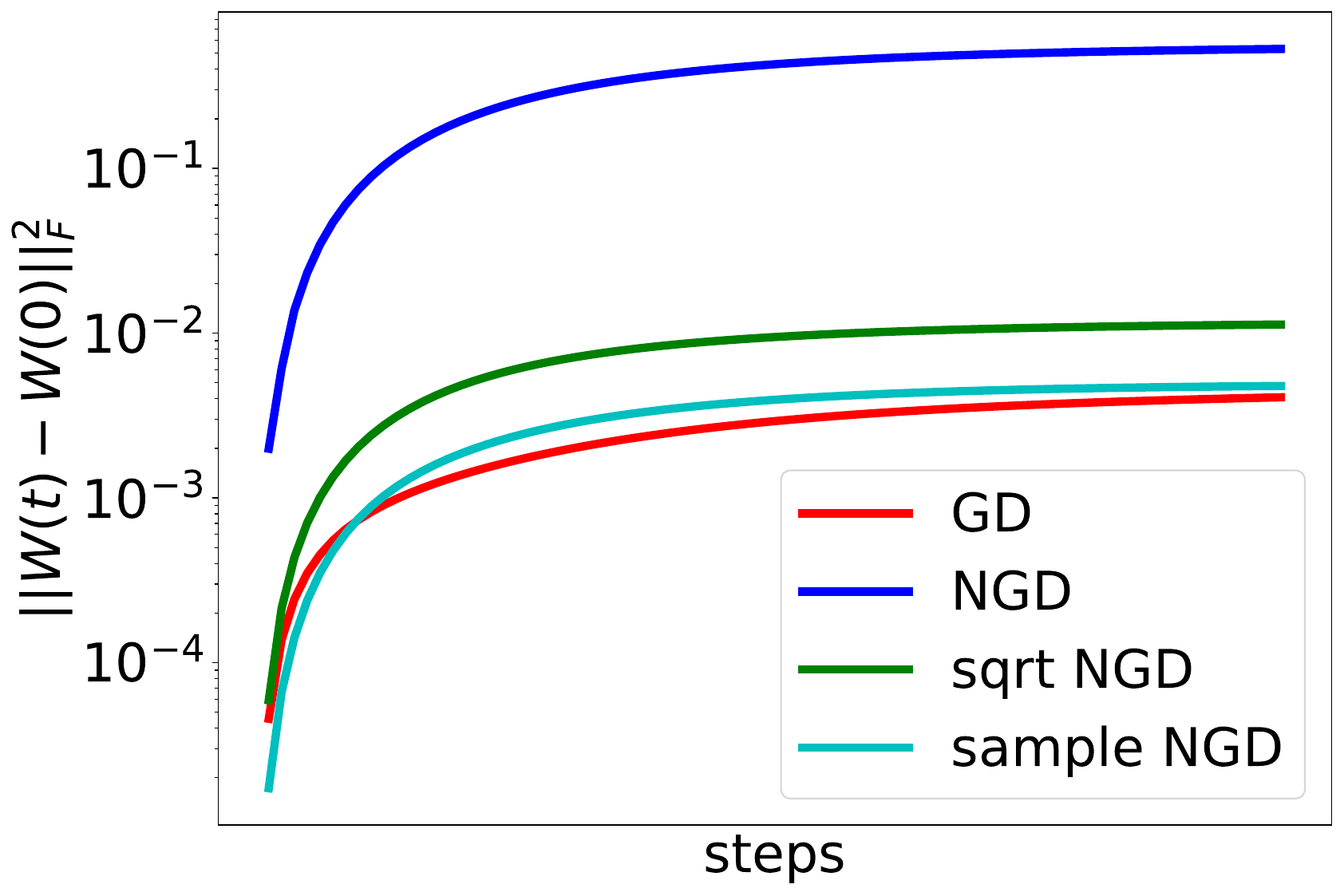}} \\ \vspace{-0.10cm}
\small (a) Difference in Parameters.
\end{minipage}
\begin{minipage}[t]{0.4\linewidth}
\centering
{\includegraphics[width=0.94\textwidth]{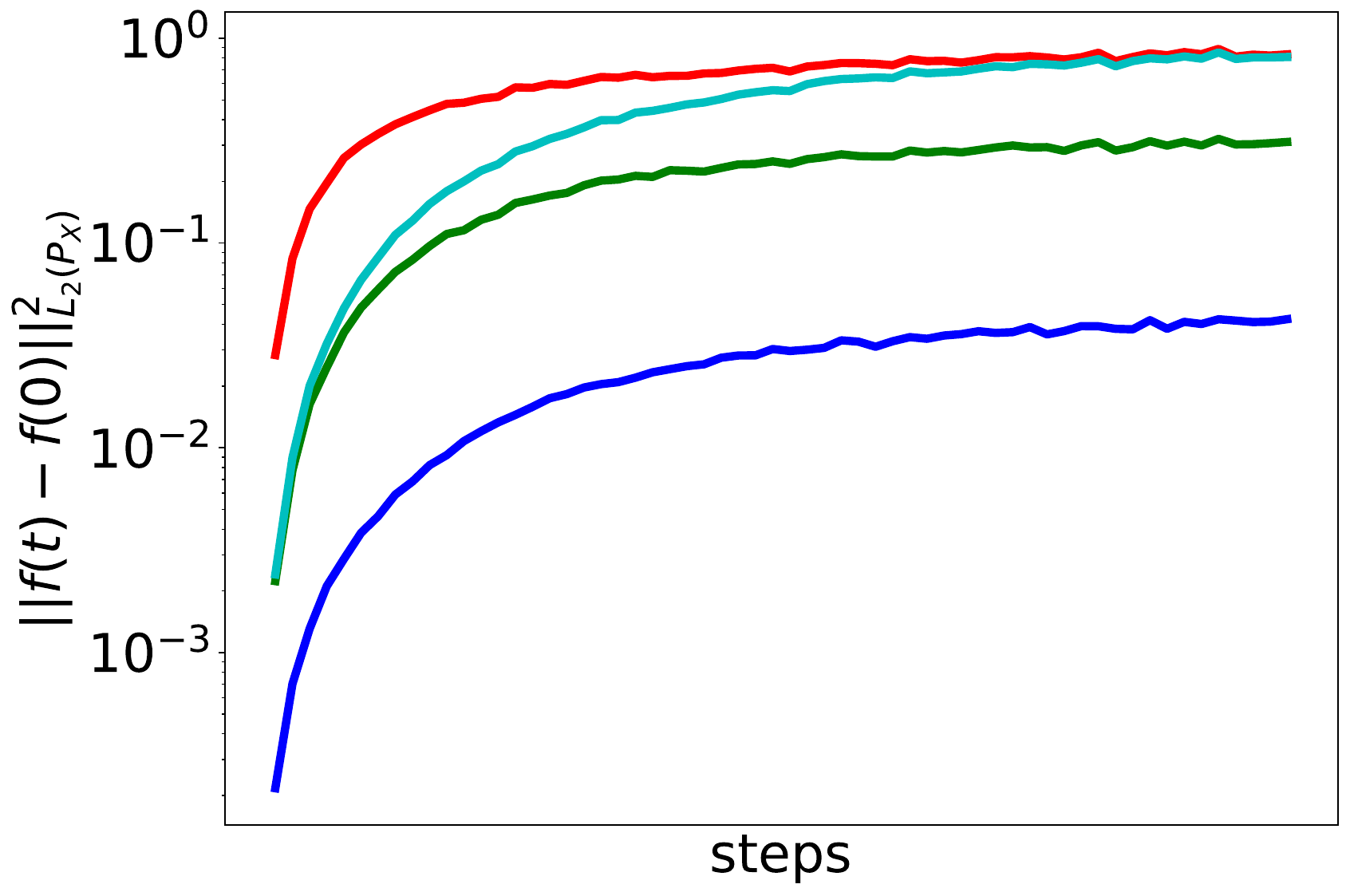}} \\ \vspace{-0.10cm}
\small (b) Difference in Function Values.
\end{minipage} 
\vspace{-1mm} 
 \caption{\small Illustration of implicit bias of GD and NGD. We set $n=100$, $d=50$, and regress a two-layer ReLU network with $50$ hidden units towards a teacher model of the same architecture on Gaussian input. The x-axis is rescaled for each optimizer such that the final training error is below $10^{-3}$. 
 GD finds solution with small changes in the parameters, whereas NGD finds solution with small changes in the function. 
 Note that the sample Fisher (cyan) has implicit bias similar to GD and does not resemble NGD (population Fisher).}
\label{fig:implicit-bias}
% \vspace{-0.1cm}
\end{figure}  

We note that the above observation also implies that wide neural networks trained with NGD (population Fisher) is less likely to stay in the kernel regime: the distance traveled from initialization can be large (see Figure~\ref{fig:implicit-bias}(a)) and thus the Taylor expansion around the initialization is no longer accurate.
In other words, the analogy between wide neural net and its linearized kernel model (which we partially employed in Section~\ref{sec:experiment}) may not be valid in models trained by NGD\footnote{Note that this gap is only present when the population Fisher is used; previous works have shown NTK-type global convergence for sample Fisher-related update \cite{zhang2019fast,cai2019gram}.}.

% \vspace{-1.5mm}
% \bigskip
\paragraph{Implicit Bias of Interpolating Preconditioners.} 
We also expect that as we interpolate from GD to NGD, the distance traveled in the parameter space would gradually increase, and distance traveled in the function space would decrease. Figure~\ref{fig:additional_3_neural_networks} demonstrate that this is indeed the case for neural networks:
we use the same two-layer MLP setup on MNIST as in Section~\ref{sec:experiment}. Observe that updates closer to GD result in smaller change in the parameters, and ones close to NGD lead to smaller change in the function outputs.

\begin{figure}[!htb] 
% \vspace{-10mm}
\centering
\begin{minipage}[t]{0.24\linewidth}
\centering
{\includegraphics[width=1.02\textwidth]{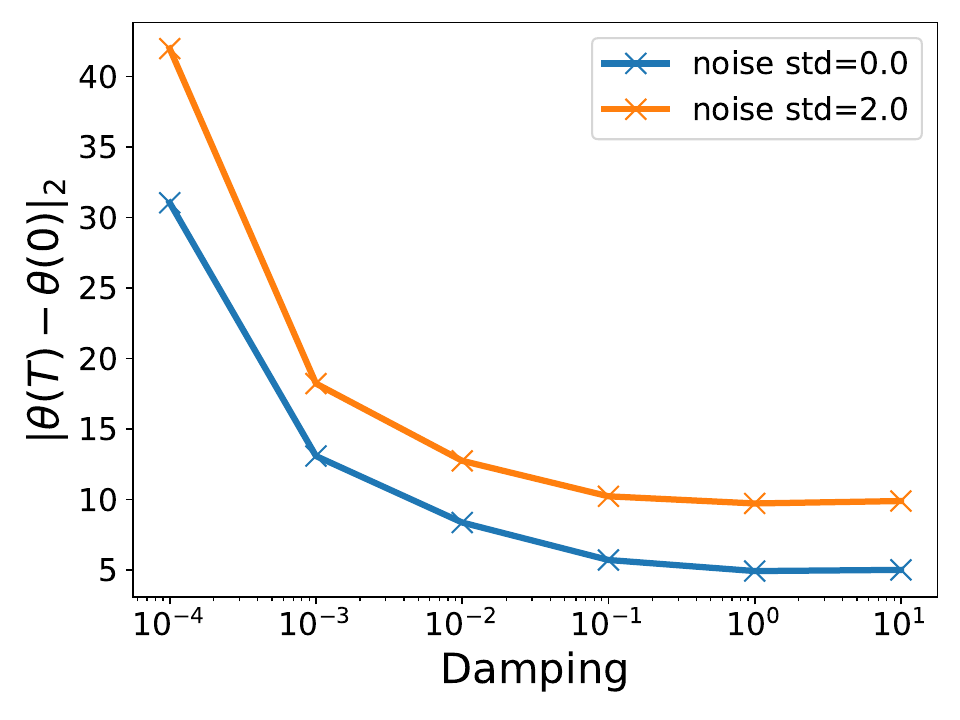}} 
\small (a) additive interp.; \\ difference in parameters. 
\end{minipage}
\begin{minipage}[t]{0.24\linewidth}
\centering
{\includegraphics[width=1.02\textwidth]{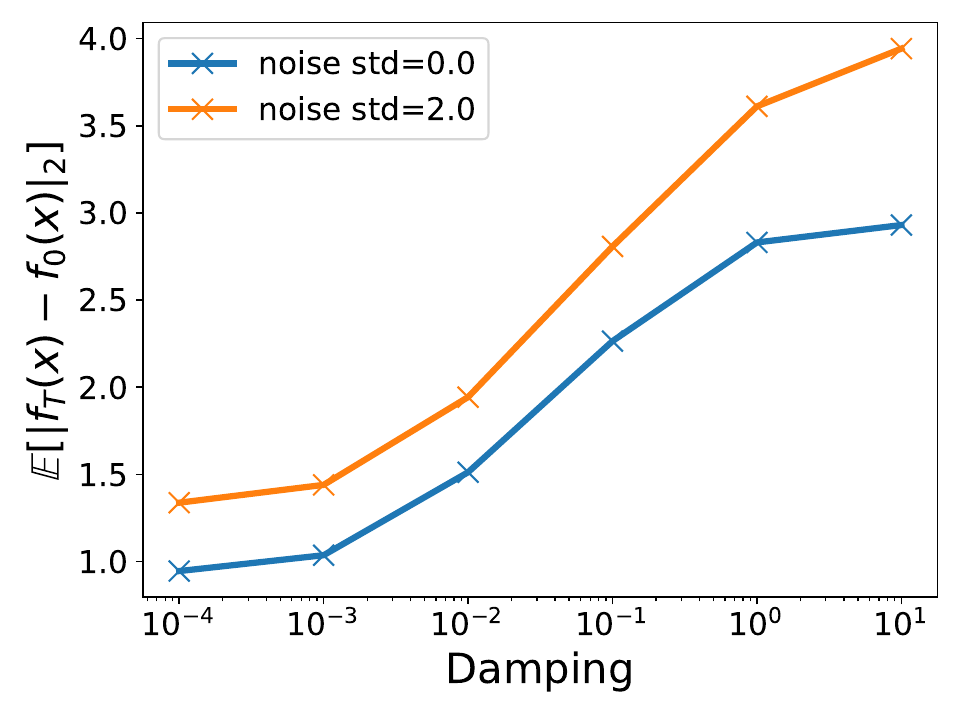}} 
\small (b) additive interp.; \\ difference in functions.
\end{minipage}
\begin{minipage}[t]{0.24\linewidth}
\centering
{\includegraphics[width=1.02\textwidth]{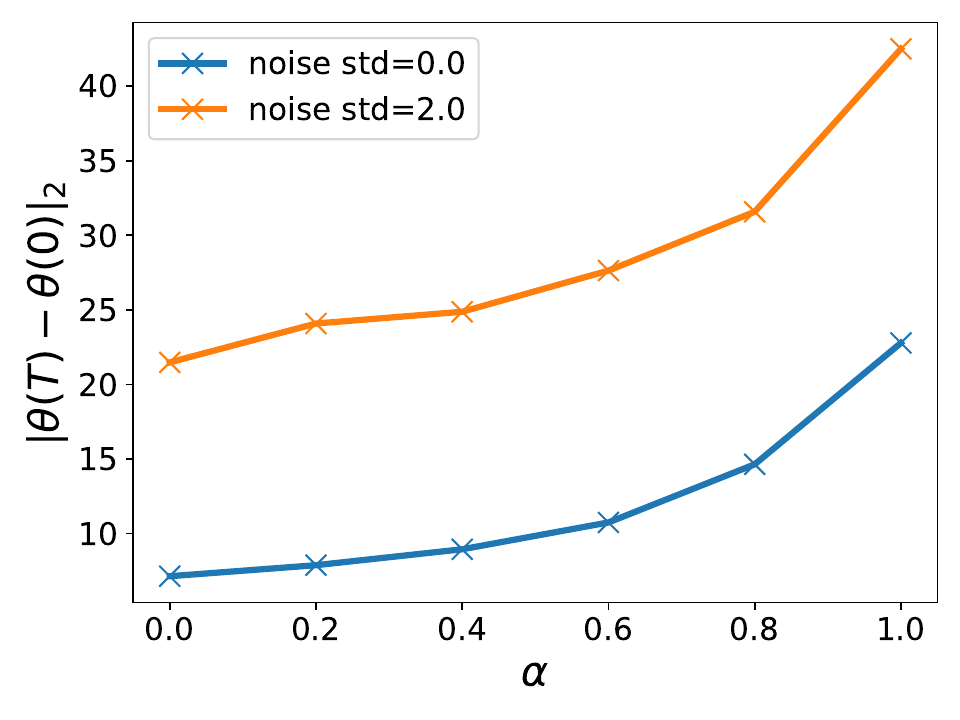}}  
\small (c) geometric interp.; \\ difference in parameters.
\end{minipage}
\begin{minipage}[t]{0.24\linewidth}
\centering
{\includegraphics[width=1.02\textwidth]{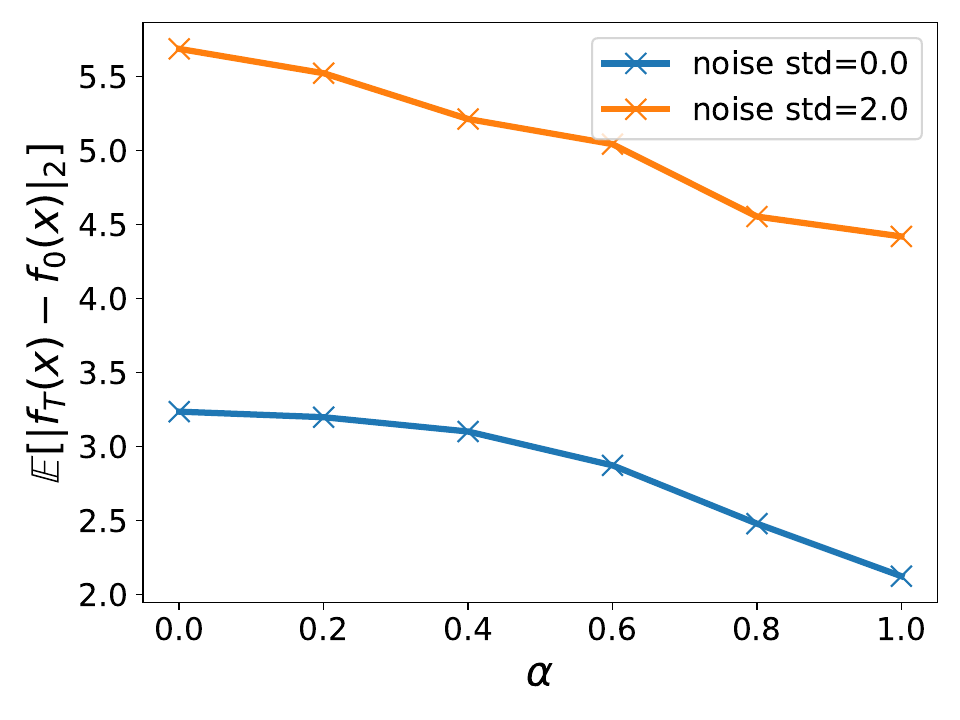}}  
\small (d) geometric interp.; \\ difference in functions.
\end{minipage}
\vspace{-0.5mm}
\caption{\small Illustration of the implicit bias of preconditioned gradient descent that interpolates between GD and NGD on MNIST. Note that as the update becomes more similar to NGD (smaller damping or larger $\alpha$), the distance traveled in the parameter space increases, where as the distance traveled on the output space decreases.}
\label{fig:additional_3_neural_networks}
\vspace{-1mm} 
\end{figure}

\subsection{Non-monotonicity of the Bias Term w.r.t.~Time}
\label{subsec:epoch_wise_appendix}
\begin{wrapfigure}{R}{0.375\textwidth}  
\vspace{-3.6mm} 
\centering 
\includegraphics[width=0.37\textwidth]{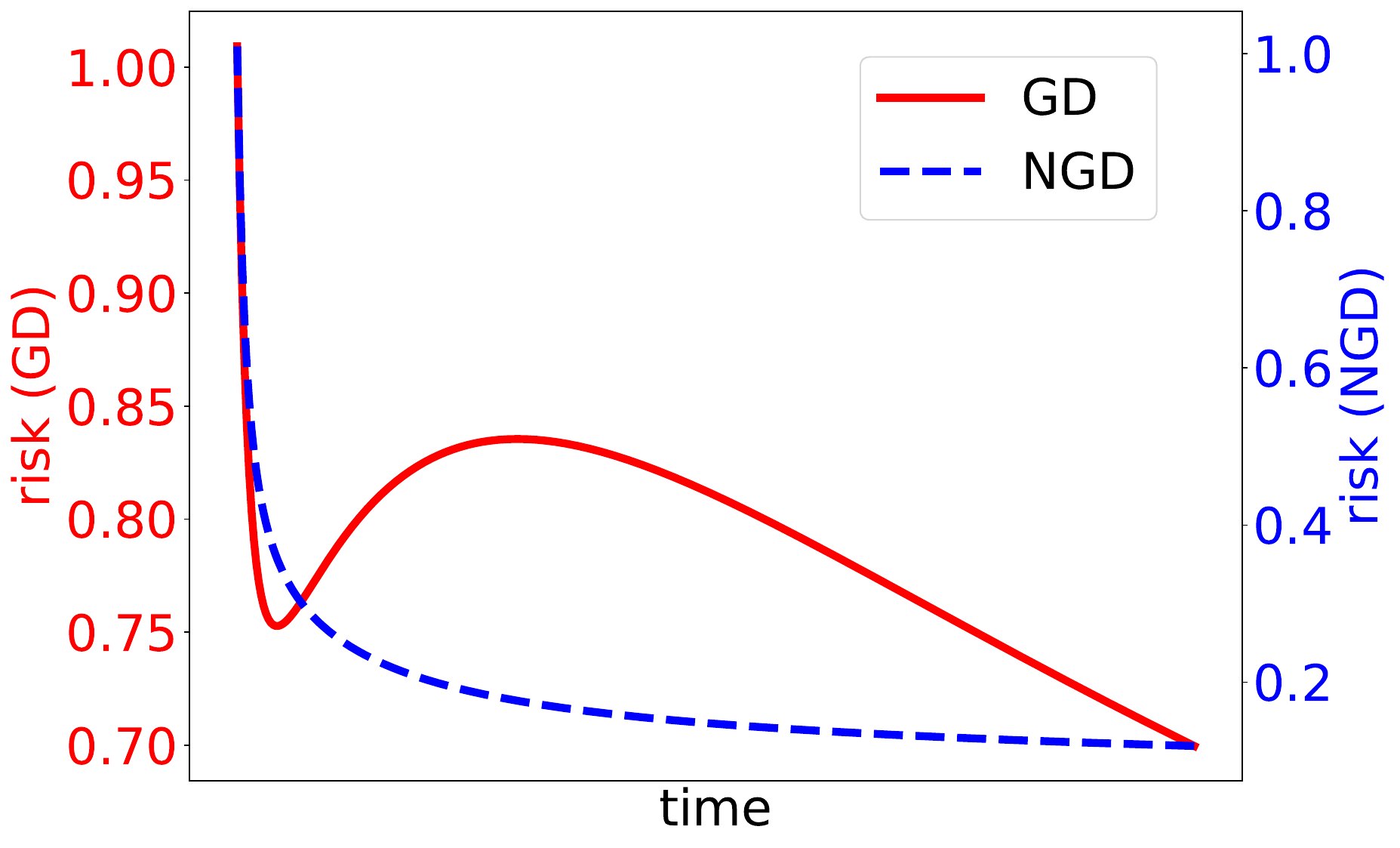}
\vspace{-6mm} 
\caption{\small Epoch-wise double descent. Note that non-monotonicity of the bias term is present in GD but not NGD.}  
\label{fig:epoch-wise}
\vspace{-0.2cm} 
\end{wrapfigure} 

Many previous works on the high-dimensional characterization of linear regression assumed a random effects model with an isotropic prior on the true parameters \cite{dobriban2018high,hastie2019surprises,xu2019many}, which may not hold in practice.
As an example of the limitation of this assumption, note that when $\bSigma_{\bt} = \bI_d$, it can be shown that the expected bias $B(\hbt(t))$ monotonically decreases through time (see proof of Proposition~\ref{prop:early-stop} for details). In contrast, when the target parameters do not follow an isotropic prior, the bias of GD can exhibit non-monotonicity, which gives rise to the surprising ``epoch-wise double descent'' phenomenon also observed in deep learning~\cite{nakkiran2019deep,ishida2020we}.

We empirically demonstrate this non-monotonicity when the model is close to the interpolation threshold in Figure~\ref{fig:epoch-wise}. We set eigenvalues of $\bSigma_\bX$ to be two point masses with $\kappa_X=32$, $\bSigma_{\bt} = \bSigma_\bX^{-1}$ and $\gamma=16/15$. 
Note that the GD trajectory (red) exhibits non-monotonicity in the bias term, whereas for NGD the bias is monotonically decreasing through time (which we confirm in the proof of Proposition~\ref{prop:early-stop}).
We remark that this mechanism of epoch-wise double descent may not relate the empirical findings in deep neural networks (the robustness of which is also unknown), in which it is typically speculated that the variance term exhibits non-monotonicity.

\subsection{Estimating the Population Fisher}
\label{subsec:approximate_fisher_appendix}

Our analysis on linear model considers the idealized setup with access to the exact population Fisher, which can be estimated using additional unlabeled data. 
In this section we discuss how our result in Section~\ref{sec:risk} and Section~\ref{sec:bias-variance} are affected when the population covariance is approximated from $N$ i.i.d.~(unlabeled) samples $\bX_u\in\R^{N\times d}$. For the ridgeless interpolant we have the following result on the substitution error in replacing the true population covariance with the sample estimate.
\begin{prop}
Given (A1)(A3) and $N/d\to\psi>1$ as $d\to\infty$, let $\hat{\bSigma}_\bX = \bX^\top_u\bX_u/N$, we have
\begin{itemize}[leftmargin=18pt,itemsep=0pt,topsep=0pt]
    \item[(a)] $\|\bSigma_\bX- \hat{\bSigma}_\bX\|_2 = O(\psi^{-1/2})$ almost surely.
    \item[(b)] Denote the stationary bias and variance of NGD (with the exact population Fisher) as $B^*$ and $V^*$, respectively, and the bias and variance of the preconditioned update using the approximate Fisher $\hat{\boldF}=\hat{\bSigma}_\bX$ as $\hat{B}$ and $\hat{V}$, respectively. Let $0<\epsilon<1$ be the desired accuracy. Then $\psi=\Theta(\epsilon^{-2})$ suffices to achieve $|B^*-\hat{B}|<\epsilon$ and $|V^*-\hat{V}|<\epsilon$.
\end{itemize}
\label{prop:approximate_fisher}
\end{prop}

Proposition~\ref{prop:approximate_fisher} entails that when the preconditioner is a sample estimate of the Fisher $\hat{\boldF}$ (based on unlabeled data), we can approach the stationary bias and variance of the population Fisher at a rate of $\psi^{-1/2}$ as we increase the number of unlabeled data $N$ linearly in the dimensionality $d$. In other words, any accuracy $\epsilon$ such that $1/\epsilon$ is bounded can be achieved with finite $\psi$
(to push $\epsilon\to 0$, additional logarithmic factor is required, e.g.~$N=O(d\,\text{log}d)$, which is beyond the proportional limit).

On the other hand, for our result in Section~\ref{subsec:RKHS}, \cite[Lemma A.5]{murata2020gradient} directly implies that setting $N = \Theta(n^s \log n)$ is sufficient to approximate the population covariance operator (i.e., so that $\|\Sigma^{1/2}\Sigma_{N,\lambda}^{-1/2}\|=O(1)$). 
Finally, we remark that our analysis above does not impose any structural assumptions on the estimated matrix. When the Fisher exhibits certain structures (e.g.~Kronecker factorization~\cite{martens2015optimizing}), then estimation can be more sample-efficient. For analysis on such approximations of the Fisher see \cite{karakida2020understanding}. 

\subsection{Bias Term Under Specific Source Condition}
\label{subsec:source_condition_appendix}

Motivated by the connection between the notion of ``alignment'' and the \textit{source condition} in Section~\ref{subsec:bias-well_spec}, we consider a specific case of $\bt^*$: $\bSigma_{\bt} = \bSigma_\bX^{r}$, where $r$ controls the extent of misalignment, and Theorem~\ref{theo:bias} implies that the optimal preconditioner for the bias term (well-specified) is $\bP=\bSigma_\bX^{r}$. Note that smaller $r$ corresponds to more misaligned and thus ``difficult'' problem, and vice versa. In this setup we have the following comparison between GD and NGD.
\begin{prop*}[Formal Statement of Proposition~\ref{prop:source_condition}]
Consider the setting of Theorem~\ref{theo:bias} and $\bSigma_{\bt} = \bSigma_\bX^{r}$, then for all $r \le -1$ we have $B(\hbt_{\boldF^{-1}})\le B(\hbt_\bI)$, whereas for all $r \ge 0$, we have $B(\hbt_{\boldF^{-1}}) \ge B(\hbt_\bI)$; the equality is achieved when features are isotropic (i.e., $\bSigma_\bX = c\bI_d$).
\end{prop*}
 
\begin{wrapfigure}{R}{0.345\textwidth} 
\centering 
\vspace{-5.5mm}  
\includegraphics[width=0.335\textwidth]{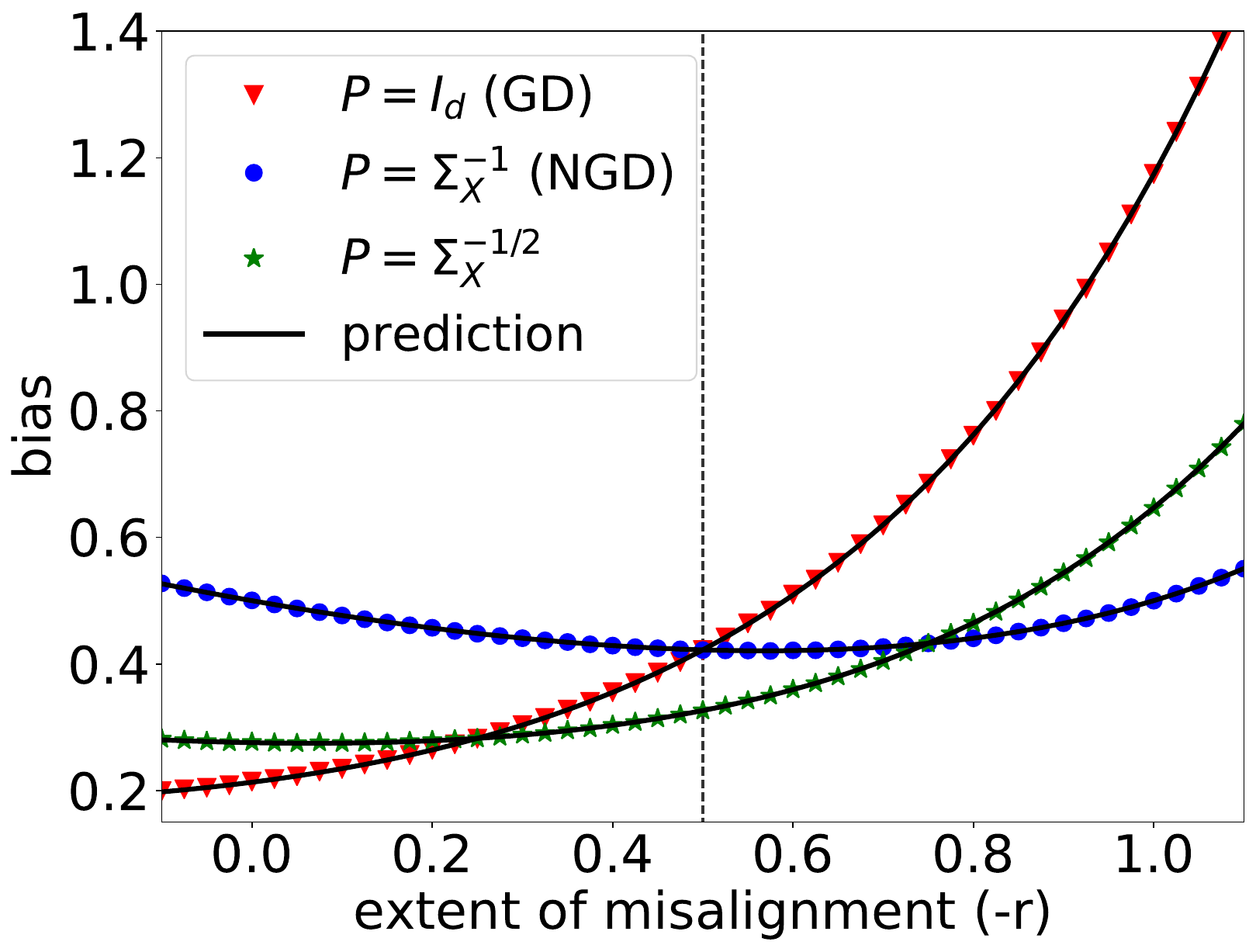}
\vspace{-3mm}  
\caption{\small We set $\bSigma_{\bt} \!=\! \bSigma_\bX^{r}$, $\gamma\!=\!2$, $\kappa\!=\!20$ and plot the bias under varying $r$.}  
\label{fig:extent_misalignment}
\vspace{-5mm}
\end{wrapfigure}  

The proposition confirms the intuition that when parameters of the teacher model $\bt^*$ are more ``aligned'' with features $\bx$ than the isotropic setting ($r\ge 0$), then GD achieves lower bias than NGD; on the other hand, when $\bSigma_{\bt}$ is more ``misaligned'' than the $\bSigma_\bX^{-1}$ case ($r\le-1$), then NGD is guaranteed to be advantageous for the bias term.
We therefore expect a transition from the NGD-dominated to GD-dominated regime for some $r\in(-1,0)$. The exact value of $r$ for such transition depends on the spectral distribution of $\bSigma_\bX$ and varies case-by-case (one would need to specifically evaluate the equality \eqref{eq:source_condition_comparison}). 
To give a concrete example, when $\bSigma_\bX$ has a simple block structure, we can explicitly determine the the transition point $r^*\in (-1,0)$, as shown by the following corollary.
\begin{coro}
Assume $\bSigma_{\bt} = \bSigma_\bX^{r}$, and eigenvalues of $\bSigma_\bX$ come from two equally-weighted point masses with $\kappa \triangleq\lambda_{\max}(\bSigma_\bX) / \lambda_{\min}(\bSigma_\bX)$. WLOG we take $\Tr{\bSigma_\bX}/d=1$. Then given $r^* = -\ln{c_{\kappa,\gamma}} / \ln{\kappa}$ (see Appendix \ref{subsec:proof_special_case} for definition), we have $B(\hbt_{\bI}) \gtreqless B(\hbt_{\boldF^{-1}})$ if and only if $r \lesseqgtr r^*$. 
 
\label{coro:source_condition_special_case}
\end{coro}
\begin{remark}
When $\gamma \!=\! 2$, the transition happens at $r^* \!=\! -1/2$ which is independent of the condition number $\kappa$, as indicated by the dashed line in Figure~\ref{fig:extent_misalignment}. However for other $\gamma\!>\!1$, $r^*$ generally relates to both $\gamma$ and $\kappa$. 
\end{remark}
 
Our characterization above is supported by 
Figure~\ref{fig:extent_misalignment}, in which we plot the bias term under varying extent of misalignment (controlled by $r$) in the setting of Corollary~\ref{coro:source_condition_special_case}. Observe that as we construct the teacher model to be more ``misaligned'' (and thus difficult to learn) by decreasing $r$, NGD (blue) achieves lower bias compared to GD (red), and vice versa.

\subsection{Interpretation of $\small \sqrt{\by^\top \bK^{-1}\by / n}$}

\label{subsec:yky_appendix}
\begin{wrapfigure}{R}{0.34\textwidth} 
\centering 
\vspace{-1.5mm}  
\includegraphics[width=0.33\textwidth]{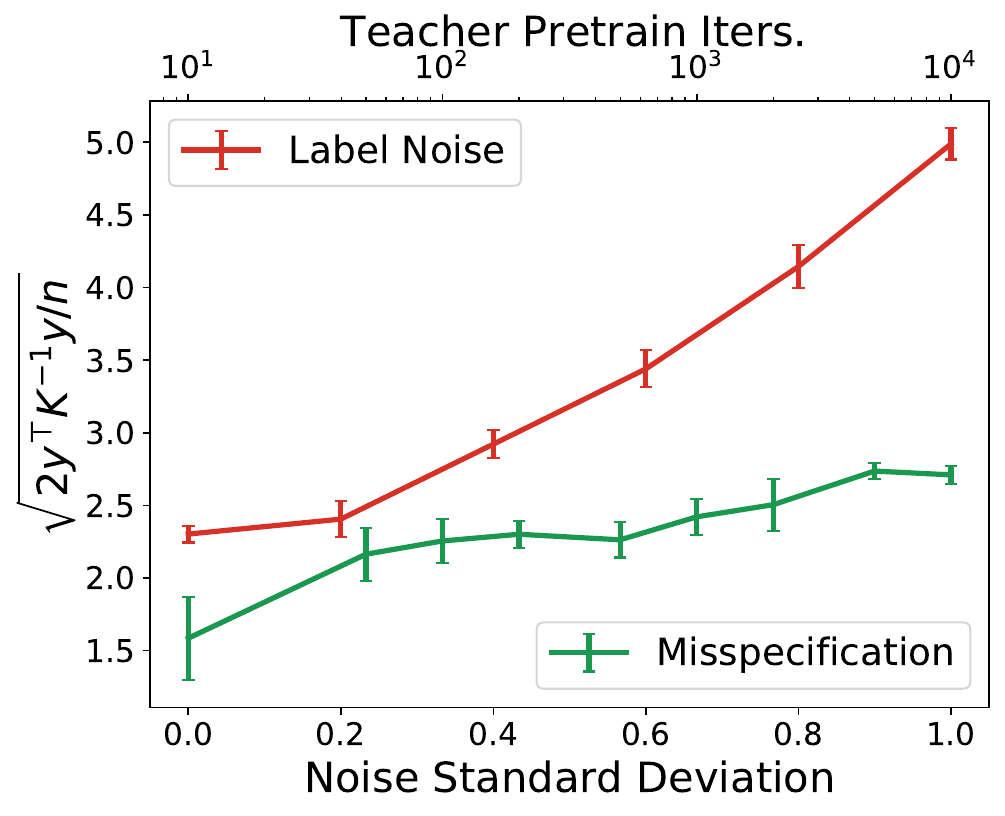}
\vspace{-2.5mm}  
\caption{\small $\small \sqrt{\by^\top \bK^{-1}\by / n}$ on two-layer neural network (CIFAR-10).}  
\label{fig:yky}
\vspace{-10.5mm} 
\end{wrapfigure}  
  
As a heuristic measure of model misspecification, in Figure~\ref{fig:yky} we report $\small \sqrt{\by^\top \bK^{-1}\by / n}$ studied in~\cite{arora2019fine}, where $\by$ is the label vector and $\bK$ is the NTK matrix \cite{jacot2018neural} of the student model.
This quantity relates to generalization of neural networks in the kernel regime, an can be interpreted as a proxy for measuring how much signal and noise are distributed along the eigendirections of the NTK  (e.g.,  \cite{li2019gradient,dong2019distillation,su2019learning}). 
Roughly speaking, large $\small \sqrt{\by^\top \bK^{-1}\by / n}$ implies that the problem is ``difficult'' to learn by GD, vice versa.

Here we give a heuristic argument on how this quantity relates to label noise and misspecification. For the ridgeless regression model considered in Section~\ref{sec:risk}, write $y_i = f^*(\bx_i) + f^c(\bx_i) + \varepsilon_i$, where $f^*(\bx) = \bx^\top\bt^*$, $f^c$ is the misspecified component, and $\varepsilon_i$ is the label noise, then we have the following heuristic calculation:
\vspace{-1mm} 
\begin{align*}  
    &\E\LL[\by^\top \bK^{-1}\by\RR]
=
    \E\LL[\norm{(\bX\bX^\top)^{-1/2}(f^*(\bX) + f^c(\bX) + \boldvarepsilon)}_2^2\RR]
    \\
\overset{(i)}{\approx}&  
    \Tr{\bt^*{\bt^*}^\top \bX^\top(\bX\bX^\top)^{-1}\bX} + (\sigma^2 + \sigma_c^2)\Tr{(\bX\bX^\top)^{-1}},
    \numberthis
    \label{eq:yK-1y}
\end{align*}
where we heuristically replaced the misspecified component with i.i.d.~noise of the same variance $\sigma_c^2$.
The first term of \eqref{eq:yK-1y} resembles an RKHS norm of the target $\bt^*$, whereas the second term is small when the feature matrix is well-conditioned or when the level of label noise $\sigma$ and misspecification $\sigma_c^2$ is small (note that these are conditions under which GD achieves good generalization, due to Theorem~\ref{theo:variance} and Corollary~\ref{coro:misspecify}). 
We may expect similar behavior for neural networks close to the kernel regime. This provides a non-rigorous explanation of the trend observed in Figure~\ref{fig:yky}: as we increase the level of label noise or model misspecification (by pretraining the teacher for more steps), the quantity of interest becomes larger.

\bigskip

}

\section{Additional Related Works}
\label{sec:additional_related}
{

\paragraph{Implicit Regularization in Optimization.} 
In overparameterized linear models, GD finds the minimum $\ell_2$ norm solution under many loss functions. For the more general mirror descent, the implicit bias is determined by the Bregman divergence of the update \cite{azizan2018stochastic,azizan2019stochastic,gunasekar2018implicit,suggala2018connecting}. Under the exponential or logistic loss, recent works demonstrated that GD finds the max-margin direction in various models \cite{ji2018gradient,ji2019implicit,soudry2018implicit,lyu2019gradient,chizat2020implicit}. The implicit bias of Adagrad has been analyzed under similar setting \cite{qian2019implicit}.
The implicit regularization of the optimizer often relates to the model architecture; examples include matrix factorization \cite{gunasekar2017implicit,saxe2013exact,gidel2019implicit,arora2019implicit} and various types of neural network \cite{li2017algorithmic,gunasekar2018implicit,williams2019gradient,woodworth2020kernel}.
For neural networks in the kernel regime \cite{jacot2018neural}, the implicit bias of GD relates to properties of the limiting neural tangent kernel (NTK) \cite{xie2016diverse,arora2019fine,bietti2019inductive}.
We also note that the implicit bias of GD is not always explained by the minimum norm property \cite{razin2020implicit}.

\paragraph{Asymptotics of Interpolating Estimators.} 
In Section~\ref{sec:risk} we analyze overparameterized estimators that interpolate the training data. Recent works have shown that interpolation may not lead to overfitting \cite{liang2018just,belkin2018does,belkin2018overfitting,bartlett2019benign}, and the optimal generalization error may be achieved under no regularization and extreme overparameterization \cite{belkin2018reconciling,xu2019many}. The asymptotic prediction risk of overparameterized models has been characterized in various settings, such as linear regression \cite{karoui2013asymptotic,dobriban2018high,hastie2019surprises,wu2020optimal}, random features regression \cite{mei2019generalization,gerace2020generalisation,hu2020universality}, max-margin classification \cite{montanari2019generalization,deng2019model}, and certain stylized neural networks \cite{advani2017high,ba2020generalization}.
Our analysis is based on results in random matrix theory developed in \cite{rubio2011spectral,ledoit2011eigenvectors}.  
Similar tools can also be used to study the gradient descent dynamics of linear regression \cite{liao2018dynamics,ali2019continuous}.

}

% \newpage

\bigskip

\section{Additional Figures}
\label{sec:additional_figure}
{
\subsection{Overparameterized Linear Regression}
\label{subsec:additional_figures_linear}
{

\paragraph{Non-monotonicity of the Risk.} Under our generalized (anisotropic) assumption on the covariance of the features and the target, both the bias and the variance term can exhibit non-monotonicity w.r.t.~the overparameterization level $\gamma>1$: in Figure~\ref{fig:non-monotone} we observe two peaks in the bias term and three peaks in the variance term. In contrast, it is known that when $\bSigma_\bX = \bI_d$, both the bias and variance are \textit{monotone} in the overparameterized regime (e.g., \cite{hastie2019surprises}).

\begin{figure}[!htb]
\vspace{-1mm} 
\centering
\begin{minipage}[t]{0.4\linewidth}
\centering
{\includegraphics[width=0.96\textwidth]{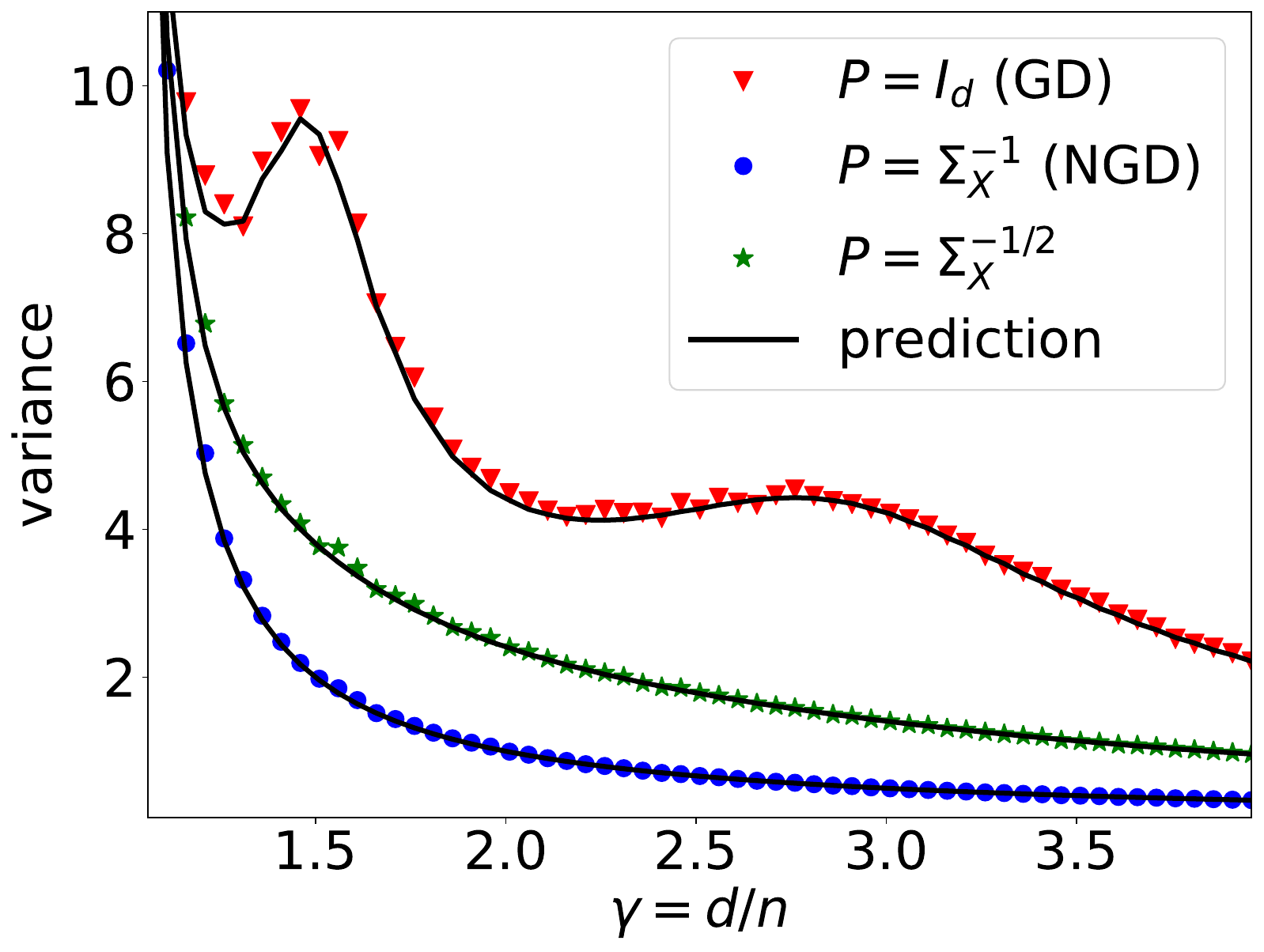}} 
\small (a) variance.
\end{minipage}
\begin{minipage}[t]{0.4\linewidth}
\centering
{\includegraphics[width=0.96\textwidth]{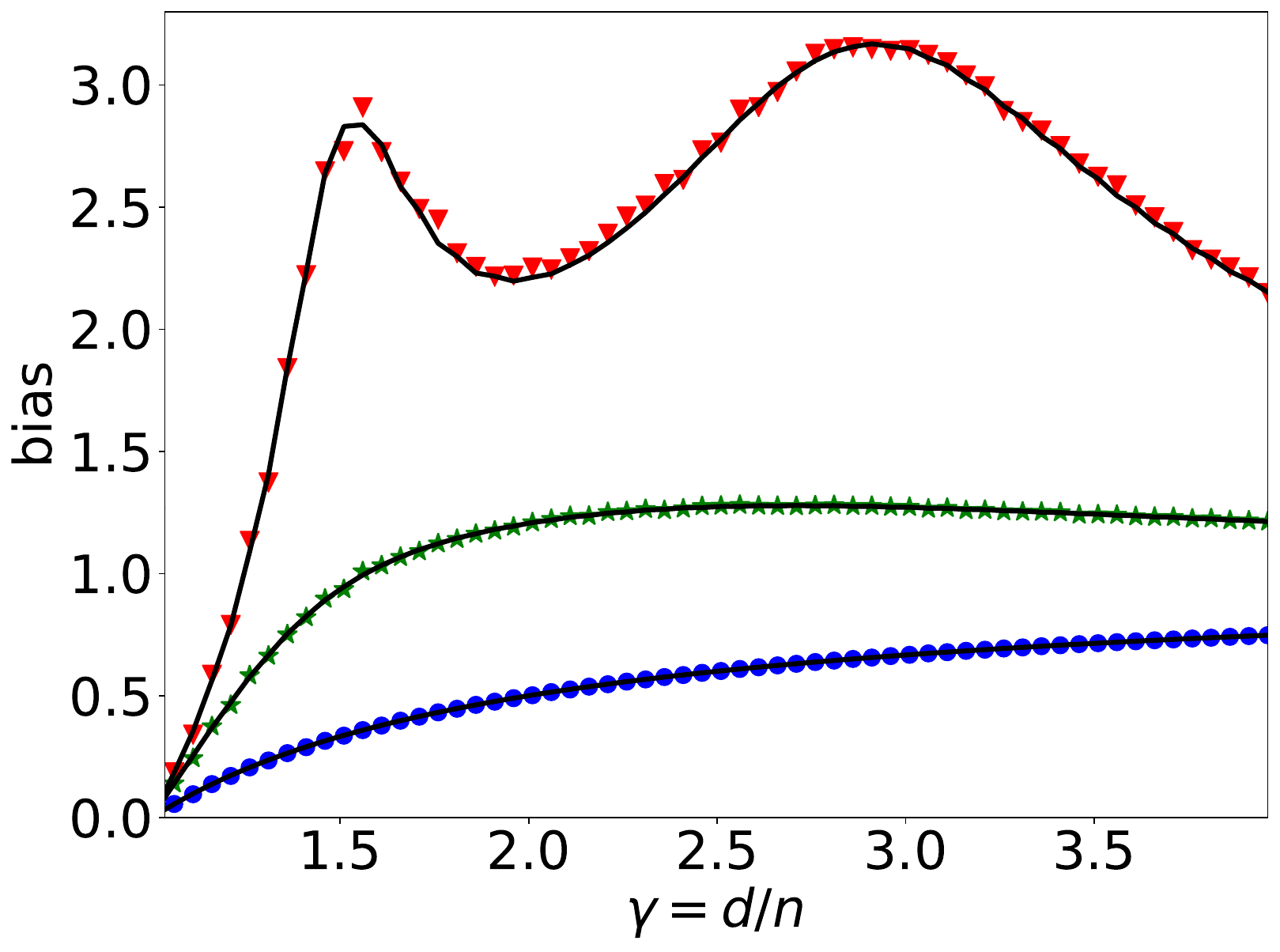}}
\small (b) bias (well-specified).
\end{minipage} 
\vspace{-0.15cm}
 \caption{\small Illustration of the ``multiple-descent'' curve of the risk for $\gamma>1$. We take $n=300$, eigenvalues of $\bSigma_\bX$ as three equally-spaced point masses with $\kappa_X = 5000$ and $\norm{\bSigma_\bX}_F^2 = d$, and $\bSigma_{\btheta}=\bSigma_\bX^{-1}$ (misaligned). Note that for GD, both the bias and the variance are highly non-monotonic for $\gamma>1$.}
\label{fig:non-monotone}
\vspace{-0.5mm}
\end{figure}   

\paragraph{Additional Figures for Section~\ref{sec:risk} and \ref{sec:bias-variance}.} We include additional figures on (a) well-specified bias when $\bSigma_{\btheta}=\bI_d$ (GD is optimal); (b) misspecified bias under unobserved features (predicted by Corollary~\ref{coro:misspecify}); (c) bias-variance tradeoff by interpolating between preconditioners (SNR=5). As shown in Figure~\ref{fig:ridgeless-2} and \ref{fig:ridgeless-SF}, in all cases the experimental values match the theoretical predictions.

\begin{figure}[!htb]
\vspace{-1mm}
\centering
\begin{minipage}[t]{0.31\linewidth}
\centering
{\includegraphics[width=0.99\textwidth]{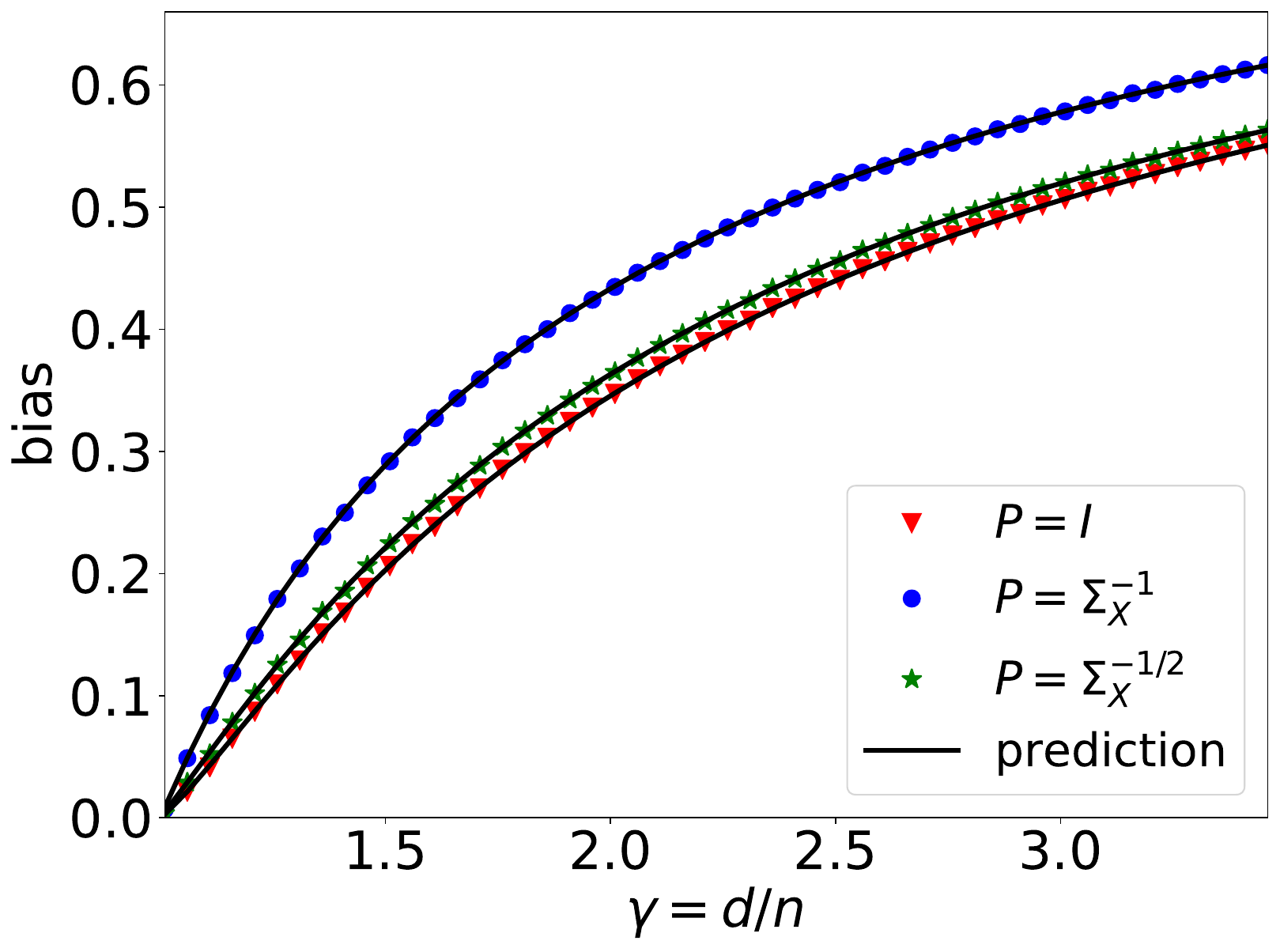}} 
\small (a) well-specified bias (aligned).
\end{minipage}
\begin{minipage}[t]{0.32\linewidth}
\centering
{\includegraphics[width=0.99\textwidth]{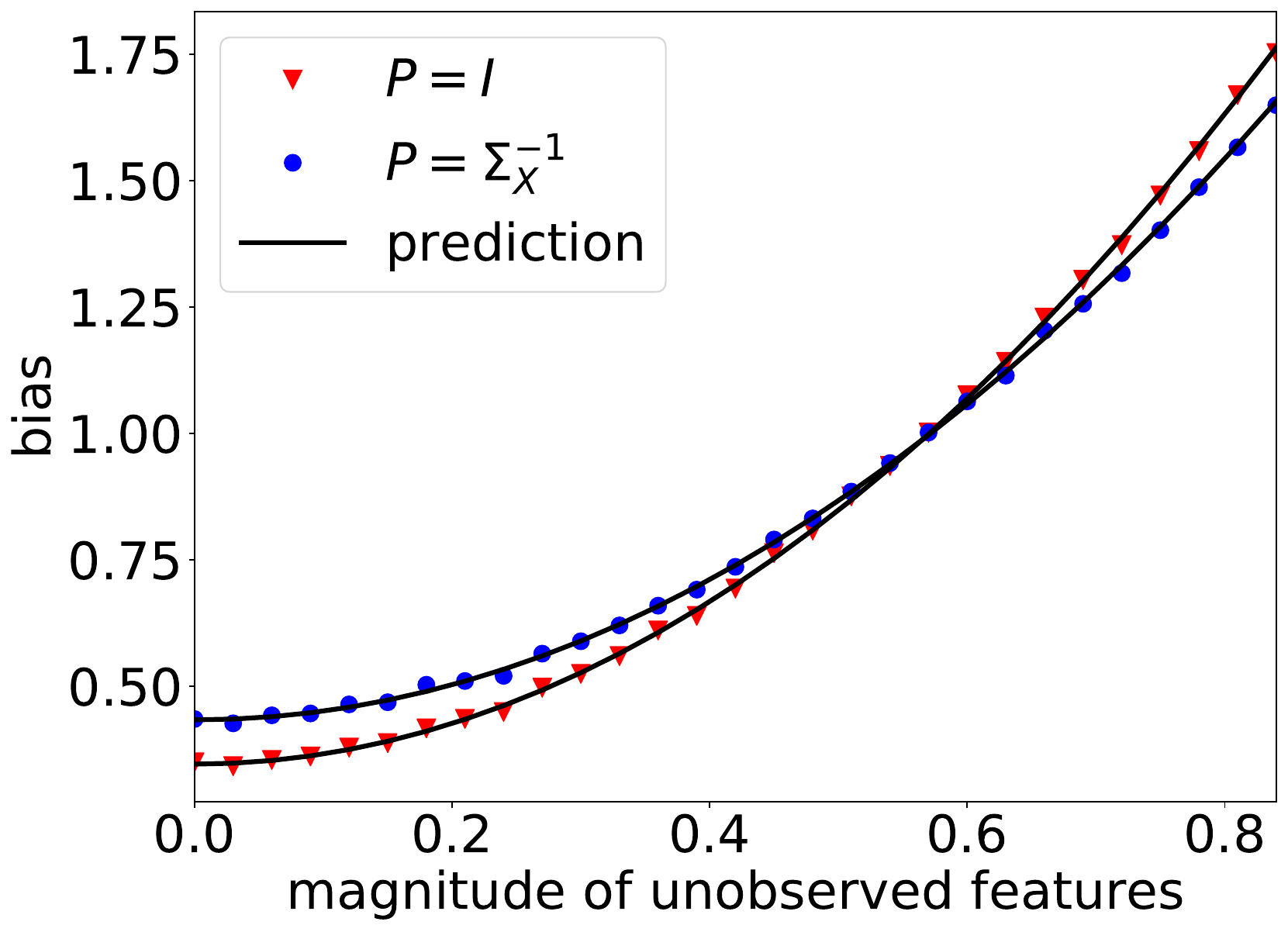}}
\small (b) misspecified bias \\(unobserved features).
\end{minipage} 
\begin{minipage}[t]{0.32\linewidth}
\centering
{\includegraphics[width=0.99\textwidth]{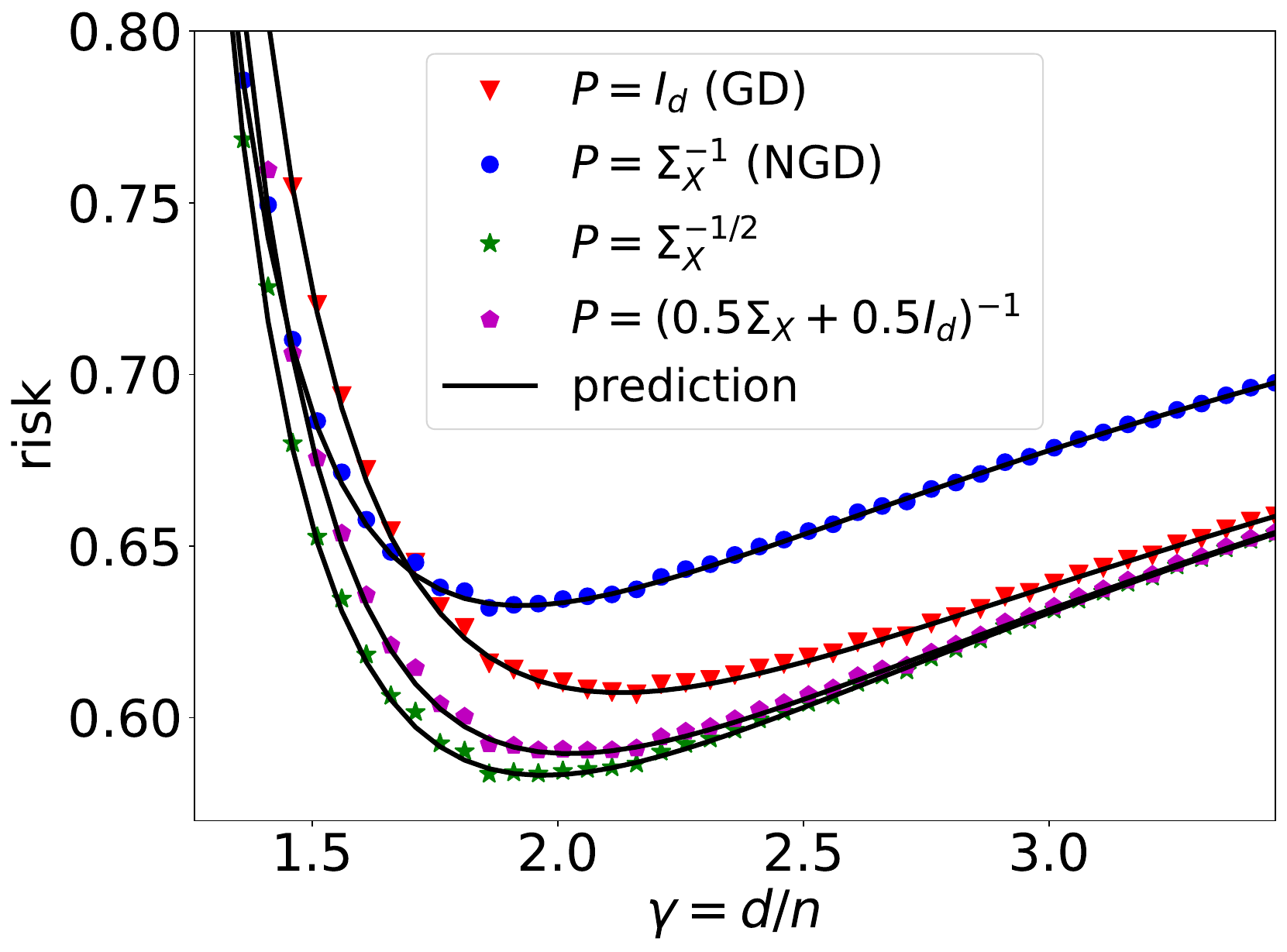}}
\small (c) bias-variance tradeoff.
\end{minipage}
\vspace{-0.15cm}
\caption{\small 
We set eigenvalues of $\bSigma_\bX$ as a uniform distribution with $\kappa_X = 20$ and $\norm{\bSigma_\bX}_F^2 = d$.
} 
\label{fig:ridgeless-2}
\vspace{-0.5mm}
\end{figure}

\begin{figure}[!htb]
% \vspace{-1mm}
\centering
\begin{minipage}[t]{0.325\linewidth}
\centering
{\includegraphics[width=0.99\textwidth]{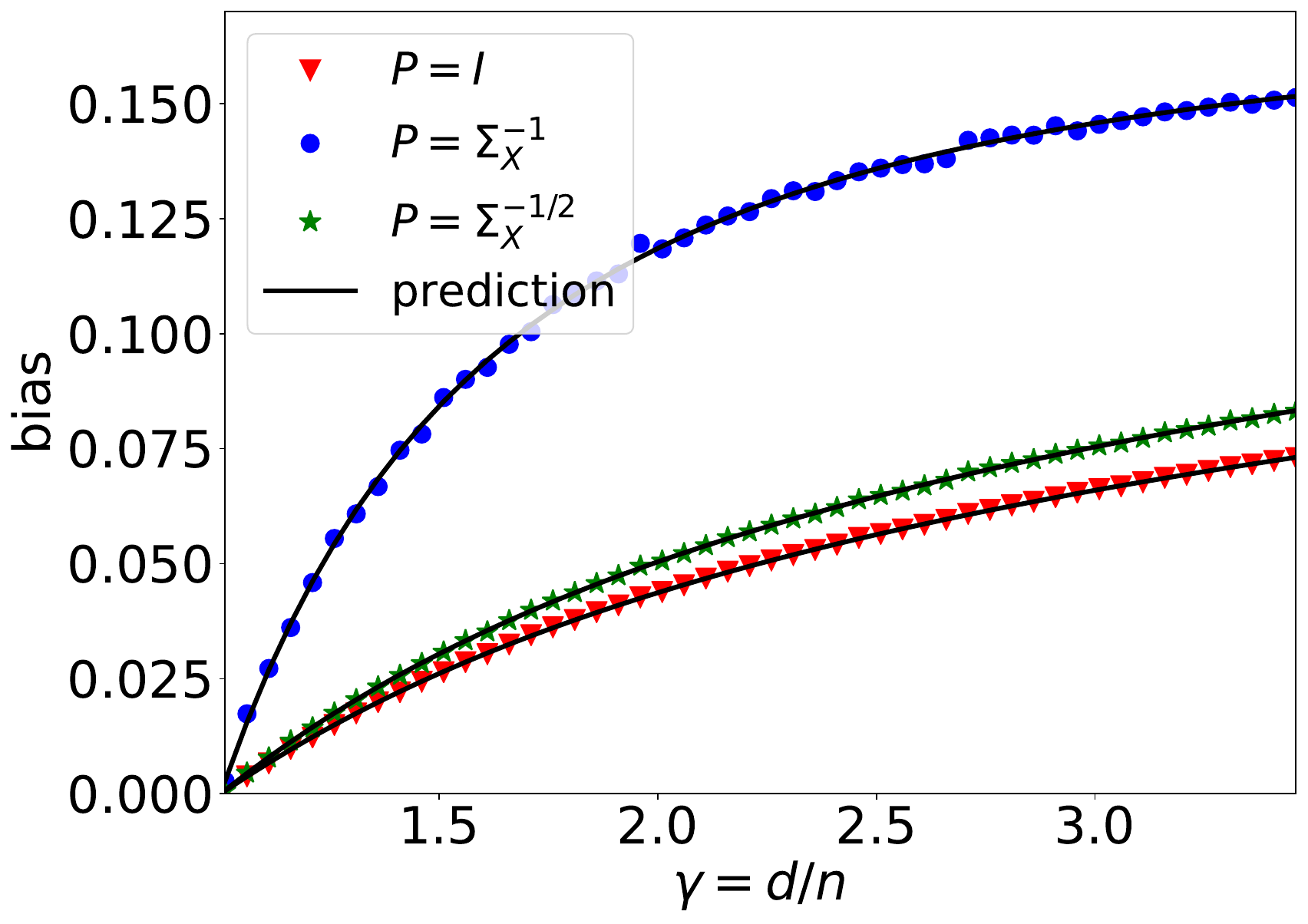}}  
\small (a) well-specified bias (aligned).
\end{minipage}
\begin{minipage}[t]{0.31\linewidth}
\centering
{\includegraphics[width=0.99\textwidth]{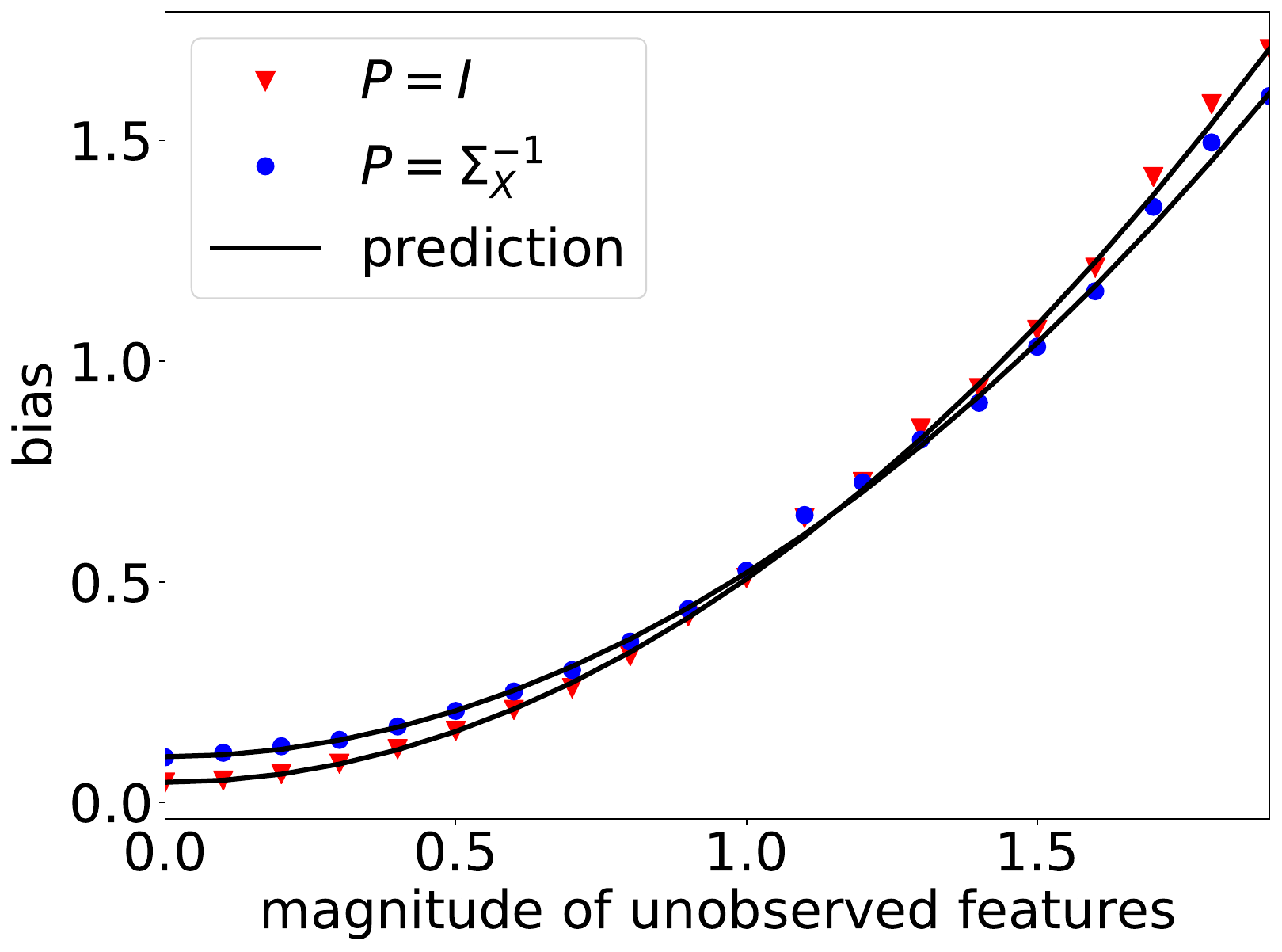}} 
\small (b) misspecified bias \\(unobserved features).
\end{minipage} 
\begin{minipage}[t]{0.315\linewidth}
\centering
{\includegraphics[width=0.99\textwidth]{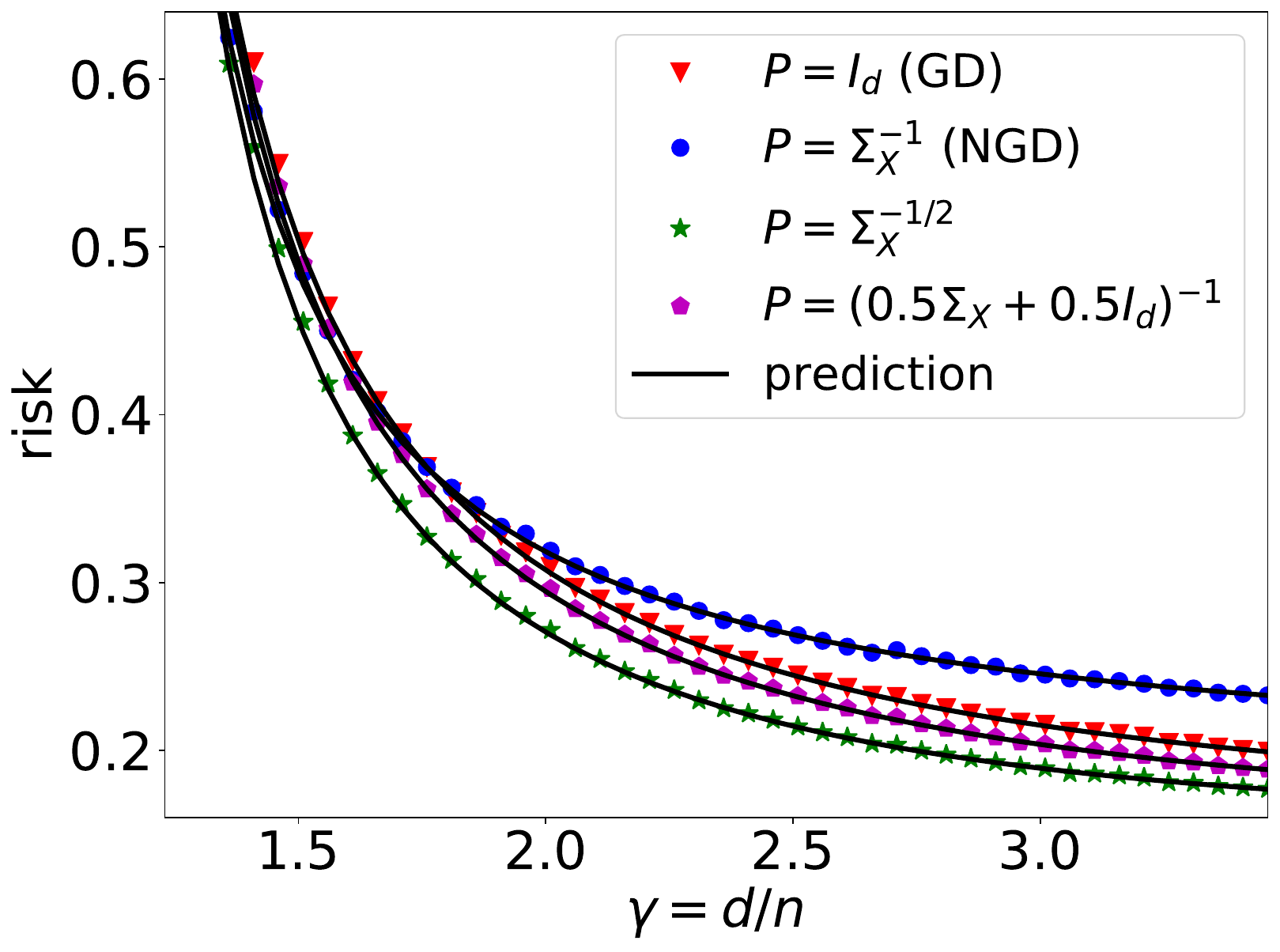}}
\small (c) bias-variance tradeoff.
\end{minipage} 
\vspace{-0.15cm} 
\caption{\small We construct eigenvalues of $\bSigma_\bX$ with a polynomial decay: $\lambda_i(\bSigma_\bX) = i^{-1}$ and then rescale the eigenvalues such that $\kappa_X=500$ and $\norm{\bSigma_\bX}_F^2 = d$. } 
\label{fig:ridgeless-SF}
\vspace{-1.5mm}  
\end{figure}    

\paragraph{Early Stopping Risk.} Figure~\ref{fig:early_stop-1} compares the stationary risk with the optimal early stopping risk under varying misalignment level. 
We set $\bSigma_{\bt} = \bSigma_\bX^{r}$ and vary $r$ from 0 to -1. As discussed in Section~\ref{subsec:bias-well_spec} smaller $\alpha$ entails more ``misaligned'' teacher, and vice versa. Note that as the problem becomes more misaligned, NGD achieves lower stationary and early stopping risk. 

Figure~\ref{fig:early_stop-2} reports the optimal early stopping risk under misspecification (same trend can be obtained when the x-axis is label noise). In contrast to the stationary risk (Figure~\ref{fig:misspecification_nonlinear}), GD can be advantageous under early stopping even with large extent of misspecification (for isotropic teacher). 
This aligns with our finding in Section~\ref{subsec:early-stop} that early stopping reduces the variance and the misspecified bias.
 
\begin{figure}[!htb]
% \vspace{-0.2cm} 
\centering
\begin{minipage}[t]{0.4\linewidth}
\centering
{\includegraphics[width=0.94\textwidth]{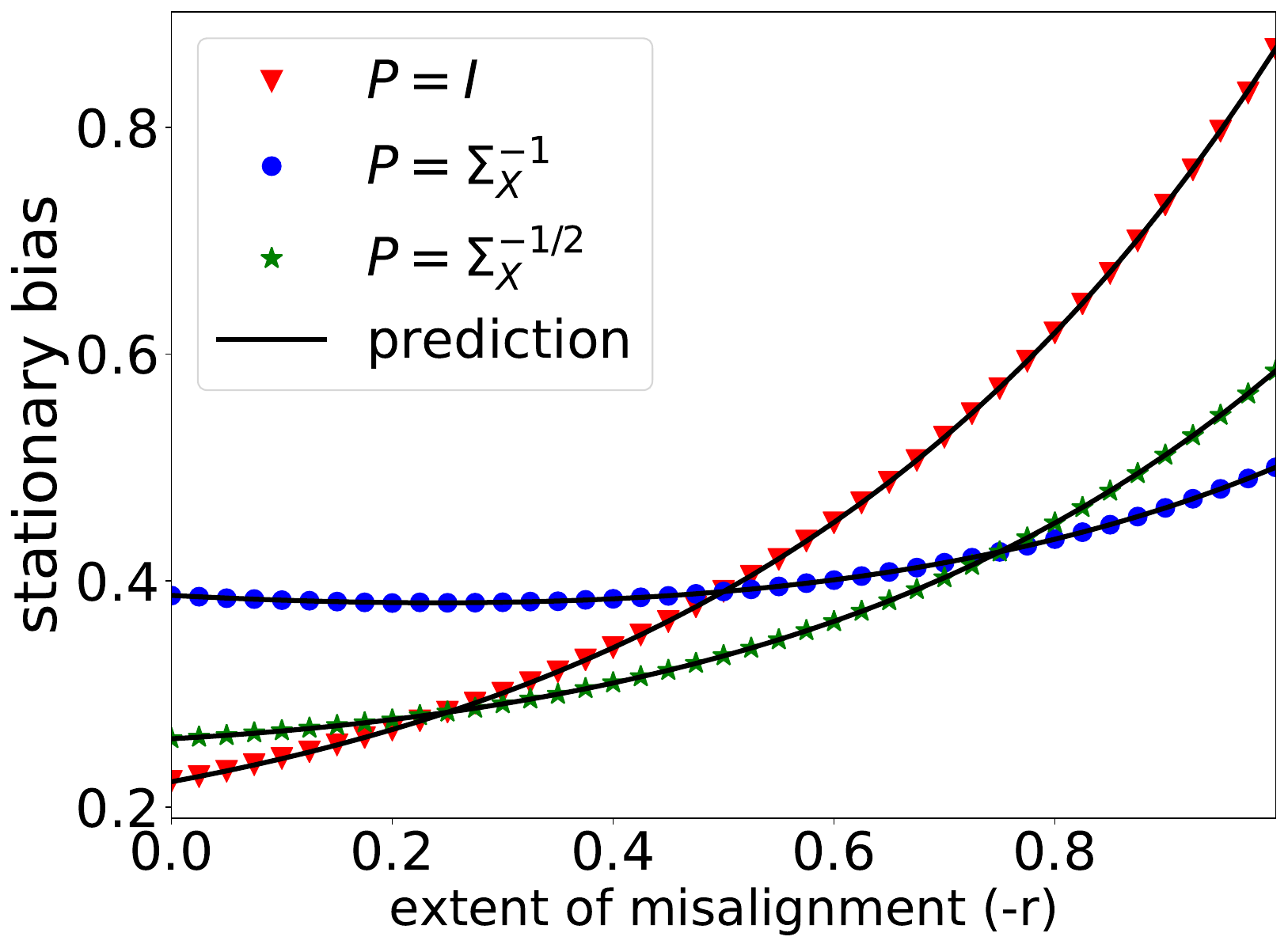}} 
\small (a) stationary risk.
\end{minipage}
\begin{minipage}[t]{0.4\linewidth}
\centering
{\includegraphics[width=0.94\textwidth]{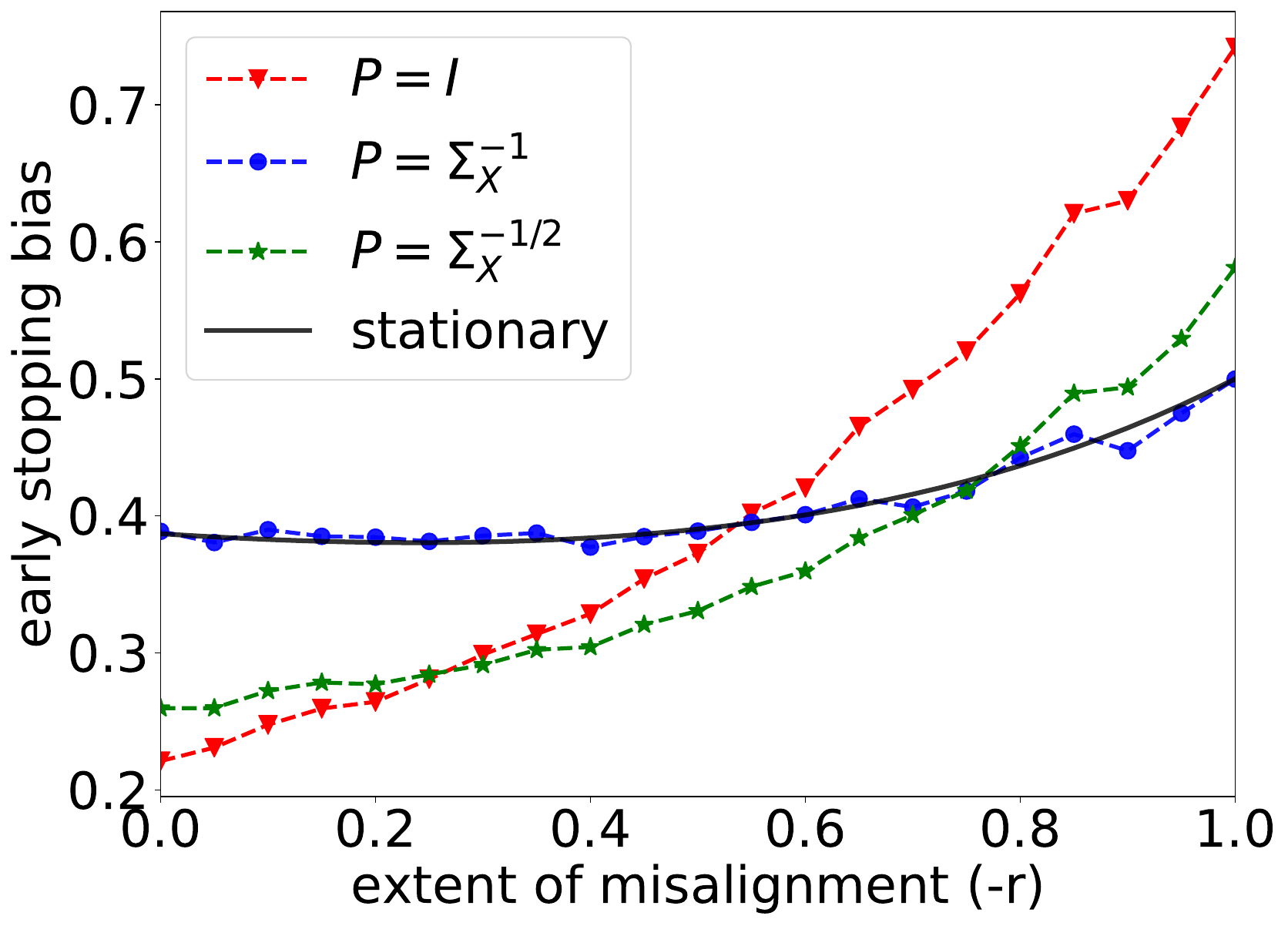}}
\small (b) optimal early stopping risk.
\end{minipage} 
\vspace{-0.15cm}
 \caption{\small Well-specified bias against different extent of ``alignment''. We set $n=300$, eigenvalues of $\bSigma_\bX$ as two point masses with $\kappa_X = 20$, and take $\bSigma_{\bt} = \bSigma_\bX^{r}$ and vary $r$ from -1 to 0. (a) GD achieves lower bias when $\bSigma_{\bt}$ is isotropic, whereas NGD dominates when $\bSigma_\bX=\bSigma_{\bt}^{-1}$; also observe $\bP=\bSigma_\bX^{-1/2}$ (interpolates between GD and NGD) is advantageous in between. (b) optimal early stopping bias follows similar trend as stationary bias.}  
\label{fig:early_stop-1}
% \vspace{-0.1cm}
\end{figure}

\begin{figure}[!htb]
\vspace{-0.1mm} 
\centering
\begin{minipage}[t]{0.4\linewidth}
\centering
{\includegraphics[width=0.9\textwidth]{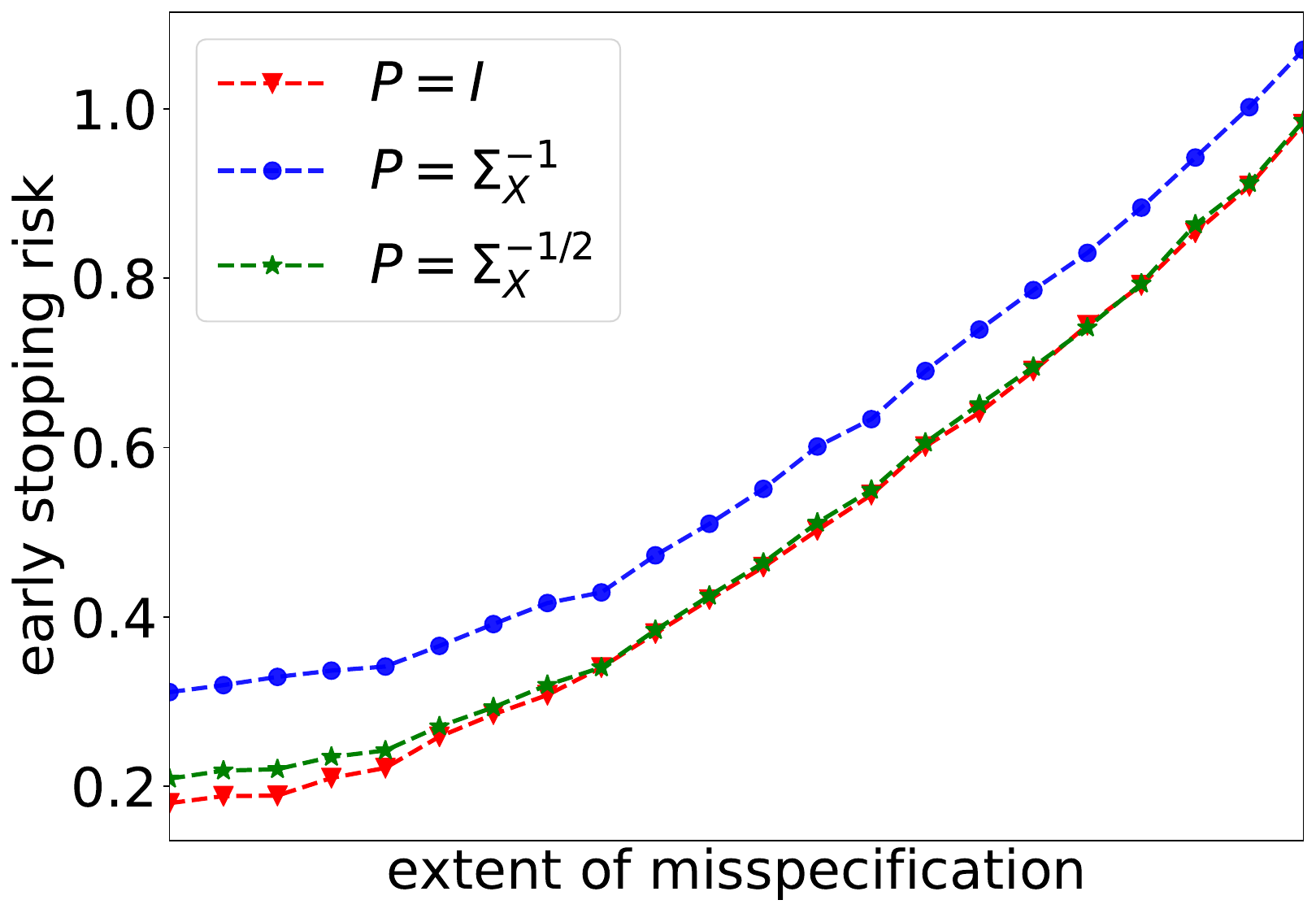}}\\
\small (a) optimal early stopping risk \\
(aligned \& misspecified).
\end{minipage}
\begin{minipage}[t]{0.4\linewidth}
\centering
{\includegraphics[width=0.9\textwidth]{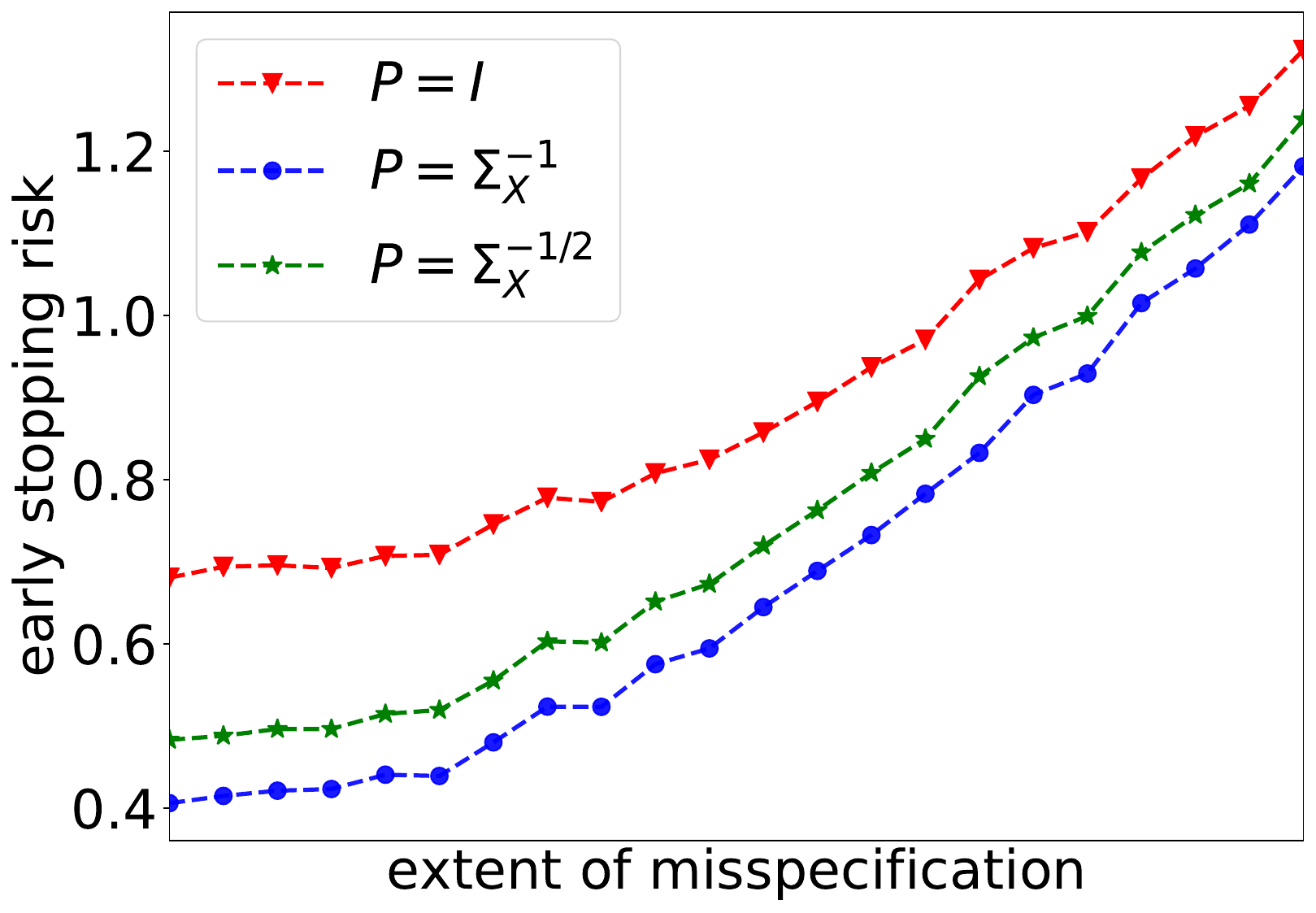}}\\
\small (b) optimal early stopping risk \\
(misaligned \& misspecified).
\end{minipage} 
\vspace{-0.15cm}
 \caption{\small Optimal early stopping risk vs. increasing model misspecification. We follow the same setup as Figure~\ref{fig:ridgeless-1}(c). (a) $\bSigma_{\bt} = \bI_d$ (favors GD); unlike Figure~\ref{fig:ridgeless-1}(c), GD has lower early stopping risk even under large extent of misspecification. (b) $\bSigma_{\bt} = \bSigma_\bX^{-1}$ (favors NGD); NGD is also advantageous under early stopping.} 
\label{fig:early_stop-2}
% \vspace{-0.3cm}
\end{figure} 

}
 
\subsection{RKHS Setting}
\label{subsec:additional_figures_RKHS}
We simulate the optimization in the coordinates of RKHS via a finite-dimensional approximation (using extra unlabeled data). In particular, we consider the teacher model in the form of $f^*(\bx) = \sum_{i=1}^N h_i\mu_i^r \phi_i(\bx)$ for square summable $\{h_i\}_{i=1}^N$, in which $r$ controls the ``difficulty'' of the learning problem. We find $\{\mu_i\}_{i=1}^N$ and $\{\phi_i\}_{i=1}^N$ by solving the eigenfunction problem for some kernel $k$. The student model takes the form of $f(\bx) = \sum_{i=1}^N \frac{a_i}{\sqrt{\mu_i}}\phi_i(\bx)$ and we optimize the coefficients $\{a_i\}_{i=1}^N$ via the preconditioned update \eqref{eq:RKHS-update}. 
We set $n=1000$, $d=5$, $N=2500$ and consider the inverse multiquadratic (IMQ) kernel:  $k(\bx,\by) = \frac{1}{\sqrt{1+\norm{\bx-\by}_2^2}}$. 

Recall that Theorem~\ref{theo:RKHS} suggests that for small $r$, i.e. ``difficult'' problem, the damping coefficient $\lambda$ would need to be small (which makes the update NGD-like), and vice versa. This result is (qualitatively) supported by Figure~\ref{fig:RKHS}, from which we can see that small $\lambda$ is beneficial when $r$ is small, and vice versa. 
We remark that this observed trend is rather fragile and sensitive to various hyperparameters, and we leave a comprehensive characterization of this observation as future work.

\begin{figure}[!htb]
% \vspace{-10mm} 
\centering
\begin{minipage}[t]{0.4\linewidth}
\centering
{\includegraphics[width=0.91\textwidth]{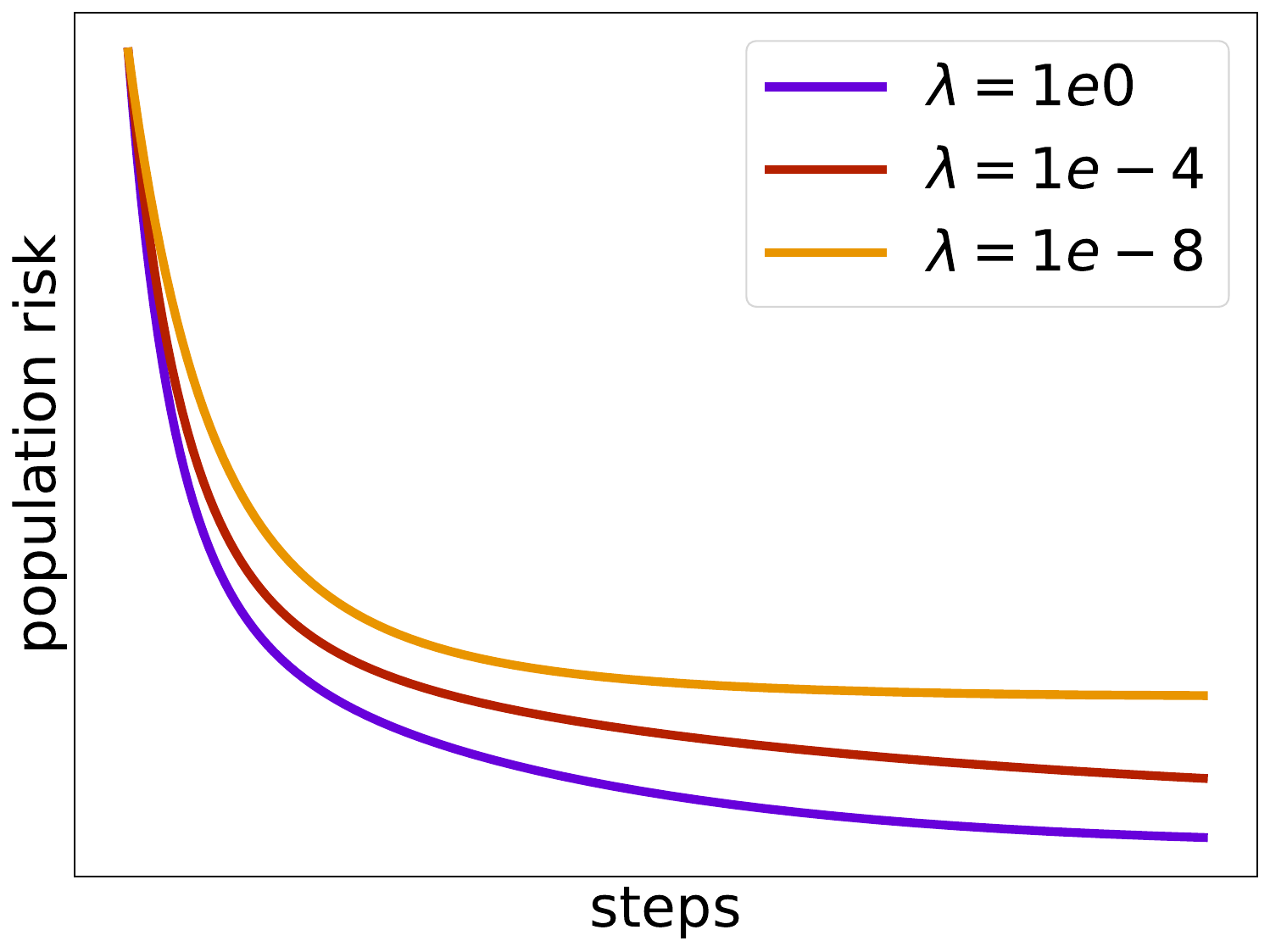}} \\ \vspace{-0.10cm}
\small (a) $r=3/4$.
\end{minipage}
\begin{minipage}[t]{0.4\linewidth}
\centering
{\includegraphics[width=0.91\textwidth]{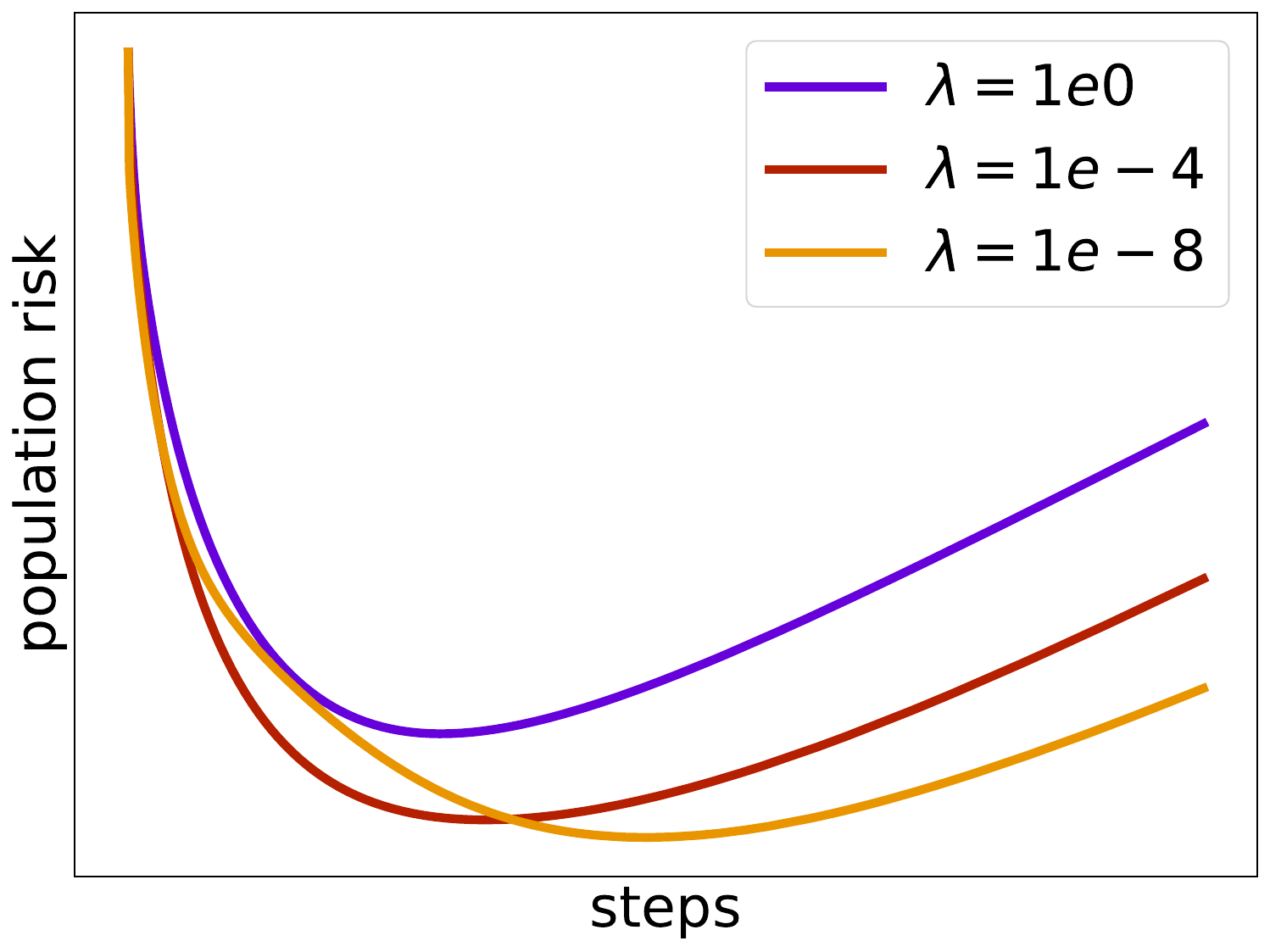}} \\ \vspace{-0.10cm} 
\small (b) $r=1/4$.
\end{minipage} 
\vspace{-0.1cm}
 \caption{\small Population risk of the preconditioned update in RKHS that interpolates between GD and NGD. We use the IMQ kernel and set $n=1000$, $d=5$, $N=2500$, $\sigma^2=5\times 10^{-4}$. 
 The x-axis has been rescaled for each curve and thus convergence speed is not directly comparable. Note that (a) large $\lambda$ (i.e., GD-like update) is beneficial when $r$ is large, and (b) small $\lambda$ (i.e., NGD-like update) is beneficial when $r$ is small.
 }
\label{fig:RKHS}
% \vspace{-0.1cm}
\end{figure}

\subsection{Neural Network}
\label{subsec:additional_figures_NN}
\paragraph{Label Noise.} In Figure~\ref{fig:additional_1_neural_networks}, (a) we observe the same phenomenon on CIFAR-10 that NGD generalizes better as more label noise is added to the dataset Figure~\ref{fig:additional_1_neural_networks} (b) shows that in all cases with varying amounts of label noise, the early stopping risk is worse than that of GD. This agrees with the observation in Section~\ref{sec:bias-variance} and Figure~\ref{fig:early_stop-2}(a) 
that early stopping can potentially favor GD due to reduced variance.

\vspace{-0.3cm} 
\paragraph{Misalignment.} In Figure~\ref{fig:additional_1_neural_networks}(c)(d) we confirm the finding in Proposition~\ref{prop:early-stop} and Figure~\ref{fig:early_stop-1}(b) in neural networks under synthetic data: we consider 50-dimensional Gaussian input, and both the teacher and the student model are two-layer ReLU networks with 50 hidden units. We construct the teacher by perturbing the initialization of the student as described in Section~\ref{sec:experiment}. As $r$ approaches -1 (problem more ``misaligned''), NGD achieves lower early stopping risk (Figure~\ref{fig:additional_1_neural_networks}(d)), whereas GD dominates the early stopping risk in less misaligned setting (~\ref{fig:additional_1_neural_networks}(c)). 
We note that this phenomenon is difficult to observe in practical neural network training on real-world data, which may be partially due to the fragility of the analogy between neural nets and linear models, especially under NGD (discussed in Appendix~\ref{subsec:implicit_bias_appendix}).

\begin{figure}[!htb] 
\centering
\vspace{-0.5mm}
\begin{minipage}[t]{0.24\linewidth}
\centering
{\includegraphics[width=1.03\textwidth]{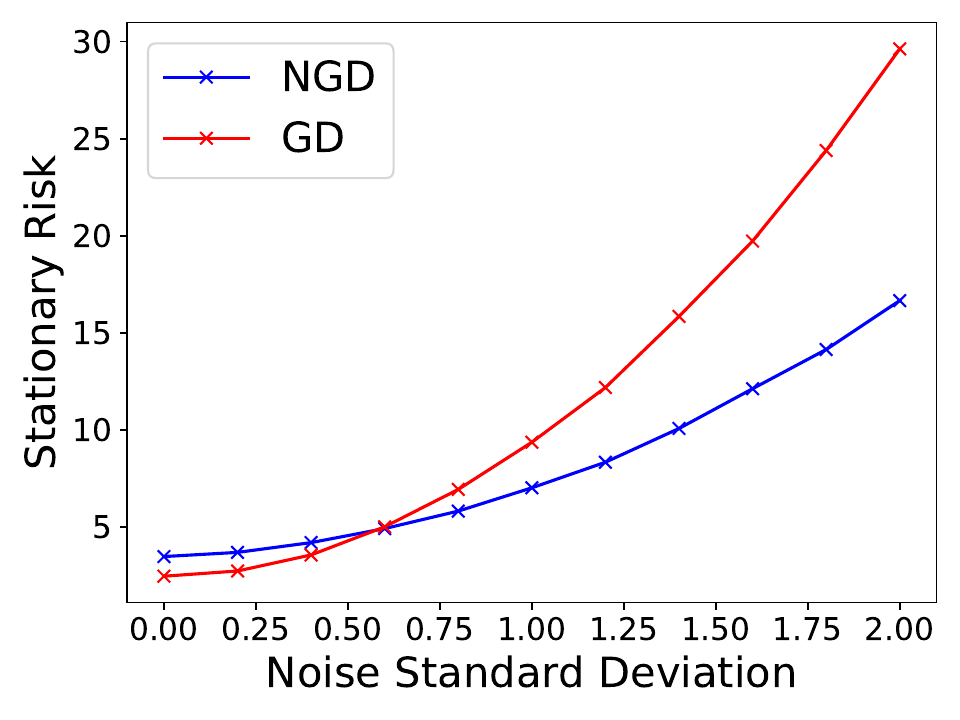}} \\
\small (a) stationary risk (CIFAR).
\end{minipage}
\begin{minipage}[t]{0.24\linewidth}
\centering
{\includegraphics[width=1.03\textwidth]{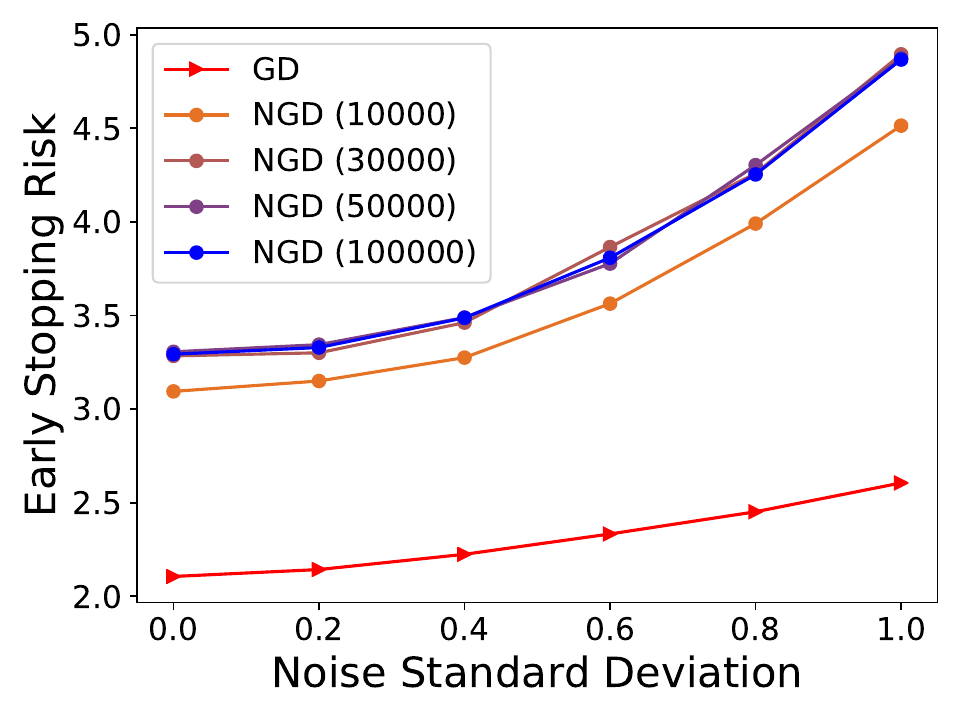}} \\
\small (b) optimal early stopping risk (CIFAR).
\end{minipage}
\begin{minipage}[t]{0.24\linewidth}
\centering
{\includegraphics[width=0.99\textwidth]{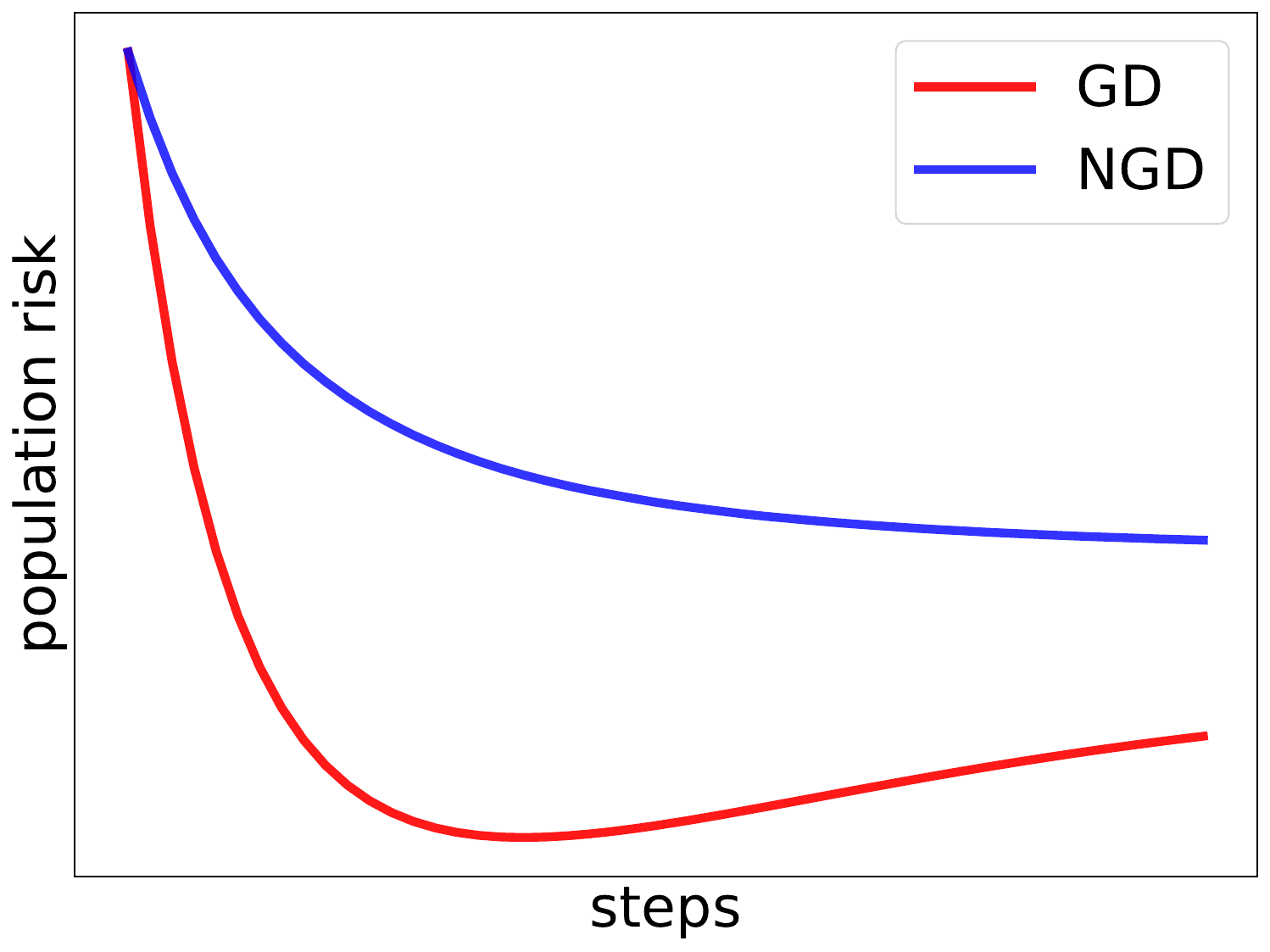}} \\
\small (c) $r=-1/2$ (synthetic).
\end{minipage}
\begin{minipage}[t]{0.24\linewidth}
\centering
{\includegraphics[width=0.99\textwidth]{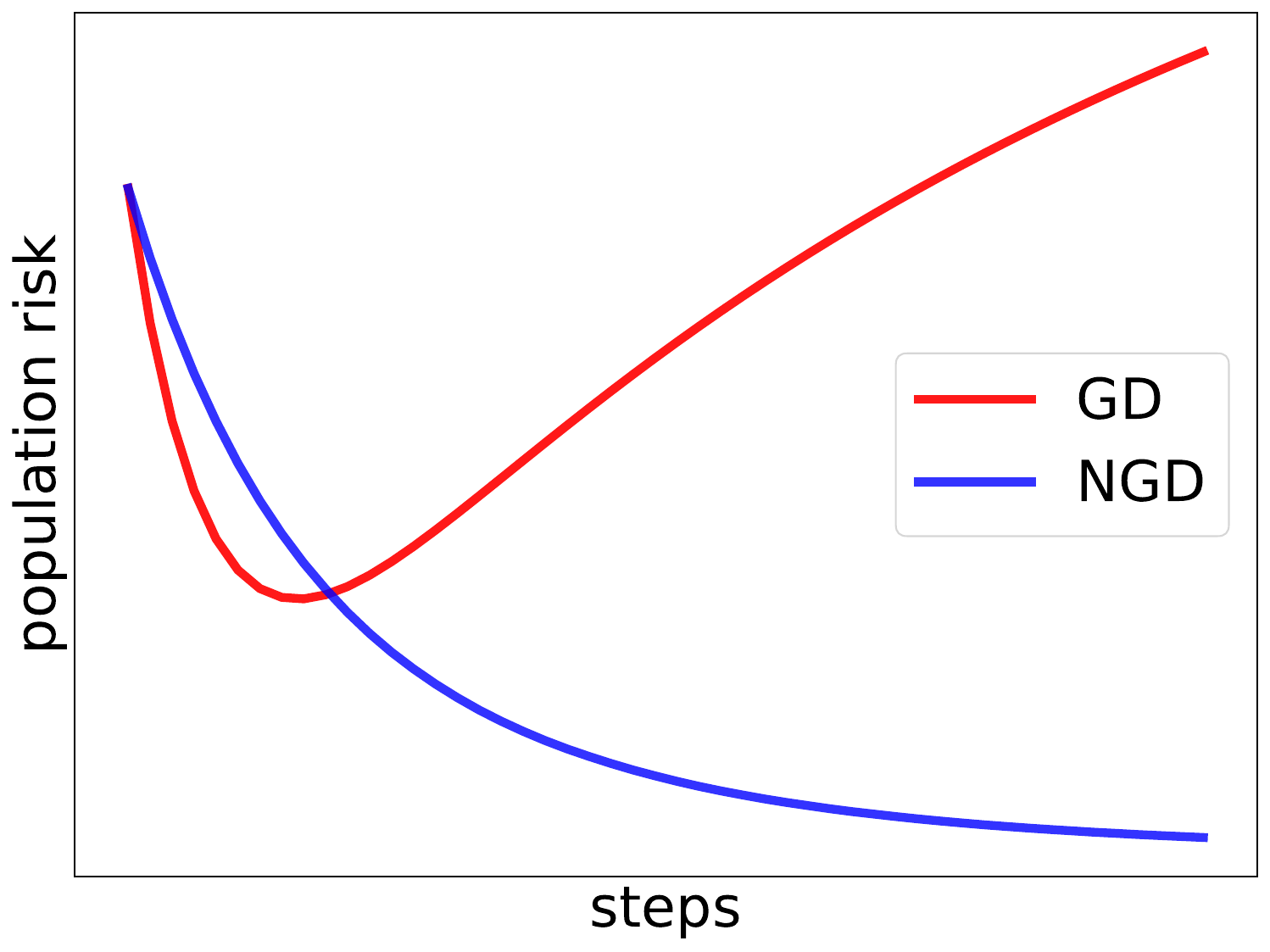}} \\
\small (d) $r=-3/4$ (synthetic).
\end{minipage}
\vspace{-0.15cm}
\caption{\small (a)(b) Additional label noise experiment on CIFAR-10.
(c)(d) Population risk of two-layer neural networks in the misalignment setup (noiseless) with synthetic Gaussian data. We set $n=200$, $d=50$, the damping coefficient $\lambda=10^{-6}$, and both the student and the teacher are two-layer ReLU networks with 50 hidden units. The x-axis and the learning rate have been rescaled for each curve (i.e., optimization speed not comparable). When $r$ is sufficiently small, NGD achieves lower early stopping risk, whereas GD is beneficial for larger $r$.
}
\label{fig:additional_1_neural_networks}
% \vspace{-0.3cm}
\end{figure}

\bigskip

}

% \newpage
\section{Proofs and Derivations}
\label{sec:proof}
{
\allowdisplaybreaks

\subsection{Missing Derivations in Section~\ref{sec:risk}}
\label{subsec:proof_derivations}
\paragraph{Gradient Flow of Preconditioned Updates.} 
Given positive definite $\bP$ and $\gamma>1$, it is clear that the gradient flow solution at time $t$ can be written as
\begin{align*}
    \bt_\bP(t) = \bP\bX^\top\left[\bI_n-\exp\left(-\frac{t}{n}\bX \bP\bX^\top\right) \right](\bX\bP\bX^\top)^{-1}\by.
\end{align*}
Taking $t\to\infty$ yields the stationary solution $\hbt_\bP = \bP\bX^\top(\bX\bP\bX^\top)^{-1}\by$. We remark that the damped inverse of the sample Fisher $\bP=(\bX\bX^\top + \lambda\bI_d)^{-1}$ leads to the same minimum-norm solution as GD $\hbt_\bI = \bX^\top(\bX\bX^\top)^{-1}\by$ since $\bP\bX^\top$ and $\bX$ share the same eigenvectors.
On the other hand, when $\bP$ is the pseudo-inverse of the sample Fisher $(\bX\bX^\top)^\dagger$ which is not full-rank, the trajectory can be obtained via the variation of constants formula:
\begin{align*}
    \bt(t) = \left[ \frac{t}{n} \sum_{k=0}^{\infty} \frac{1}{(k+1)!} \left(-\frac{t}{n} \bX^\top(\bX\bX^\top)^{-1}\bX \right)^k\right] \bX^\top(\bX\bX^\top)^{-1} \by, 
\end{align*}
for which taking the large $t$ limit also yields the minimum-norm solution $\bX^\top(\bX\bX^\top)^{-1}\by$.

\paragraph{Minimum $\norm{\bt}_{\bP^{-1}}$ Norm Interpolant.}
For positive definite $\bP$ and the corresponding stationary solution $\hbt_\bP = \bP\bX^\top(\bX\bP\bX^\top)^{-1}\by$, note that given any other interpolant $\hbt'$, we have $(\hbt_\bP-\hbt')\bP^{-1}\hbt_\bP = 0$ because both $\hbt_\bP$ and $\hbt'$ achieves zero empirical risk. 
Therefore, $\|\hbt'\|_{\bP^{-1}}^2 - \|\hbt_\bP\|_{\bP^{-1}}^2 =  \|\hbt'-\hbt_\bP\|_{\bP^{-1}}^2 \ge 0$. This confirms that $\hbt_{\bP}$ is the unique minimum $\norm{\bt}_{\bP^{-1}}$ norm solution. 

\subsection{Proof of Theorem~\ref{theo:variance}}
\label{subsec:proof_variance}
\begin{proof}
By the definition of the variance term and the stationary $\hbt$,
\begin{align*}
    V(\hbt) = \Tr{\Cov(\hbt)\bSigma_\bX} 
=
    \sigma^2\Tr{\bP\bX^\top(\bX \bP\bX^\top)^{-2}\bX \bP\bSigma_\bX}.
\end{align*}
Write $\bar{\bX} = \bX\bP^{1/2}$. Similarly, we define $\bSigma_{\bX\bP} = \bP^{1/2}\bSigma_\bX\bP^{1/2}$. The equation above thus simplifies to
\begin{align*}
    V(\hbt_\bP) 
=
    \sigma^2\Tr{\bar{\bX}^\top(\bar{\bX} \bar{\bX}^\top)^{-2}\bar{\bX} \bSigma_{\bX\bP}}.
\end{align*}
 
The analytic expression of the variance term follows from a direct application of cite[Thorem 4]{hastie2019surprises}, in which conditions on the population covariance are satisfied by (A2).

Taking the derivative of $m(-\lambda)$ yields
\begin{align*}
    m'(-\lambda) = \LL(\frac{1}{m^2(-\lambda)} - 
    \gamma\int \frac{\tau^2}{(1+\tau m(-\lambda))^2}\mathrm{d}\boldH_{\bX\bP}(\tau)\RR)^{-1}.
\end{align*}
Plugging the quantity into the expression of the variance (omitting the scaling $\sigma^2$ and constant shift),
\begin{align*}
    \frac{m'(-\lambda)}{m^2(-\lambda)} 
=
   \LL(1 - \gamma m^2(-\lambda) \int\frac{\tau^2}{(1+\tau m(-\lambda))^2}\mathrm{d}\boldH_{\bX\bP}(\tau)\RR)^{-1}.
\end{align*}
From the monotonicity of $\frac{x}{1+x}$ on $x>0$ or the Jensen's inequality we know that
\begin{align*}
    1 - \gamma\int\LL(\frac{\tau m(-\lambda)}{1+\tau m(-\lambda)}\RR)^2\mathrm{d}\boldH_{\bX\bP}(\tau)
\le
    1 - \gamma\LL(\int\frac{\tau m(-\lambda)}{1+\tau m(-\lambda)}\mathrm{d}\boldH_{\bX\bP}(\tau)\RR)^2.
\end{align*}
Taking $\lambda\to 0$ and omitting the scalar $\sigma^2$, the RHS evaluates to $1 - 1/\gamma$. We thus arrive at the lower bound $V\ge(\gamma-1)^{-1}$. Note that the equality is only achieved when $\boldH_{\bX\bP}$ is a point mass, i.e.~$\bP = \bSigma_\bX^{-1}$. In other words, the minimum variance is achieved by NGD. As a verification, the variance of the NGD solution $\hbt_{\boldF^{-1}}$ agrees with the calculation of the isotropic case in \cite[A.3]{hastie2019surprises}.
\end{proof}

\subsection{Proof of Corollary~\ref{coro:misspecify}}
\label{subsec:proof_misspecified}
\begin{proof} 
Via calculation similar to \cite[Section 5]{hastie2019surprises}, the bias can be decomposed as 
\begin{align*}
    \E\LL[B(\hbt_\bP)\RR] =& \E_{\bx,\hat{\bx},\btheta^*,\btheta^c}\LL[\LL(\bx^\top\bP\bX^\top\LL(\bX\bP\bX^\top\RR)^{-1}(\bX\btheta^* + \bX^c\btheta^c) - (\bx^\top\btheta^* + \hat{\bx}^\top\btheta^c)\RR)^2\RR] \\
\overset{(i)}{=}&
    \E_{\bx,\btheta^*}\LL[\LL(\bx^\top\bP\bX^\top\LL(\bX\bP\bX^\top\RR)^{-1}\bX\btheta^* - \bx^\top\btheta^*\RR)^2\RR] + \E_{\bx^c,\btheta^x}\LL[(\hat{\bx}^\top\btheta^c)^2\RR]\\
&+
    \E_{\bx,\btheta^c}\LL[\LL(\bx^\top\bP\bX^\top\LL(\bX\bP\bX^\top\RR)^{-1}\bX^c\btheta^c{\btheta^c}^\top{\bX^c}^\top\LL(\bX\bP\bX^\top\RR)^{-1}\bX\bP\bx\RR)^2\RR] \\
\overset{(ii)}{\to}&
    B_{\btheta}(\hbt_\bP) + \frac{1}{d^c}\Tr{\bSigma_\bX^c\bSigma_{\btheta}^c}(1 + V(\hbt_\bP)),
\end{align*}
where we used the independence of $\bx,\hat{\bx}$ and $\btheta^*,\btheta^c$ in (i), and (A1-3) as well as the definition of the well-specified bias $B_{\btheta}(\hbt_\bP)$ and variance $V(\hbt_\bP)$ in (ii).
\end{proof} 

\subsection{Proof of Theorem~\ref{theo:bias}}
\label{subsec:proof_bias}
\begin{proof}
By the definition of the bias term (note that $\bSigma_\bX$, $\bSigma_\btheta$, $\bP$ are all positive semi-definite),
\begin{align*}
    B(\hbt_\bP)
&=
    \E_{\btheta^*} \LL[\norm{\bP\bX^\top(\bX\bP\bX^\top)^{-1}\bX\bt_* - \btheta^*}_{\bSigma_\bX}^2\RR] \\
&=
    \frac{1}{d}\Tr{\bSigma_\btheta\LL(\bI_d - \bP\bX^\top(\bX\bP\bX^\top)^{-1}\bX \RR)^\top\bSigma_\bX\LL(\bI_d - \bP\bX^\top(\bX \bP\bX^\top)^{-1}\bX\RR)} \\
&\overset{(i)}{=}
    \frac{1}{d}\Tr{\bSigma_{\btheta/\bP}\LL(\bI_d - \bar{\bX}^\top(\bar{\bX}\bar{\bX}^\top)^{-1}\bar{\bX} \RR)^\top\bSigma_{\bX\bP}\LL(\bI_d - \bar{\bX}^\top(\bar{\bX}\bar{\bX}^\top)^{-1}\bar{\bX}\RR)} \\
&\overset{(ii)}{=}
    \lim_{\lambda\to 0_+} \frac{\lambda^2}{d} \Tr{\bSigma_{\btheta/\bP}\LL(\frac{1}{n}\bar{\bX}^\top\bar{\bX} + \lambda\bI_d \RR)^{-1}\bSigma_{\bX\bP}\LL(\frac{1}{n}\bar{\bX}^\top\bar{\bX} + \lambda\bI_d \RR)^{-1}} \\
&\overset{(iii)}{=}
    \lim_{\lambda\to 0_+} \frac{\lambda^2}{d} \Tr{\LL(\frac{1}{n}\hat{\bX}^\top\hat{\bX} + \lambda\bSigma_{\btheta/\bP}^{-1}\RR)^{-2}\bSigma_{\btheta/\bP}^{-1/2}\bSigma_{\bX\bP}\bSigma_{\btheta/\bP}^{-1/2}},
    % \numberthis
    % \label{eq:bias_derivation}
\end{align*}  
where we utilized (A3) and defined $\bar{\bX} = \bX\bP^{1/2}$, $\bSigma_{\bX\bP} = \bP^{1/2}\bSigma_\bX \bP^{1/2}$, $\bSigma_{\btheta/\bP} = \bP^{-1/2}\bSigma_\btheta \bP^{-1/2}$ in (i), applied the equality $(\boldA\boldA^\top)^\dagger\boldA = \lim_{\lambda\to 0} (\boldA^\top\boldA+\lambda\bI)^{-1}\boldA$ in (ii), and defined $\hat{\bX} = \bX\bP^{1/2}\bSigma_{\theta}^{-1/2}$ in (iii). To proceed, we first assume that $\bSigma_{\btheta/\bP}$ is invertible (i.e.~$\lambda_{\min}(\bSigma_{\btheta/\bP})$ is bounded away from 0) and observe the following relation via a leave-one-out argument similar to that in \cite{xu2019many},  
\begin{align*}
    &\frac{1}{d} \Tr{\frac{1}{n}\hat{\bX}^\top\hat{\bX}\LL(\frac{1}{n}\hat{\bX}^\top\hat{\bX} + \lambda\bSigma_{\btheta/\bP}^{-1}\RR)^{-2}}
    \numberthis
    \label{eq:intermediate}\\
\overset{(i)}{=}&
    \frac{1}{d}\sum_{i=1}^{n} \frac{\frac{1}{n}\hat{\bx}_i^\top \LL(\frac{1}{n}\hat{\bX}^\top\hat{\bX} + \lambda\bSigma_{\btheta/\bP}^{-1}\RR)_{\neg i}^{-2}\hat{\bx}_i}{\LL(1 + \frac{1}{n}\hat{\bx}_i^\top\LL(\frac{1}{n}\hat{\bX}^\top\hat{\bX} + \lambda\bSigma_{\btheta/\bP}^{-1}\RR)_{\neg i}^{-1}\hat{\bx}_i\RR)^2} \\
\overset{(ii)}{\underset{p}{\to}}&
    \frac{\frac{1}{d}\Tr{\LL(\frac{1}{n}\hat{\bX}^\top\hat{\bX} + \lambda\bSigma_{\btheta/\bP}^{-1}\RR)^{-2}\bSigma_{\btheta/\bP}^{-1/2}\bSigma_{\bX\bP}\bSigma_{\btheta/\bP}^{-1/2}}}{\LL(1 + \frac{1}{n}\Tr{\LL(\frac{1}{n}\bar{\bX}^\top\bar{\bX} + \lambda\bI_d\RR)^{-1}\bSigma_{\bX\bP}}\RR)^2},
\label{eq:leave-one-out}
\numberthis
\end{align*} 
where (i) is due to the Woodbury identity and we defined  $\LL(\frac{1}{n}\hat{\bX}^\top\hat{\bX} + \lambda\bSigma_{\btheta/\bP}^{-1}\RR)_{\neg i} = \frac{1}{n}\hat{\bX}^\top\hat{\bX} - \frac{1}{n}\hat{\bx}_i\hat{\bx}_i^\top + \lambda\bSigma_{\btheta/\bP}^{-1}$ which is independent to $\hat{\bx}_i$ (see \cite[Eq. 58]{xu2019many} for details), and in (ii) we used (A3), the convergence to trace \cite[Lemma 2.1]{ledoit2011eigenvectors} and its stability under low-rank perturbation (e.g., see \cite[Eq. 18]{ledoit2011eigenvectors}) which we elaborate below. In particular, denote $\hat{\bSigma} = \frac{1}{n}\hat{\bX}^\top\hat{\bX} + \lambda\bSigma_{\btheta/\bP}^{-1}$, for the denominator we have
\begin{align*}
    &\sup_i \LL|\frac{\lambda}{n}\Tr{\hat{\bSigma}^{-1}\bSigma_{\btheta/\bP}^{-1/2}\bSigma_{\bX\bP}\bSigma_{\btheta/\bP}^{-1/2}} - \frac{\lambda}{n}\Tr{\hat{\bSigma}_{\neg i}^{-1}\bSigma_{\btheta/\bP}^{-1/2}\bSigma_{\bX\bP}\bSigma_{\btheta/\bP}^{-1/2}}\RR| \\
\le&
    \frac{\lambda}{n}\norm{\bSigma_{\btheta/\bP}^{-1/2}\bSigma_{\bX\bP}\bSigma_{\btheta/\bP}^{-1/2}}_2 \sup_i\LL|\Tr{\hat{\bSigma}^{-1}\LL(\hat{\bSigma} - \hat{\bSigma}_{\neg i}\RR)\hat{\bSigma}_{\neg i}^{-1}}\RR| \\
\le&
    \frac{\lambda}{n}\norm{\bSigma_{\btheta/\bP}^{-1/2}\bSigma_{\bX\bP}\bSigma_{\btheta/\bP}^{-1/2}}_2 \norm{\hat{\bSigma}^{-1}}_2 \sup_i \norm{\hat{\bSigma}_{\neg i}^{-1}}_2 \Tr{\hat{\bSigma} - \hat{\bSigma}_{\neg i}}
\overset{(i)}{\to} O_p\LL(\frac{1}{n}\RR),
\end{align*}
where (i) is due to the definition of $\hat{\bSigma}_{\neg i}$ and (A1)(A3).
The result on the numerator can be obtained via a similar calculation, the details of which we omit.

Note that the denominator can be evaluated by previous results (e.g. \cite[Theorem 2.1]{dobriban2018high}) as follows,
\begin{align*}
    \frac{1}{n}\Tr{\LL(\frac{1}{n}\bar{\bX}^\top\bar{\bX} + \lambda\bI_d\RR)^{-1}\bSigma_{\bX\bP}}
\overset{a.s.}{\to}
    \frac{1}{\lambda m(-\lambda)} - 1.
\label{eq:dobriban}
\numberthis
\end{align*}

On the other hand, following the same derivation as \cite{dobriban2018high,hastie2019surprises}, \eqref{eq:intermediate} can be decomposed as
\begin{align*}
    &\frac{1}{d}\Tr{\frac{1}{n}\hat{\bX}^\top\hat{\bX}\LL(\frac{1}{n}\hat{\bX}^\top\hat{\bX} + \lambda\bSigma_{\btheta/\bP}^{-1}\RR)^{-2}} \\
=&
    \frac{1}{d}\Tr{\LL(\frac{1}{n}\bar{\bX}^\top\bar{\bX} + \lambda\bI_d\RR)^{-1}\bSigma_{\btheta/\bP}} - \frac{\lambda}{d} \Tr{\LL(\frac{1}{n}\bar{\bX}^\top\bar{\bX} + \lambda\bI_d\RR)^{-2}\bSigma_{\btheta/\bP}} \\
=&
    \frac{1}{d}\Tr{\LL(\frac{1}{n}\bar{\bX}^\top\bar{\bX} + \lambda\bI_d\RR)^{-1}\bSigma_{\btheta/\bP}} + \frac{\lambda}{d}\frac{\mathrm{d}}{\mathrm{d}\lambda}\Tr{\LL(\frac{1}{n}\bar{\bX}^\top\bar{\bX} + \lambda\bI_d\RR)^{-1}\bSigma_{\btheta/\bP}}. 
% \overset{(i)}{\to}&
%     \int \frac{g(\tau)}{1 + \tau m(-\lambda)}\mathrm{d}F_{\bX\bP} +  , 
\label{eq:intermetidate2}
\numberthis
\end{align*}
% \DW{TBD} 

We employ \cite[Theorem 1]{rubio2011spectral} to characterize \eqref{eq:intermetidate2}. In particular, For any deterministic sequence of matrices $\boldTheta_n\in\R^{d\times d}$ with finite trace norm, as $n,d\to\infty$ we have
\begin{align*}
    \Tr{\boldTheta_n\LL(\frac{1}{n}\bar{\bX}^\top\bar{\bX} -z\bI_d\RR)^{-1} - \boldTheta_n \LL(c_n(z) \bSigma_{\bX\bP} - z\bI_d\RR)^{-1}}\overset{a.s.}{\to} 0,
\end{align*}
in which $c_n(z)\to -z m(z)$ for $z\in\mathbb{C} \backslash \R^+$ and $m(z)$ is defined in Theorem~\ref{theo:variance} due to the dominated convergence theorem. By (A3) we are allowed to take $\boldTheta_n=\frac{1}{d}\bSigma_{\btheta/\bP}$. Thus we have
\begin{align*}
    \frac{\lambda}{d}\Tr{\bSigma_{\btheta/\bP}\LL(\frac{1}{n}\bar{\bX}^\top\bar{\bX} + \lambda\bI_d\RR)^{-1}} 
\to& 
    \frac{\lambda}{d}\Tr{\bSigma_{\btheta/\bP} \LL(\lambda m(-\lambda) \bSigma_{\bX\bP} +\lambda\bI_d\RR)^{-1}} \\
\overset{(i)}{=}&
    \E\LL[\frac{\upsilon_x\upsilon_\theta\upsilon_{xp}^{-1}}{1 + m(-\lambda)\upsilon_{xp}}\RR], \quad \forall \lambda > -c_l,
\numberthis
\label{eq:bias-term1}
\end{align*}
in which (i) is due to (A3), the fact that the LHS is almost surely bounded for $\lambda > -c_l$, where $c_l$ is the lowest non-zero eigenvalue of $\frac{1}{n}\bar{\bX}^\top\bar{\bX}$, and the application of the dominated convergence theorem. 
Differentiating \eqref{eq:bias-term1} (note that the derivative is also bounded on $\lambda > -c_l$) yields
\begin{align*}
    \frac{\lambda}{d}\frac{\mathrm{d}}{\mathrm{d}\lambda}\Tr{\LL(\frac{1}{n}\bar{\bX}^\top\bar{\bX} + \lambda\bI_d\RR)^{-1}\bSigma_{\btheta/\bP}}
\to
    \E\LL[\frac{\upsilon_x\upsilon_\theta\upsilon_{xp}^{-1}}{\lambda(1 + m(-\lambda)\upsilon_{xp})}-\frac{m'(-\lambda)\upsilon_x\upsilon_{\theta}}{(1+m(-\lambda)\upsilon_{xp})^2}\RR].
\numberthis
\label{eq:bias-term2} 
\end{align*}

Note that the numerator of \eqref{eq:leave-one-out} is the quantity of interest.
Combining \eqref{eq:intermediate} \eqref{eq:leave-one-out} \eqref{eq:dobriban} \eqref{eq:intermetidate2} \eqref{eq:bias-term1} \eqref{eq:bias-term2} and taking $\lambda\to 0$ yields the formula of the bias term. 
Finally, when $\bSigma_{\btheta/\bP}$ is not invertible, observe that if we increment all eigenvalues by some small $\epsilon>0$ to ensure invertibility $\bSigma_{\btheta/\bP}^\epsilon = \bSigma_{\btheta/\bP} + \epsilon\boldI_d$, \eqref{eq:bias} is bounded and also decreasing w.r.t.~$\epsilon$. Thus by the dominated convergence theorem we take $\epsilon\to 0$ and obtain the desired result.  
We remark that similar (but less general) characterization can also be derived based on \cite[Theorem 1.2]{ledoit2011eigenvectors} when the eigenvalues of $\bSigma_{\bX\bP}$ and $\bSigma_{\btheta/\bP}$ exhibit certain relations.

To show that $\bP = \boldU\diag{(\bolde_{\theta})}\boldU^\top$ achieves the lowest bias, first note that under the definition of random variables in (A3), our claimed optimal preconditioner is equivalent to $\upsilon_{xp}\overset{a.s.}{=}\upsilon_x\upsilon_\theta$. We therefore define an interpolation $\upsilon_\alpha = \alpha\upsilon_x\upsilon_\theta  + (1-\alpha)\bar{\upsilon}$ for some $\bar{\upsilon}$ and write the corresponding Stieltjes transform as $m_\alpha(-\lambda)$ and the bias term as $B_\alpha$. We aim to show that $\argmin_{\alpha\in[0,1]} B_\alpha = 1$. 

For notational convenience define $g_\alpha\triangleq m_\alpha(0) \upsilon_x\upsilon_\theta$ and $h_\alpha\triangleq m_\alpha(0) \upsilon_\alpha$. One can check that 
\begin{align*}
    B_\alpha = \E\LL[\frac{\upsilon_x\upsilon_{\theta}}{(1+h_\alpha)^2}\RR]\E\LL[\frac{h_\alpha}{(1+h_\alpha)^2}\RR]^{-1}; \quad 
    \frac{\mathrm{d}m_\alpha(-\lambda)}{\mathrm{d}\alpha} \Big|_{\lambda\to0} = \frac{m_\alpha(0)\E\LL[\frac{h_\alpha-g_\alpha}{(1+h_\alpha)^2}\RR]}{(1-\alpha)\E\LL[\frac{h_\alpha}{(1+h_\alpha)^2}\RR]}.
\end{align*}
We now verify that the derivative of $B_\alpha$ w.r.t.~$\alpha$ is non-positive for $\alpha\in[0,1]$. A standard simplification of the derivative yields
\begin{align*}
    \frac{\mathrm{d}B_\alpha}{\mathrm{d}\alpha}
\propto&
    -2\E\LL[\frac{(g_\alpha-h_\alpha)^2}{(1+h_\alpha)^3}\RR]\LL(\E\LL[\frac{h_\alpha}{(1+h_\alpha)^2}\RR]\RR)^2
    -2\LL(\E\LL[\frac{g_\alpha-h_\alpha}{(1+h_\alpha)^2}\RR]\RR)^2\E\LL[\frac{h_\alpha^2}{(1+h_\alpha)^3}\RR] \\
&+
    4\E\LL[\frac{h_\alpha(g_\alpha-h_\alpha)}{(1+h_\alpha)^3}\RR]\E\LL[\frac{g_\alpha-h_\alpha}{(1+h_\alpha)^2}\RR]\E\LL[\frac{h_\alpha}{(1+h_\alpha)^2}\RR] \\
\overset{(i)}{\le}& 
    -4\sqrt{\E\LL[\frac{(g_\alpha-h_\alpha)^2}{(1+h_\alpha)^3}\RR]\E\LL[\frac{h_\alpha^2}{(1+h_\alpha)^3}\RR]\LL(\E\LL[\frac{g_\alpha-h_\alpha}{(1+h_\alpha)^2}\RR]\RR)^2\LL(\E\LL[\frac{h_\alpha}{(1+h_\alpha)^2}\RR]\RR)^2} \\
&+
    4\E\LL[\frac{h_\alpha(g_\alpha-h_\alpha)}{(1+h_\alpha)^3}\RR]\E\LL[\frac{g_\alpha-h_\alpha}{(1+h_\alpha)^2}\RR]\E\LL[\frac{h_\alpha}{(1+h_\alpha)^2}\RR] 
\overset{(ii)}{\le} 0, 
\end{align*}
where (i) is due to AM-GM and (ii) due to Cauchy-Schwarz on the first term. Note that the two equalities hold when $g_\alpha=h_\alpha$, from which one can easily deduce that the optimum is achieved when $\upsilon_{xp}\overset{a.s.}{=}\upsilon_{x}\upsilon_{\theta}$, and thus we know that the proposed $\bP$ is the optimal preconditioner that is codiagonazable with $\bSigma_\bX$.
\end{proof}

\subsection{Proof of Proposition~\ref{prop:source_condition}}
\label{subsec:proof_source_condition}
\begin{proof}
Since $\bSigma_{\bt} = \bSigma_\bX^{r}$, we can simplify the expressions by defining $\upsilon_x\triangleq h$ and thus $\upsilon_\theta = h^{r}$. From Theorem~\ref{theo:bias} we have the following derivation of the GD bias under (A1)(A3),
\begin{align*}
    B(\hbt_\bI) \to \frac{m_1'}{m_1^2}\E\frac{h\cdot h^{r}}{(1 + h\cdot m_1)^2}
=
    \frac{\E\frac{h^{1+r}}{(1+h\cdot m_1)^2}}{1 - \gamma\E\frac{(h\cdot m_1)^2}{(1 + h\cdot m_1)^2}},
    \numberthis
    \label{eq:bias_GD}
\end{align*}
where $m_1 = \lim_{\lambda\to 0_+}m(-\lambda)$, and $m$ satisfies 
\[
    \frac{1}{m(-\lambda)} = \lambda + \gamma\E\left[\frac{h}{1 + h\cdot m(-\lambda)}\right].
\]
Similarly, for NGD ($\bP = \bSigma_\bX^{-1}$) we have
\begin{align*}
    B(\hbt_{\boldF^{-1}}) \to \frac{m_2'}{m_2^2}\E\frac{h\cdot h^{r}}{(1 + m_2)^2}
=
    \frac{\E\frac{h^{1+r}}{(1 + m_2)^2}}{1 - \gamma\E\frac{m_2^2}{(1 + m_2)^2}} 
=
    \frac{\E h^{1+r}}{(1+m_2)^2 - \gamma m_2^2},
    \numberthis
    \label{eq:bias_NGD}
\end{align*}
where standard calculation yields $m_2 = (\gamma - 1)^{-1}$, and thus $B(\hbt_{\boldF^{-1}}) \to (1 - \gamma^{-1})\E h^{1+r}$.

To compare the magnitude of \eqref{eq:bias_GD} and \eqref{eq:bias_NGD}, observe the following equivalence.
\begin{align*}
    B(\hbt_\bI) &\lessgtr B(\hbt_{\boldF^{-1}}) \\
\Leftrightarrow\,\,
    \E\frac{h^{1+r}}{(1+h\cdot m_1)^2}\cdot\frac{\gamma}{\gamma - 1} &\lessgtr
    \left(1 - \gamma\E\frac{(h\cdot m_1)^2}{(1 + h\cdot m_1)^2}\right)\E h^{1+r}. \\
\overset{(i)}{\Leftrightarrow}\,\,
    \E\frac{\zeta^{1+r}}{(1+\zeta)^2}\E\frac{\zeta}{1 + \zeta} &\lessgtr
    \E\frac{\zeta}{(1 + \zeta)^2}\E \zeta^{1+r}\E\frac{1}{1 + \zeta}.
    \numberthis
    \label{eq:source_condition_comparison}
\end{align*}
where (i) follows from the definition of $m_1$ and we defined $\zeta \triangleq h\cdot m_1$. Note that when $r\le-1$ and $h$ is not a point mass, we have
\begin{align*}
    \E\frac{\zeta^{1+r}}{(1+\zeta)^2}\E\frac{\zeta}{1 + \zeta} 
> 
    \E\frac{\zeta}{(1+\zeta)^2}\E\frac{\zeta^{1+r}}{1 + \zeta}
>
    \E\frac{\zeta}{(1 + \zeta)^2}\E \zeta^{1+r}\E\frac{1}{1 + \zeta}.
\end{align*} 
On the other hand, when $r\ge 0$, following the exact same procedure we get
\begin{align*}
    \E\frac{\zeta^{1+r}}{(1+\zeta)^2}\E\frac{\zeta}{1 + \zeta} 
< 
    \E\frac{\zeta}{(1 + \zeta)^2}\E \zeta^{1+r}\E\frac{1}{1 + \zeta}.
\end{align*} 
Combining the two cases completes the proof. 

\end{proof}

\subsection{Proof of Proposition~\ref{prop:interpolate}}
{\label{subsec:interpolate-proof}
\begin{proof}  
We first outline a more general setup where $\bP_{\alpha}=f(\bSigma_{\bX};\alpha)$ for continuous and differentiable function of $\alpha$ and $f$ applied to eigenvalues of $\bSigma_{\bx}$. For any interval $\mathcal{I}\subseteq [0,1]$, we claim 
\begin{itemize}[topsep=0.5mm,itemsep=0.5mm]
    \item[(a)] Suppose all four functions $\frac{1}{xf(x;\alpha)}$, $f(x;\alpha)$, $\frac{\partial f(x;\alpha)}{\partial \alpha} / f(x;\alpha)$ and $x\frac{\partial f(x;\alpha)}{\partial \alpha}$ are decreasing functions of $x$ on the support of $v_x$ for all $\alpha \in \mathcal{I}$. In addition, $\frac{\partial f(x;\alpha)}{\partial \alpha}\geq 0$ on the support of $v_x$ for all $\alpha\in\mathcal{I}$. Then the stationary bias is an increasing function of $\alpha$ on $\mathcal{I}$.
    \item[(b)] For all $\alpha\in\mathcal{I}$, suppose $xf(x;\alpha)$ is a monotonic function of $x$ on the support of $v_x$ and $\frac{\partial f(x;\alpha)}{\partial \alpha} / f(x;\alpha)$ is a decreasing function of $x$ on the support of $v_x$. Then the stationary variance is a decreasing function of $\alpha$ on $\mathcal{I}$.
\end{itemize}
Let us verify the three choices of $\bP_{\alpha}$ in Proposition \ref{prop:interpolate} one by one.
\begin{itemize}[leftmargin=*,topsep=0.5mm,itemsep=0.5mm]
\item When $\bP_{\alpha}=(1-\alpha)\bI_d+\alpha \left(\bSigma_{\bX}\right)^{-1}$, the corresponding $f(x;\alpha)$ is $(1-\alpha)+\alpha x$. It is clear that it satisfies all conditions in (a) and (b) for all $\alpha \in [0,1]$. Hence, the stationary variance is a decreasing function and the stationary bias is an increasing function of $\alpha \in [0,1]$.

\item When $\bP_{\alpha}=\left(\bSigma_{\bX}\right)^{-\alpha}$, the corresponding $f(x;\alpha)$ is $x^{-\alpha}$. It is clear that it satisfies all conditions in (a) and (b) for all $\alpha \in [0,1]$ except for the condition that $x\frac{\partial f(x;\alpha)}{\partial \alpha}=-x^{1-\alpha}\ln x$ is a decreasing function of $x$. Note that $x\frac{\partial f(x;\alpha)}{\partial \alpha}=-x^{1-\alpha}\ln x$ is a decreasing function of $x$ on the support of $v_x$ only for $\alpha\geq \frac{\ln(\kappa)-1}{\ln(\kappa)}$ where $\kappa=\sup v_x/\inf v_x$. Hence, the stationary variance is a decreasing function of $\alpha\in [0,1]$ and the stationary bias is an increasing function of $\alpha \in [\max(0,\frac{\ln(\kappa)-1}{\ln(\kappa)}),1]$.

\item When $\bP_{\alpha}=\LL(\alpha\bSigma_\bX \!+\! (1\!-\!\alpha)\bI_d\RR)^{-1}$, the corresponding $f(x;\alpha)$ is $1/(\alpha x+(1-\alpha))$. It is clear that it satisfies all conditions in (a) and (b) for all $\alpha \in [0,1]$ except for the condition that $x\frac{\partial f(x;\alpha)}{\partial \alpha}=\frac{x(1-x)}{(\alpha x+(1-\alpha))^2}$ is a decreasing function of $x$. Note that $x\frac{\partial f(x;\alpha)}{\partial \alpha}=\frac{x(1-x)}{(\alpha x+(1-\alpha))^2}$ is a decreasing function of $x$ on the support of $v_x$ only for $\alpha\geq \frac{\kappa-2}{\kappa-1}$. Hence, the stationary variance is a decreasing function of $\alpha\in [0,1]$ and the stationary bias is an increasing function of $\alpha \in [\max(0,\frac{\kappa-2}{\kappa-1}),1]$. 
\end{itemize}

To show (a) and (b), note that under the conditions on $\bSigma_{\bx}$ and $\bSigma_{\btheta}$ assumed in Proposition~\ref{prop:interpolate}, the stationary bias $B(\hat{\btheta}_{\bP_{\alpha}})$ and the stationary variance $V(\hat{\btheta}_{\bP_{\alpha}})$ can be simplified to
\[
  B(\hat{\btheta}_{\bP_{\alpha}}) \ = \ \frac{m_{\alpha}'(0)}{m_{\alpha}^2(0)} \E \frac{v_x}{(1+v_xf(v_x;\alpha)m_{\alpha}(0))^2} \quad \text{and} \quad V(\hat{\btheta}_{\bP_{\alpha}}) \ = \ 
\sigma^2 \cdot\left(\frac{m_{\alpha}'(0)}{m_{\alpha}^2(0)}-1\right),
\]
where $m_{\alpha}(z)$ and $m_{\alpha}'(z)$ satisfy
\begin{align}
1&=-zm_{\alpha}(z)+\gamma \E \frac{v_xf(v_x;\alpha)m_{\alpha}(z)}{1+v_xf(v_x;\alpha)m_{\alpha}(z)}\label{eq:prop4_eq1}\\
\frac{m_{\alpha}'(z)}{m_{\alpha}^2(z)}&=\frac{1}{1-\gamma \E\left(\frac{f(v_x;\alpha)m_{\alpha}(z)}{1+f(v_x;\alpha)m_{\alpha}(z)}\right)^2}. \label{eq:prop4_eq2}
\end{align}
For notation convenience, let $f_{\alpha} :=  v_xf(v_x;\alpha)$.
From \eqref{eq:prop4_eq2}, we have the following equivalent expressions.
\begin{align}
  B(\hat{\btheta}_{\bP_{\alpha}}) &= \frac{\E \frac{v_x}{(1+f_{\alpha}m_{\alpha}(0))^2}}{1-\gamma\E \left(\frac{f_{\alpha}m_{\alpha}(0)}{1+f_{\alpha}m_{\alpha}(0)}\right)^2}, \label{eq:prop4_B}\\
V(\hat{\btheta}_{\bP_{\alpha}})&=\sigma^2\left(\frac{1}{1-\gamma\E\left( \frac{f_{\alpha}m_{\alpha}(0)}{1+f_{\alpha}m_{\alpha}(0)}\right)^2}-1\right).\label{eq:prop4_V}
\end{align}
We first show that (b) holds. Note that from \eqref{eq:prop4_V}, we have 
\begin{align}
\frac{\partial V(\hat{\btheta}_{\bP_{\alpha}})}{\partial \alpha} \!=\! \gamma\sigma^2\left(\frac{1}{1\!-\!\gamma\E\left( \frac{f_{\alpha}m_{\alpha}(0)}{1\!+\!f_{\alpha}m_{\alpha}(0)}\right)^2}\right)^2
\!\E\left[\frac{2f_{\alpha}m_{\alpha}(0)}{\left(1\!+\!f_{\alpha}m_{\alpha}(0)\right)^3} \left(f_{\alpha}\!\frac{\partial m_{\alpha}(z)}{\partial \alpha}\Big|_{z=0} \!+\! \frac{\partial f_{\alpha}}{\partial \alpha} \!m_{\alpha}(0)\right) \right]. 
\label{eq:prop4_Vdif}
\end{align}
To calculate $\frac{\partial m_{\alpha}(z)}{\partial \alpha}\Big|_{z=0}$, we take derivatives with respect to $\alpha$ on both sides of \eqref{eq:prop4_eq1},
\begin{align}
0&=\gamma\E\left[\frac{1}{(1+f_{\alpha}m_{\alpha}(0))^2}\cdot \left(f_{\alpha}\frac{\partial m_{\alpha}(z)}{\partial \alpha}\Big|_{z=0}+\frac{\partial f_{\alpha}}{\partial \alpha} m_{\alpha}(0)\right)\right].\label{eq:prop4_eq3}
\end{align}
Therefore, plugging \eqref{eq:prop4_eq3} into \eqref{eq:prop4_Vdif} yields
\begin{align}
\frac{\partial V(\hat{\btheta}_{\bP_{\alpha}})}{\partial \alpha}=&2\gamma\sigma^2\left(\frac{m_{\alpha}(0)}{1-\gamma\E\left(  \frac{f_{\alpha}m_{\alpha}(0)}{1+f_{\alpha}m_{\alpha}(0)}\right)^2}\right)^2\left(\E \frac{f_{\alpha}}{(1+f_{\alpha}m_{\alpha}(0))^2}\right)^{-1}\nonumber\\
&\times\left(\E \frac{f_{\alpha}\frac{\partial f_{\alpha}}{\partial \alpha}}{\left(1+f_{\alpha}m_{\alpha}(0)\right)^3}\E\frac{f_{\alpha}}{(1+f_{\alpha}m_{\alpha}(0))^2}-\E\frac{f_{\alpha}^2}{\left(1+f_{\alpha}m_{\alpha}(0)\right)^3}\E\frac{\frac{\partial f_{\alpha}}{\partial \alpha}}{(1+f_{\alpha}m_{\alpha}(0))^2}\right)\nonumber
\end{align}
Thus showing $V(\hat{\btheta}_{\bP_{\alpha}})$ is a decreasing function of $\alpha$ is equivalent to showing that 
\begin{align}
\E\frac{f_{\alpha}^2}{\left(1+f_{\alpha}m_{\alpha}(0)\right)^3}\E\frac{\frac{\partial f_{\alpha}}{\partial \alpha}}{(1+f_{\alpha}m_{\alpha}(0))^2}\geq \E \frac{f_{\alpha}\frac{\partial f_{\alpha}}{\partial \alpha}}{\left(1+f_{\alpha}m_{\alpha}(0)\right)^3}\E\frac{f_{\alpha}}{(1+f_{\alpha}m_{\alpha}(0))^2}.\label{eq:prop4_Vcondition}
\end{align}
Let $\mu_x$ be the probability measure of $v_x$. We define a new measure $\tilde{\mu}_x=\frac{f_{\alpha}\mu_x}{(1+f_{\alpha}m_{\alpha}(0))^2}$, and let $\tilde{v}_x$ follow the new measure. 
Since $\frac{\partial f(x;\alpha)}{\partial \alpha} / f(x;\alpha)$ is a decreasing function of $x$ and $xf(x;\alpha)$ is a monotonic function of $x$,
\[
\E \frac{\tilde{v}_xf(\tilde{v}_x;\alpha)}{1+\tilde{v}_xf(\tilde{v}_x;\alpha)m_{\alpha}(0)} \E \frac{\frac{\partial \tilde{v}_xf(\tilde{v}_x;\alpha)}{\partial \alpha}}{\tilde{v}_xf(\tilde{v}_x;\alpha)} \geq \E \frac{\frac{\partial \tilde{v}_xf(\tilde{v}_x;\alpha)}{\partial \alpha}}{1+\tilde{v}_xf(\tilde{v}_x;\alpha)m_{\alpha}(0)}.
\]
Changing $\tilde{v}_x$ back to $v_x$, we arrive at \eqref{eq:prop4_Vcondition} and thus (b).

For the bias term $  B(\hat{\btheta}_{\bP_{\alpha}})$, note that from \eqref{eq:prop4_eq1} and \eqref{eq:prop4_B}, we have
\begin{align}
\frac{\partial   B(\hat{\btheta}_{\bP_{\alpha}})}{\partial \alpha}=&\frac{1}{\gamma}\left(\frac{1}{\gamma}-\E \left(\frac{f_{\alpha}m_{\alpha}(0)}{1+f_{\alpha}m_{\alpha}(0)}\right)^2\right)^{-2}\nonumber\\
&\times \left(-\E\left[2\frac{v_x}{\left(1+f_{\alpha}m_{\alpha}(0)\right)^3} \cdot\left(f_{\alpha}\frac{\partial m_{\alpha}(z)}{\partial \alpha}\Big|_{z=0}+\frac{\partial f_{\alpha}}{\partial \alpha} m_{\alpha}(0)\right)\right]\E \frac{f_{\alpha}m_{\alpha}(0)}{(1+f_{\alpha}m_{\alpha}(0))^2}\right.\nonumber\\
&+\left.\E\frac{v_x}{\left(1+f_{\alpha}m_{\alpha}(0)\right)^2}\E\left[2\frac{f_{\alpha}m_{\alpha}(0)}{\left(1+f_{\alpha}m_{\alpha}(0)\right)^3} \cdot\left(f_{\alpha}\frac{\partial m_{\alpha}(z)}{\partial \alpha}\Big|_{z=0}+\frac{\partial f_{\alpha}}{\partial \alpha} m_{\alpha}(0)\right) \right]\right).
\label{eq:prop4_Bdif}
\end{align}
Similarly, we combine \eqref{eq:prop4_eq3} and \eqref{eq:prop4_Bdif} and simplify the expression. To verify $B(\hat{\btheta}_{\bP_{\alpha}})$ is an increasing function of $\alpha$, we need to show that
\begin{align}
0\leq&\left(\E \frac{v_x f_{\alpha}m_{\alpha}(0)}{(1+f_{\alpha}m_{\alpha}(0))^3}\E\frac{\frac{\partial f_{\alpha}}{\partial \alpha}}{(1+f_{\alpha}m_{\alpha}(0))^2} - \E\frac{v_x\frac{\partial f_{\alpha}}{\partial \alpha}}{(1+f_{\alpha}m_{\alpha}(0))^3}\E\frac{f_{\alpha}m_{\alpha}(0)}{(1+f_{\alpha}m_{\alpha}(0))^2}\right)\E\frac{f_{\alpha}m_{\alpha}(0)}{(1+f_{\alpha}m_{\alpha}(0))^2}\nonumber\\
&-\E\frac{v_x}{(1+f_{\alpha}m_{\alpha}(0))^2}\left(\E\frac{(f_{\alpha}m_{\alpha}(0))^2}{(1+f_{\alpha}m_{\alpha}(0))^3}\E \frac{\frac{\partial f_{\alpha}}{\partial \alpha}}{(1+f_{\alpha}m_{\alpha}(0))^2}
-\E\frac{f_{\alpha}m_{\alpha}(0)\frac{\partial f_{\alpha}}{\partial \alpha}}{(1+f_{\alpha}m_{\alpha}(0))^3}\E\frac{f_{\alpha}m_{\alpha}(0)}{(1+f_{\alpha}m_{\alpha}(0))^2}\right),\nonumber\\
\label{eq:prop4_eq4}
\end{align}
Let $h_{\alpha}\triangleq f_{\alpha}m_{\alpha}(0)=v_xf(v_x;\alpha)m_{\alpha}(0)$ and $g_{\alpha}\triangleq \frac{\partial f_{\alpha}}{\partial \alpha}=v_x\frac{\partial f(v_x;\alpha)}{\partial \alpha}$. Then \eqref{eq:prop4_eq4} can be further simplified to the following equation
\begin{align}
0\leq&\underbrace{\E \frac{v_xh_{\alpha}}{(1+h_{\alpha})^3}\E \frac{g_{\alpha}}{(1+h_{\alpha})^3}\E \frac{h_{\alpha}}{(1+h_{\alpha})^3} -\E \frac{v_x}{(1+h_{\alpha})^3}\E \frac{g_{\alpha}}{(1+h_{\alpha})^3}\E \frac{h_{\alpha}^2}{(1+h_{\alpha})^3}}_{\text{part 1}} \nonumber\\
&+\underbrace{\E \frac{v_x}{(1+h_{\alpha})^3}\E \frac{g_{\alpha}h_{\alpha}}{(1+h_{\alpha})^3}\E \frac{h_{\alpha}}{(1+h_{\alpha})^3}-\E \frac{v_xg_{\alpha}}{(1+h_{\alpha})^3}\E \frac{h_{\alpha}}{(1+h_{\alpha})^3}\E \frac{h_{\alpha}}{(1+h_{\alpha})^3}}_{\text{part 2}}\nonumber\\
&+\underbrace{2\E \frac{v_xh_{\alpha}}{(1+h_{\alpha})^3}\E \frac{g_{\alpha}h_{\alpha}}{(1+h_{\alpha})^3}\E \frac{h_{\alpha}}{(1+h_{\alpha})^3}-2\E \frac{v_xg_{\alpha}}{(1+h_{\alpha})^3}\E \frac{h_{\alpha}^2}{(1+h_{\alpha})^3}\E \frac{h_{\alpha}}{(1+h_{\alpha})^3}}_{\text{part 3}}\nonumber\\
&+\underbrace{\E \frac{v_xh_{\alpha}}{(1+h_{\alpha})^3}\E \frac{g_{\alpha}h_{\alpha}}{(1+h_{\alpha})^3}\E \frac{h_{\alpha}^2}{(1+h_{\alpha})^3}-\E \frac{v_xg_{\alpha}}{(1+h_{\alpha})^3}\E \frac{h_{\alpha}^2}{(1+h_{\alpha})^3}\E \frac{h_{\alpha}^2}{(1+h_{\alpha})^3}}_{\text{part 4}}.
\label{eq:prop4_eq5}
\end{align}
Note that under condition of (a), we know that both $h_{\alpha}$ and $v_x/h_{\alpha}$ are increasing functions of $v_x$; and both  $g_{\alpha}/h_{\alpha}$ and $g_{\alpha}$ are decreasing functions of $v_x$. 
Hence, with calculation similar to \eqref{eq:prop4_Vcondition}, we know part 1,2,3,4 in \eqref{eq:prop4_eq5} are all non-negative, and therefore \eqref{eq:prop4_eq5} holds.
\end{proof}

\vspace{-2.5mm} 
\begin{remark}
The above characterization provides sufficient but not necessary conditions for the monotonicity of the bias term. In general, the expression of the bias is rather opaque, and determining the sign of its derivative can be tedious, except for certain special cases (e.g.~$\gamma=2$ and the eigenvalues of $\bSigma_\bX$ are two equally weighted point masses, for which $m_\alpha$ has a simple form and one may analytically check the monotonicity). 
We conjecture that the bias is monotone for $\alpha\in[0,1]$ for a much wider class of $\bSigma_\bX$, as shown in Figure~\ref{fig:interpolate_bias}. 
\end{remark} 

\begin{figure}[!htb]
% \vspace{-2mm} 
\centering
\begin{minipage}[t]{0.4\linewidth}
\centering
{\includegraphics[width=0.88\textwidth]{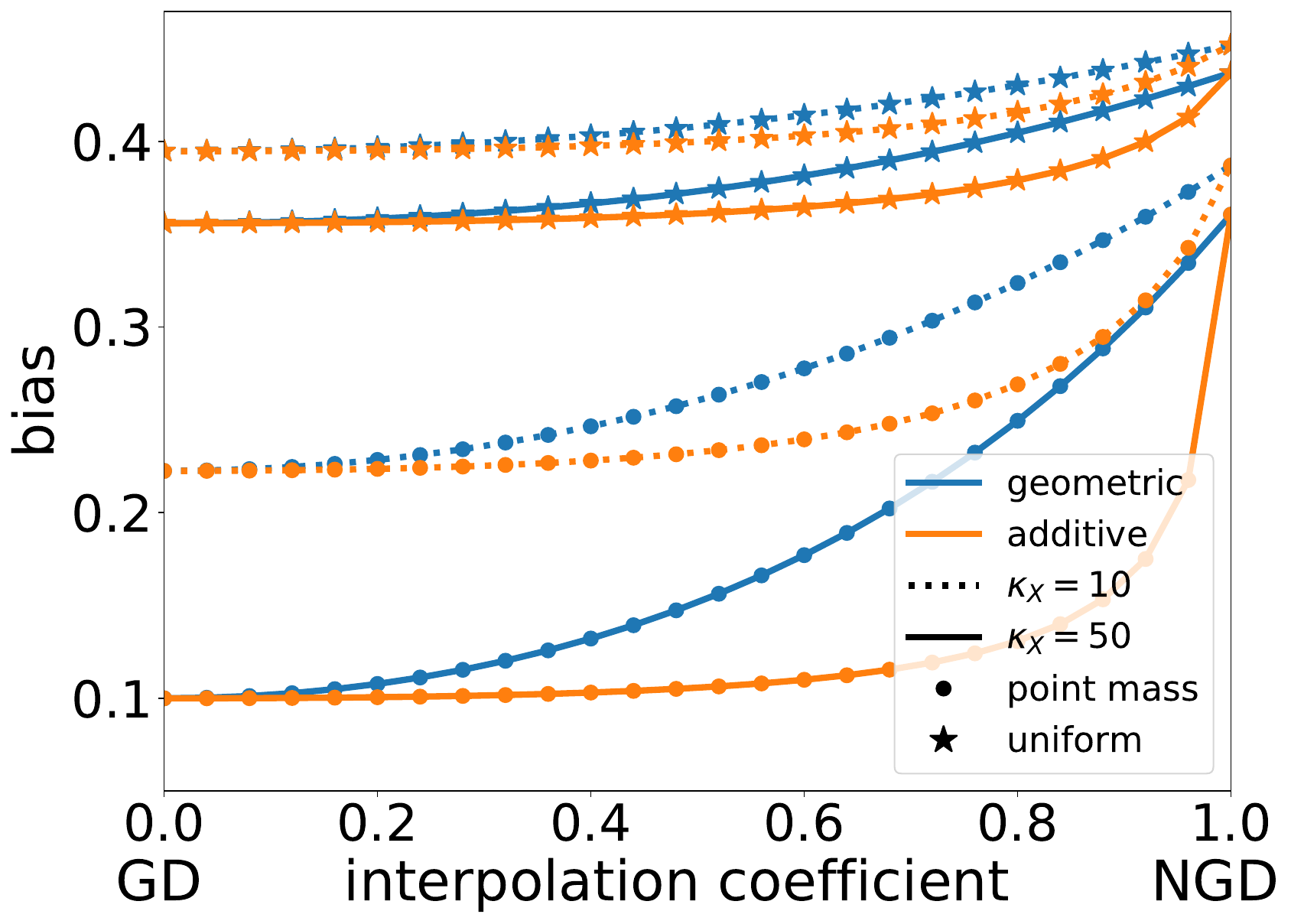}}  

\small (a) $\gamma=2$.
\end{minipage}
\begin{minipage}[t]{0.4\linewidth}
\centering
{\includegraphics[width=0.9\textwidth]{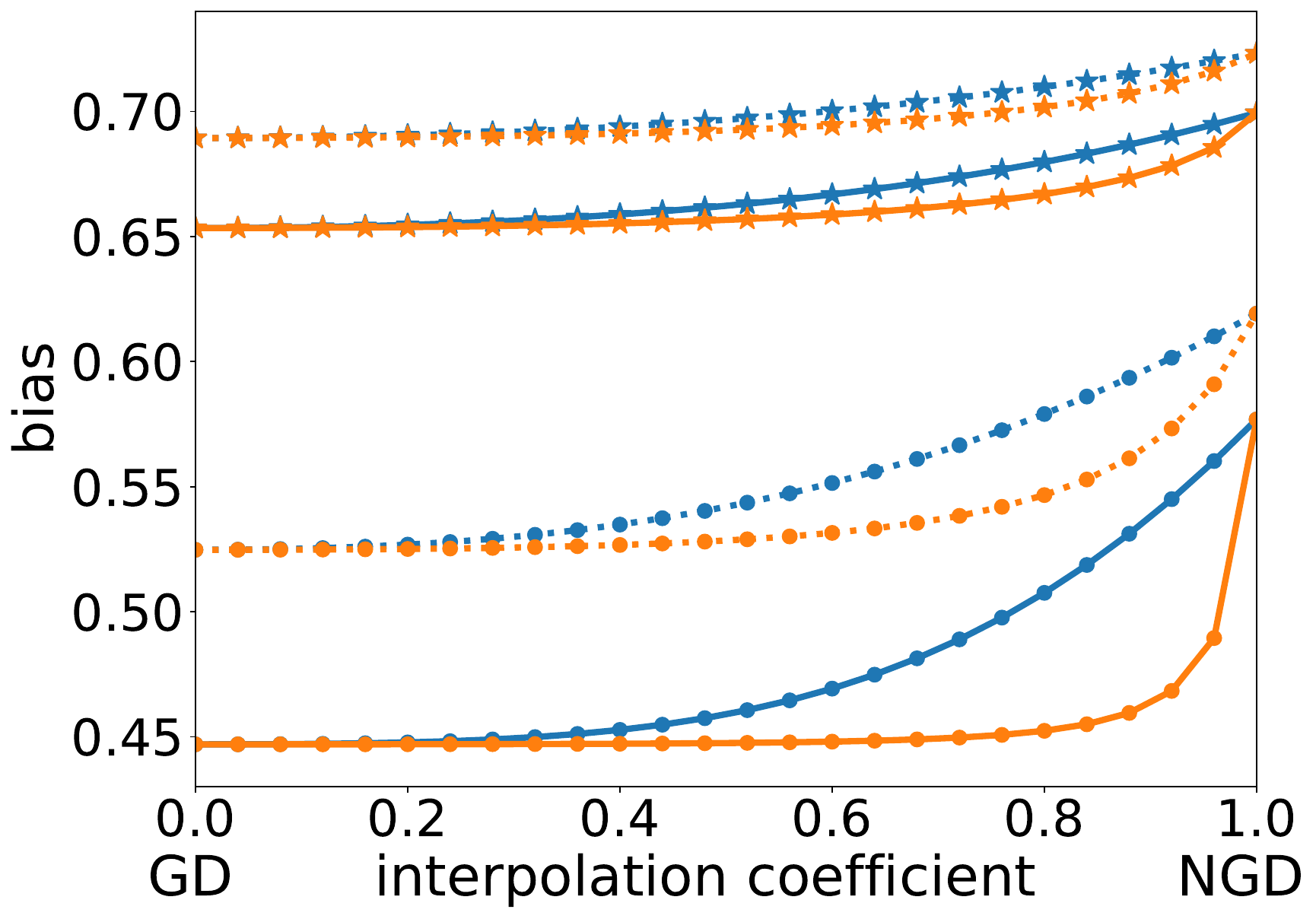}} 

\small (b) $\gamma=5$.
\end{minipage}
\vspace{-0.15cm}
\caption{\small Illustration of the monotonicity of the bias term under $\bSigma_{\btheta}=\bI_d$. We consider two distributions of eigenvalues for $\bSigma_\bX$: two equally weighted point masses (circle) and a uniform distribution (star), and vary the condition number $\kappa_X$ and overparameterization level $\gamma$. In all cases the bias in monotone in $\alpha\in[0,1]$.}
\label{fig:interpolate_bias}
% \vspace{-0.15cm}
\end{figure}
}
 
\subsection{Proof of Proposition~\ref{prop:monotone}}
\label{subsec:proof_variance_ES}
\begin{proof}
Taking the derivative of $V(\bt_\boldP(t))$ w.r.t.~time yields (omitting the scalar $\sigma^2$),
\begin{align*}
    \frac{\mathrm{d}V(\bt_\bP(t))}{\dt}
=&
    \frac{\mathrm{d}}{\dt} \norm{\bSigma_\bX^{1/2}\bP\bX^\top\LL(\bI_n - \exp\LL(-\frac{t}{n}\bX\bP\bX^\top\RR)\RR)\LL(\bX\bP\bX^\top\RR)^{-1}}_F^2 \\
\overset{(i)}{=}& 
    \frac{1}{n}\Tr{\bSigma_{\bX\bP}\underbrace{\bar{\bX}^\top\boldS_\bP\exp\LL(-\frac{t}{n}\boldS_\bP\RR)\boldS_\bP^{-2}\LL(\bI_n - \exp\LL(-\frac{t}{n}\boldS_\bP\RR)\RR)\bar{\bX}}_{p.s.d.}} 
\overset{(ii)}{>} 0,
\end{align*}
where we defined $\bar{\bX} = \bX\bP^{1/2}$ and $\boldS_\bP = \bX\bP\bX^\top$ in (i), and (ii) is due to (A2-3) the inequality $\Tr{\boldA\boldB}\ge\lambda_{\min}(\boldA)\Tr{\boldB}$ for positive semi-definite $\boldA$ and $\boldB$.
\end{proof}

\subsection{Proof of Proposition~\ref{prop:early-stop}}
\label{subsec:proof_bias_ES}
\begin{proof}
Recall the definition of the bias (well-specified) of $\hbt_\bP(t)$,
\begin{align*}
    B(\bt_\bP(t))
\overset{(i)}{=} & 
    \frac{1}{d}\Tr{\bSigma_{\btheta}\LL(\bI_d - \bP\bX^\top\bW_\bP(t)\boldS_\bP^{-1}\bX\RR)^\top\bSigma_\bX\LL(\bI_d - \bP\bX^\top\bW_\bP(t)\boldS_\bP^{-1}\bX\RR)} \\
\overset{(ii)}{=} &
    \frac{1}{d}\Tr{\bSigma_{\btheta/\bP}\LL(\bI_d - \bar{\bX}^\top\bW_\bP(t)\boldS_\bP^{-1}\bar{\bX}\RR)^\top\bSigma_{\bX\bP}\LL(\bI_d - \bar{\bX}^\top\bW_\bP(t)\boldS_\bP^{-1}\bar{\bX}\RR)} \\
\overset{(iii)}{\ge} & 
    \frac{1}{d}\Tr{\LL(\bSigma_{\bX\bP}^{1/2}\LL(\bI_d - \bar{\bX}^\top\bW_\bP(t)\boldS_\bP^{-1}\bar{\bX}\RR)\bSigma_{\btheta/\bP}^{1/2}\RR)^2},
    \numberthis
    \label{eq:early-stop-bias}
\end{align*}
where we defined $\boldS_\bP = \bX\bP\bX^\top$, $\bW_\bP(t) = \bI_n - \exp\LL(-\frac{t}{n}\boldS_\bP\RR)$ in (i), $\bar{\bX} = \bX\bP^{1/2}$ in (ii), and (iii) is due to the inequality $\Tr{\boldA^\top\boldA}\ge\Tr{\boldA^2}$.

When $\bSigma_\bX = \bSigma_{\btheta}^{-1}$, i.e.~NGD achieves lowest stationary bias, \eqref{eq:early-stop-bias} simplifies to
\begin{align*}
    \!\!\!\!\!\!\!\!\!\!B(\bt_\bP(t)) \ge \frac{1}{d}\Tr{\LL(\bI_d - \bar{\bX}^\top\bW_\bP(t)\boldS_\bP^{-1}\bar{\bX}\RR)^2}
=
    \LL(1 - \frac{1}{\gamma}\RR) + \frac{1}{d}\sum_{i=1}^n  \exp\LL(-\frac{t}{n}\bar{\lambda}_i\RR)^2,
    \numberthis
    \label{eq:early-stop-PGD-bias}
\end{align*}
where $\bar{\lambda}$ is the eigenvalue of $\boldS_\bP$. On the other hand, since $\boldF=\bSigma_\bX$, for the NGD iterate $\hbt_{\boldF^{-1}}(t)$ we have
\begin{align*}
    \!\!\!\!\!\!\!\!\!\!\!\!B\LL(\bt_{\boldF^{-1}}(t)\RR) = \frac{1}{d}\Tr{\LL(\bI_d - \hat{\bX}^\top \bW_{\boldF^{-1}}(t) \boldS_{\boldF^{-1}}^{-1} \hat{\bX}\RR)^2} =  \LL(1 - \frac{1}{\gamma}\RR) + \frac{1}{d}\sum_{i=1}^n  \exp\LL(-\frac{t}{n}\hat{\lambda}_i\RR)^2,
    \numberthis
    \label{eq:early-stop-NGD-bias}
\end{align*}
where $\hat{\bX} = \bX\bSigma_\bX^{-1/2}$ and $\bar{\lambda}$ is the eigenvalue of $\boldS_{\boldF^{-1}} = \hat{\bX}\hat{\bX}^\top$. Comparing \eqref{eq:early-stop-PGD-bias}\eqref{eq:early-stop-NGD-bias}, we see that given $\hbt_\bP(t)$ at a fixed t, if we run NGD for time $T > \frac{\bar{\lambda}_{\max}}{\hat{\lambda}_{\min}}t$ (note that $T/t = O(1)$ by (A2-3)), then we have $B(\bt_\bP(t)) \ge B\LL(\bt_{\boldF^{-1}}(T)\RR)$ for any $\bP$ satisfying (A3). This thus implies that $B^{\mathrm{opt}}(\bt_\bP)\ge B^{\mathrm{opt}}\LL(\bt_{\boldF^{-1}}\RR)$.
 
On the other hand, when $\bSigma_{\bt}=\bI_d$, we can show that the bias term of GD is monotonically decreasing through time by taking its derivative,
\begin{align*}
    &\frac{\mathrm{d}}{\dt}B(\bt_\bI(t))
=
    \frac{1}{d}\frac{\mathrm{d}}{\dt}\Tr{\LL(\bI_d - \bX^\top\bW_\bI(t)\boldS_\bI^{-1}\bX\RR)^\top\bSigma_\bX\LL(\bI_d - \bX^\top\bW_\bI(t)\boldS_\bI^{-1}\bX\RR)} \\
&=
    -\frac{1}{nd}\Tr{\bSigma_\bX\underbrace{\bX^\top\boldS\exp\LL(-\frac{t}{n}\boldS\RR)\boldS^{-1}\bX\LL(\bI_d - \bX^\top\bW_\bI(t)\boldS_\bI^{-1}\bX\RR)}_{p.s.d.}} < 0.
    \numberthis
    \label{eq:GD-bias-derivative}
\end{align*} 

Similarly, one can verify that the expected bias of NGD is monotonically decreasing for all choices of $\bSigma_\bX$ and $\bSigma_{\bt}$ satisfying (A2-4), 
\begin{align*}
    &\frac{\mathrm{d}}{\dt}\Tr{\bSigma_{\btheta}\LL(\bI_d - \boldF^{-1}\bX^\top\bW_{\boldF^{-1}}(t)\boldS_{\boldF^{-1}}^{-1}\bX\RR)^\top\bSigma_\bX\LL(\bI_d - \boldF^{-1}\bX^\top\bW_{\boldF^{-1}}(t)\boldS_{\boldF^{-1}}^{-1}\bX\RR)} \\
=&
    \frac{\mathrm{d}}{\dt}\Tr{\bSigma_{\bX\btheta}\LL(\bI_d - \hat{\bX}^\top\bW_{\boldF^{-1}}(t)\boldS_{\boldF^{-1}}^{-1}\hat{\bX}\RR)^\top\LL(\bI_d - \hat{\bX}^\top\bW_{\boldF^{-1}}(t)\boldS_{\boldF^{-1}}^{-1}\hat{\bX}\RR)} 
\overset{(i)}{<} 0, 
\end{align*} 
where (i) follows from calculation similar to \eqref{eq:GD-bias-derivative}.
Since the expected bias is decreasing through time for both GD and NGD when $\bSigma_{\bt}=\bI_d$, and from Theorem~\ref{theo:bias} we know that $B(\hbt_\bI)\le B(\hbt_{\boldF^{-1}})$, we conclude that $B^{\mathrm{opt}}(\bt_\bI)\le B^{\mathrm{opt}}\LL(\bt_{\boldF^{-1}}\RR)$.
\end{proof}

\allowdisplaybreaks

\subsection{Proof of Theorem~\ref{theo:RKHS}}
\label{subsec:proof_RKHS}
\subsubsection{Setup and Main Result}
\label{subsec:RKHS_setup}
We restate the setting and assumptions.
$\calH$ is an RKHS included in $\LPiPx$ equipped with a bounded kernel function $k$ satisfying $\sup_{\supp{P_X}}k(\bx,\bx) \leq 1$.
$K_\bx \in \calH$ is the Riesz representation of the kernel function $k(x,\cdot)$, that is, $k(\bx,\by) = \langle K_\bx, K_\by\rangle_{\calH}$.
$S$ is the canonical embedding operator from $\calH$ to $\LPiPx$.
We write $\Sigma = S^*S: \calH \to \calH$ and $L = SS^*$.
Note that the boundedness of the kernel gives $\|S f\|_{\LPiPx} \leq \sup_{\bx} |f(\bx)| = \sup_{\bx} |\langle K_\bx, f\rangle|\leq \|K_\bx\|_{\calH}\|f\|_{\calH}\leq \|f\|_{\calH}$. Hence we know $\|\Sigma\| \leq 1$ and $\|L\| \leq 1$. Our analysis will be made under the following standard regularity assumptions.
\begin{itemize}[leftmargin=*,itemsep=0.5mm,topsep=0.5mm] 
    \item There exist $r \!\in \!(0,\infty)$ and $M \!>\! 0$ such that $f^* \!=\! L^r h^*$ for some $h^* \!\in\! L_2(P_X)$ and $\norm{f^*}_{\infty} \!\leq\! M$.
    \item There exists $s > 1$ s.t. $\Tr{\Sigma^{1/s}} < \infty$ and $2r + s^{-1} > 1$.
    \item There exist $\mu\in[s^{-1},1]$ and $C_\mu>0$ such that $\sup_{\bx\in\supp{P_X}}\norm{\Sigma^{1/2-1/\mu}K_\bx}_\calH \le C_\mu$. 
\end{itemize} 
% (A5)(A6) are standard regularity assumptions in the literature that provide capacity control of the RKHS (e.g., see \cite{caponnetto2007optimal,pillaud2018statistical}). 
% It is worth noting that previous works mostly consider $r\ge 1/2$ which implies that $f^*\!\in\!\calH$.

The training data is generated as $y_i = f^*(\bx_i) + \varepsilon_i$, where $\varepsilon_i$ is an i.i.d. noise satisfying $|\varepsilon_i| \leq \sigma$ almost surely. Let $\by\in\R^n$ be the label vector. We identify $\R^n$ with $L_2(P_n)$ and define
\begin{align*}
\hat{\Sigma} = \frac{1}{n} \sum_{i=1}^n K_{\bx_i} \otimes K_{\bx_i}: \calH \to \calH, \quad \hat{S}^*Y = \frac{1}{n}\sum_{i=1}^n Y_i K_{\bx_i}, ~
(Y \in L_2(P_n)).
\end{align*}

We consider the following preconditioned update on $f_t \in \calH$:
\begin{align*}
f_t = f_{t-1}  - \eta (\Sigma + \lambda I)^{-1}(\hat{\Sigma} f_{t-1} - \hat{S}^* Y), \quad f_0 = 0.
\end{align*}

We briefly comment on how our analysis differs from \cite{rudi2017falkon}, which showed that a preconditioned update (the FALKON algorithm) for kernel ridge regression can also achieve accelerated convergence in the population risk. We emphasize the following differences.
\begin{itemize}[leftmargin=*,itemsep=1pt,topsep=0pt]
    \item The two algorithms optimize different objectives, as highlighted by the different role of the “ridge” coefficient $\lambda$. In FALKON, $\lambda$ turns the objective into kernel ridge regression; whereas in our \eqref{eq:RKHS-update}, $\lambda$ controls the interpolation between GD and NGD. As we aim to study how the preconditioner affects generalization, it is important that we look at the objective in its original (instead of regularized) form. 
    \item To elaborate on the first point, since FALKON minimizes a regularized objective, it would not overfit even after large number of gradient steps, but it is unclear how preconditioning impacts generalization (i.e., any preconditioner may generalize well with proper regularization).  
    In contrast, we consider the unregularized objective, and thus early stopping plays a crucial role -- this differs from most standard analysis of GD.   
    \item Algorithm-wise, the two updates employ different preconditioners. FALKON involves inverting the kernel matrix $K$ defined on the training points, whereas we consider the population covariance operator $\Sigma$, which is consistent with our earlier discussion on the population Fisher in Section~\ref{sec:risk}.
    \item In terms of the theoretical setup, our analysis allows for $r<1/2$, whereas \cite{rudi2017falkon} and many other previous works assumed $r\in[1/2,1]$, as commented in Section~\ref{subsec:RKHS}.
\end{itemize}

We aim to show the following theorem:
\begin{theo*}[Formal Statement of Theorem~\ref{theo:RKHS}]
Given (A4-6), if the sample size $n$ is sufficiently large so that $1/(n\lambda) \ll 1$, then for $\eta < \|\Sigma\|$ with $\eta t \geq 1$ and $0 < \delta < 1$ and $0 < \lambda < 1$, it holds that
\begin{align*}
& \|S f_t - f^* \|_{\LPiPx}^2 \leq C (B(t) + V(t)),
\end{align*}
with probability $1-3 \delta$, where $C$ is a constant and 
\begin{align*}
B(t) & := \exp(-\eta t) \vee \left(\frac{\lambda}{\eta t} \right)^{2r}, \\
V(t) & := 
V_1(t)
 + (1+\eta t)\left( \frac{   \lambda^{-1} B(t)+ 
\sigma^2 \Tr{\Sigma^{\frac{1}{s}}}\lambda^{-\frac{1}{s}} }{n}  + \frac{\lambda^{-1}(\sigma +M + (1+t\eta) \lambda^{-(\frac{1}{2}-r)_+})^2}{n^2}\right) \log(1/\delta)^2,
\end{align*}
in which 
\begin{align*}
V_1(t)  := \left[\exp(-\eta t) \vee  \left(\frac{\lambda}{\eta t}\right)^{2r} + (t \eta)^2 
\left(\frac{\beta'(1\vee \lambda^{2r - \mu})\Tr{\Sigma^{\frac{1}{s}}}\lambda^{-\frac{1}{s}}}{n} +
 \frac{\beta'^2( 1 +\lambda^{-\mu}(1\vee \lambda^{2r-\mu})}{n^2}\right) 
\right](1+t\eta)^2,
\end{align*}
for 
$\beta' = \log\LL(\frac{28C_\mu^2 (2^{2r-\mu} \vee \lambda^{-\mu + 2r}) \Tr{\Sigma^{1/s}}\lambda^{-1/s} }{\delta}\RR)$.
When $r \geq 1/2$, if we set $\lambda = n^{-\frac{s}{2rs + 1}} =:\lambda^*$ and $t = \Theta(\log(n))$, then the overall convergence rate becomes
\begin{align*}
\|S g_t - f^* \|_{\LPiPx}^2  = \widetilde{O}_p \left( n^{- \frac{2rs}{2rs + 1}}\right),
\end{align*}
which is the minimax optimal rate ($\tilde{O}_p(\cdot)$ hides a poly-$\log(n)$ factor). 
On the other hand, when $r < 1/2$, the bound is also $\widetilde{O}_p \left( n^{- \frac{2rs}{2rs + 1}}\right)$ except the term $V_1(t)$. 
In this case, if $2r \geq \mu$ holds additionally, we have $V_t(t) = \widetilde{O}_p \left( n^{- \frac{2rs}{2rs + 1}}\right)$, which again recovers the optimal rate.
\end{theo*}

Note that if the GD (with iterates $\tilde{f}_t$) is employed, from previous work \cite{lin2017optimal}
we know that the {\it bias term} $\left(\frac{\lambda}{\eta t} \right)^{2r}$ is replaced by $\left(\frac{1}{\eta t} \right)^{2r}$,
and therefore the upper bound translates to
\begin{align*}
\|S \tilde{f}_t - f^* \|_{\LPiPx}^2 \leq 
C \left\{
(\eta t)^{-2r} + \frac{1 }{n} \left(\Tr{\Sigma^{1/s}}(\eta t)^{1/s} + \frac{\eta t}{n}  \right)
\left( \sigma^2 + \left(\frac{1}{\eta t} \right)^{2r} + \frac{M^2 + (\eta t)^{-(2r-1)}}{n}\right)
\right\},
\end{align*}
with high probability.
In other words, by the condition $\eta = O(1)$, we need $t = \Theta(n^{\frac{2rs}{2rs + 1}})$ steps to sufficiently diminish the bias term. In contrast, the preconditioned update that interpolates between GD and NGD \eqref{eq:RKHS-update} only require $t = O(\log(n))$ steps to make the bias term negligible. This is because the NGD amplifies the high frequency component and rapidly captures the detailed ``shape'' of the target function $f^*$.  

\subsubsection{Proof of Main Result}
\label{subsub:RKHS_proof_main}
\begin{proof} 
We follow the proof strategy of \cite{lin2017optimal}.
First we define a reference optimization problem with iterates $\bar{f}_t$ that directly minimize the population risk:
\begin{align}\label{eq:ftUpdate}
\bar{f}_t = \bar{f}_{t-1}  - \eta (\Sigma + \lambda I)^{-1}(\Sigma \bar{f}_{t-1} - S^* f^*), \quad \bar{f}_0 = 0.
\end{align}
Note that $\E[f_t] = \bar{f}_t$.
In addition, we define the degrees of freedom and its related quantity as 
\begin{align*}
\calN_\infty(\lambda) := \E_\bx[\langle K_\bx, \Sigma_\lambda^{-1}K_\bx\rangle_{\calH}] = \Tr{\Sigma \Sigma_\lambda^{-1}}, \quad
\calF_\infty(\lambda) := \sup_{\bx\in \supp{P_X}} \|\Sigma_\lambda^{-1/2}K_\bx \|_{\calH}^2.
\end{align*}

We can see that the risk admits the following bias-variance decomposition
\begin{align*}
\|Sf_t - f^*\|_{\LPiPx}^2 \leq 2 (\underbrace{\|Sf_t - S\bar{f}_t\|_{\LPiPx}^2}_{V(t)\text{, variance}} + \underbrace{\|\bar{f}_t - f^*\|_{\LPiPx}^2}_{B(t)\text{, bias}}).
\end{align*}
We upper bound the bias and variance separately.

\paragraph{Bounding the bias term $B(t)$:}
Note that by the update rule \eqref{eq:RKHS-update}, it holds that 
\begin{align*}
& S \bar{f}_t - f^* = S \bar{f}_{t-1} - f^*  - \eta S (\Sigma + \lambda I)^{-1} (\Sigma \bar{f}_{t-1} - S^* f^*) \\
\Leftrightarrow~&
S \bar{f}_t - f^*  =  (I -  \eta S  (\Sigma + \lambda I)^{-1} S^* ) (S \bar{f}_{t-1} - f^*).
\end{align*}
Therefore, unrolling the recursion gives $S \bar{f}_t - f^* = (I - \eta S (\Sigma + \lambda I)^{-1} S^* )^t  (S \bar{f}_0 - f^*) = (I - \eta S (\Sigma + \lambda I)^{-1} S^*)^t (- f^*) = - (I -  \eta S  (\Sigma + \lambda I)^{-1} S^* )^t  L^r h^*$. 
Write the spectral decomposition of $L$ as $L = \sum_{j=1}^\infty \sigma_j \phi_j \phi_j^*$ for $\phi_j \in \LPiPx$ for $\sigma_j \geq 0$.
We have $\| (I -  \eta S  (\Sigma + \lambda I)^{-1} S^* )^t  L^r h^*\|_{\LPiPx} = \sum_{j=1}^\infty (1-\eta \frac{\sigma_j}{\sigma_j + \lambda})^{2t} \sigma^{2r}_j h_j^2$,
where $h = \sum_{j=1}^\infty h_j \phi_j$.
We then apply Lemma \ref{lem:PolyFiltIneq} to obtain 
\begin{align*}
B(t) \leq \exp(-\eta t) \sum_{j: \sigma_j \geq \lambda} h^2_j
+ \left( \frac{2 r}{e} \frac{\lambda}{\eta t}\right)^{2r} \sum_{j: \sigma_j < \lambda} h^2_j
\leq C \left[ \exp(-\eta t) \vee \left(\frac{\lambda}{\eta t} \right)^{2r} \right]\|h^*\|_{\LPiPx}^2,
\end{align*}
where $C$ is a constant depending only on $r$.

\paragraph{Bounding the variance term $V(t)$:}

We now handle the variance term $V(t)$. For notational convenience, we write $A_\lambda := A + \lambda I$ for a linear operator $A$ from a Hilbert space $H$ to $H$. By the definition of $f_t$, we know
\begin{align*}
f_t & = (I - \eta (\Sigma + \lambda I)^{-1}\hat{\Sigma} )f_{t-1}  + \eta (\Sigma + \lambda I)^{-1}\hat{S}^* Y \\
& = \sum_{j=0}^{t-1} (I - \eta (\Sigma + \lambda I)^{-1}\hat{\Sigma} )^j \eta (\Sigma + \lambda I)^{-1}\hat{S}^* Y \\
& =  \Sigma_\lambda^{-1/2} 
\eta \left[ \sum_{j=0}^{t-1} (I - \eta \Sigma_\lambda^{-1/2} \hat{\Sigma}  \Sigma_\lambda^{-1/2})^j   \right] \Sigma_\lambda^{-1/2} \hat{S}^* Y =:  \Sigma_\lambda^{-1/2}  G_t  \Sigma_\lambda^{-1/2}  \hat{S}^* Y,
\end{align*}
where we defined $G_t :=  \eta %\Sigma_\lambda^{-1/2} 
\left[ \sum_{j=0}^{t-1} (I - \eta \Sigma_\lambda^{-1/2} \hat{\Sigma}  \Sigma_\lambda^{-1/2})^j   \right] %\Sigma_\lambda^{-1/2} 
$.
Accordingly, we decompose $V(t)$ as 
\begin{align*}
\|Sf_t - S\bar{f}_t\|_{\LPiPx}^2
\leq & 2 ( \underbrace{\|S (f_t -  \Sigma_\lambda^{-1/2}  G_t  \Sigma_\lambda^{-1/2} \hat{\Sigma} \bar{f}_t) \|_{\LPiPx}^2}_{(a)} \\
& ~~+ \underbrace{\|S ( \Sigma_\lambda^{-1/2}  G_t  \Sigma_\lambda^{-1/2}  \hat{\Sigma} \bar{f}_t - \bar{f}_t) \|_{\LPiPx}^2}_{(b)}).
\end{align*}
We bound $(a)$ and $(b)$ separately.

\textbf{Step 1. Bounding $(a)$.}
Decompose $(a)$ as 
\begin{align*}
&\|S (f_t -  \Sigma_\lambda^{-1/2}  G_t  \Sigma_\lambda^{-1/2}  \hat{\Sigma} \bar{f}_t) \|_{\LPiPx}^2 
= 
 \|S  \Sigma_\lambda^{-1/2}  G_t  \Sigma_\lambda^{-1/2}  (\hat{S}^* Y  -\hat{\Sigma}  \bar{f}_t) \|_{\LPiPx}^2 \\
\leq
& 
\|S  \Sigma_\lambda^{-1/2} \|^2  \| G_t  \Sigma_\lambda^{-1/2}\hat{\Sigma}_{\lambda} \Sigma_\lambda^{-1/2} \|^2 
\| \Sigma_\lambda^{1/2} \hat{\Sigma}_{\lambda}^{-1}\Sigma_\lambda^{1/2}\|^2 \|\Sigma_\lambda^{-1/2} (\hat{S}^* Y  -\hat{\Sigma}  \bar{f}_t) \|_{\calH}^2.
\end{align*}
We bound the terms in the RHS individually.

\textbf{(i)} 
$\|S \Sigma_\lambda^{-1/2} \|^2 = \|\Sigma_\lambda^{-1/2} \Sigma \Sigma_\lambda^{-1/2} \| \leq 1.$

\textbf{(ii)} Note that
$
\Sigma_\lambda^{-1/2} \hat{\Sigma}_{\lambda} \Sigma_\lambda^{-1/2}
= I - \Sigma_\lambda^{-1/2}(\Sigma - \hat{\Sigma})\Sigma_\lambda^{-1/2}
\succeq (1 - \|\Sigma_\lambda^{-1/2}(\Sigma - \hat{\Sigma})\Sigma_\lambda^{-1/2}\|) I.
$

Proposition 6 of \cite{rudi2017generalization} and its proof implies that for $\lambda \leq \|\Sigma\|$ and $0  < \delta < 1$, 
it holds that 
\begin{align}
\|\Sigma_\lambda^{-1/2}(\Sigma- \hat{\Sigma})\Sigma_\lambda^{-1/2}\|
\leq \sqrt{\frac{2\beta \calF_\infty(\lambda)}{n}} + \frac{2\beta( 1 + \calF_\infty(\lambda))}{3n}
=: 
\Xi_n,
\label{eq:MatrixBernstein}
\end{align}
with probability $1-\delta$, where $\beta = \log\LL(\frac{4\Tr{\Sigma \Sigma_\lambda^{-1}}}{\delta}\RR) = \log\LL(\frac{4 \calN_\infty(\lambda)}{\delta}\RR)$.
By Lemma \ref{lem:DoFBound}, $\beta \leq \log\LL(\frac{4 \Tr{\Sigma^{1/s}}\lambda^{-1/s} }{\delta}\RR)$ and  $\calF_\infty(\lambda) \leq \lambda^{-1}$.
Therefore, if $\lambda = o(n^{-1} \log(n))$ and $\lambda = \Omega(n^{-1/s})$,
the RHS can be smaller than $1/2$ for sufficiently large $n$, i.e.~$\Xi_n = O(\sqrt{\log(n)/(n\lambda)}) \leq 1/2$.
In this case we have,
\begin{align*}
\Sigma_\lambda^{-1/2} \hat{\Sigma}_{\lambda} \Sigma_\lambda^{-1/2}
\succeq \frac{1}{2} I. 
\end{align*}
We denote this event as $\calE_1$.

\textbf{(iii)}
Note that
\begin{align*}
G_t  \Sigma_\lambda^{-1/2} \hat{\Sigma}_{\lambda} \Sigma_\lambda^{-1/2} 
& =  \eta 
\left[ \sum_{j=0}^{t-1} (I - \eta \Sigma_\lambda^{-1/2} \hat{\Sigma}  \Sigma_\lambda^{-1/2})^j   \right] \Sigma_\lambda^{-1/2} \hat{\Sigma}_{\lambda} \Sigma_\lambda^{-1/2} \\
& =  \eta 
\left[ \sum_{j=0}^{t-1} (I - \eta \Sigma_\lambda^{-1/2} \hat{\Sigma}  \Sigma_\lambda^{-1/2})^j   \right] 
(\Sigma_\lambda^{-1/2} \hat{\Sigma} \Sigma_\lambda^{-1/2} + \lambda \Sigma_\lambda^{-1}).
\end{align*}

Thus, by Lemma \ref{lem:PolySumBound} we have
\begin{align*}
&\| G_t  \Sigma_\lambda^{-1/2} \hat{\Sigma}_{\lambda} \Sigma_\lambda^{-1/2}\| \\
\leq& 
\underbrace{\left\|\eta 
\left[ \sum_{j=0}^{t-1} (I - \eta \Sigma_\lambda^{-1/2} \hat{\Sigma}  \Sigma_\lambda^{-1/2})^j   \right] 
\Sigma_\lambda^{-1/2} \hat{\Sigma} \Sigma_\lambda^{-1/2} \right\|}_{\leq 1 \text{~~(due to Lemma \ref{lem:PolySumBound})}} 
+ 
\left\|\eta 
\left[ \sum_{j=0}^{t-1} (I - \eta \Sigma_\lambda^{-1/2} \hat{\Sigma}  \Sigma_\lambda^{-1/2})^j   \right] 
\lambda \Sigma_\lambda^{-1} \right\| \\
\leq& 
1 + \eta \sum_{j=0}^{t-1} \| (I - \eta \Sigma_\lambda^{-1/2} \hat{\Sigma}  \Sigma_\lambda^{-1/2})^j \|
\|\lambda \Sigma_\lambda^{-1} \| 
\leq 1 + \eta t.
\end{align*} 

(iv)
Note that 
\begin{align*}
\|\Sigma_\lambda^{-1/2} (\hat{S}^* Y  -\hat{\Sigma}  \bar{f}_t) \|_{\calH}^2
\leq  
2 (\|\Sigma_\lambda^{-1/2} [(\hat{S}^* Y  -\hat{\Sigma}  \bar{f}_t) - (S^* f^* - \Sigma \bar{f}_t) ]\|_{\calH}^2
+
\|\Sigma_\lambda^{-1/2}  (S^* f^* - \Sigma \bar{f}_t) \|_{\calH}^2).
\end{align*}
First we bound the first term of the RHS.
Let $\xi_i = \Sigma_\lambda^{-1/2}[K_{\bx_i} y_i - K_{\bx_i} \bar{f}_t(\bx_i) - (S^* f^* - \Sigma \bar{f}_t)]$. Then, $\{\xi_i\}_{i=1}^n$ is an i.i.d. sequence of zero-centered random variables taking value in $\calH$ and thus we have 
$$
\|\Sigma_\lambda^{-1/2} [(\hat{S}^* Y  -\hat{\Sigma}  \bar{f}_t) - (S^* f^* - \Sigma \bar{f}_t) ]\|_{\calH}^2
= \left\|\frac{1}{n} \sum_{i=1}^n \xi_i \right\|_{\calH}^2.
$$
The RHS can be bounded by using Bernstein's inequality in Hilbert space \cite{caponnetto2007optimal}.
To apply the inequality, we need to bound the variance and sup-norm of the random variable. The variance can be bounded as 
\begin{align*}
\E[\|\xi_i\|_{\calH}^2] 
& \leq \E_{(\bx,y)}\left[ \|\Sigma_\lambda^{-1/2}(K_\bx (f^*(\bx) - \bar{f}_t(\bx)) + K_\xi \epsilon)\|_{\calH}^2\right] \\
& \leq 2 \left\{ \E_{(\bx,y)}\left[ \|\Sigma_\lambda^{-1/2}(K_\bx (f^*(\bx) - \bar{f}_t(x))\|_{\calH}^2+ 
\|\Sigma_\lambda^{-1/2}(K_\bx \epsilon)\|_{\calH}^2\right]\right\} \\
& \leq 2 \left\{ \sup_{\bx \in \supp{P_X}}\|\Sigma_\lambda^{-1/2} K_\bx\|^2 \|f^*  - S \bar{f}_t\|_{\LPiPx}^2 
+ 
\sigma^2 \Tr{\Sigma_\lambda^{-1} \Sigma}\right\} \\
& \leq 2  %\left\{   B(t)+ 
\left\{ \calF_\infty(\lambda) B(t) + \sigma^2 \Tr{\Sigma_\lambda^{-1} \Sigma}  \right\} \\
& \leq 
2 \left\{ \lambda^{-1} B(t) + \sigma^2 \Tr{\Sigma_\lambda^{-1} \Sigma}  \right\},
\end{align*} 
The sup-norm can be bounded as follows. Observe that $\|\bar{f}_t\|_{\infty} \leq \|\bar{f}_t\|_{\calH}$, and thus by Lemma \ref{lemm:ftRKHSnormBound},
\begin{align*}
\|\xi_i\|_{\calH}
& \leq  2 \sup_{\bx \in \supp{P_X}}\|\Sigma_{\lambda}^{-1/2} K_\bx\|_{\calH}(\sigma + \|f^*\|_\infty + \|\bar{f}_t\|_{\infty}) \\
& \lesssim \calF_\infty^{1/2}(\lambda)(\sigma + M + (1+t\eta) \lambda^{-(1/2-r)_+}) \\
& \lesssim \lambda^{-1/2}(\sigma + M + (1+t\eta) \lambda^{-(1/2-r)_+}).
\end{align*}
Therefore, for $0 < \delta < 1$, Bernstein's inequality (see Proposition 2 of \cite{caponnetto2007optimal}) yields that 
\begin{align*}
\left\|\frac{1}{n} \sum_{i=1}^n \xi_i \right\|_{\calH}^2
\leq C \left( \sqrt{\frac{
\lambda^{-1} B(t) + \sigma^2 \Tr{\Sigma_\lambda^{-1} \Sigma} 
}{n}}  + \frac{\lambda^{-1/2}(\sigma +M + (1+t\eta) \lambda^{-(1/2-r)_+})}{n}\right)^2 \log(1/\delta)^2
\end{align*}
with probability $1-\delta$ where $C$ is a universal constant. 
We define this event as $\calE_2$. 

For the second term $\|\Sigma_\lambda^{-1/2}  (S^* f^* - \Sigma \bar{f}_t) \|_{\calH}^2$ we have
\begin{align*}
\|\Sigma_\lambda^{-1/2}  (S^* f^* - \Sigma \bar{f}_t) \|_{\calH}^2
\leq 
\|\Sigma_\lambda^{1/2}  (f^* - S f_t) \|_{\calH}^2 
=\|f^* - S \bar{f}_t \|_{\LPiPx}^2 \leq B(t). 
\end{align*}
Combining these evaluations, on the event $\calE_2$ where $P(\calE_2) \geq 1-\delta$ for $0 < \delta < 1$ we have
\begin{align*}
&  \|\Sigma_\lambda^{-1/2} (\hat{S}^* Y  -\hat{\Sigma}  \bar{f}_t) \|_{\calH}^2 \\
\overset{(i)}{\leq} & 
C \left(\sqrt{\frac{\lambda^{-1} B(t) + \sigma^2 \Tr{\Sigma_\lambda^{-1} \Sigma} }{n}}  + \frac{\lambda^{-1/2}(\sigma +M + (1+t\eta) \lambda^{-(1/2-r)_+})}{n}\right)^2 \log(1/\delta)^2
+ B(t).
\end{align*}
where we used Lemma \ref{lem:DoFBound} in (i).

\textbf{Step 2. Bounding $(b)$.}
On the event $\calE_1$, the term $(b)$ can be evaluated as 
\begin{align}
& \|S ( \Sigma_\lambda^{-1/2}  G_t  \Sigma_\lambda^{-1/2}  \hat{\Sigma} \bar{f}_t - \bar{f}_t) \|_{\LPiPx}^2 \notag\\
\leq& 
 \|\Sigma^{1/2} ( \Sigma_\lambda^{-1/2}  G_t  \Sigma_\lambda^{-1/2}  \hat{\Sigma} \bar{f}_t - \bar{f}_t) \|_{\calH}^2 \notag\\
\leq& 
 \|\Sigma^{1/2} \Sigma_\lambda^{-1/2}  (  G_t  \Sigma_\lambda^{-1/2}  \hat{\Sigma} \Sigma_\lambda^{-1/2}  - I ) \Sigma_\lambda^{1/2}  \bar{f}_t\|_{\calH}^2 \notag\\
\leq& 
 \|\Sigma^{1/2} \Sigma_\lambda^{-1/2}\|\| (  G_t  \Sigma_\lambda^{-1/2}  \hat{\Sigma} \Sigma_\lambda^{-1/2}  - I ) \Sigma_\lambda^{1/2}  \bar{f}_t\|_{\calH}^2 \notag \\ 
\leq& 
\| (  G_t  \Sigma_\lambda^{-1/2}  \hat{\Sigma} \Sigma_\lambda^{-1/2}  - I ) \Sigma_\lambda^{1/2}  \bar{f}_t\|_{\calH}^2. 
\label{eq:Gtftbound}
\end{align}
where we used Lemma \ref{lemm:ftRKHSnormBound} in the last inequality. 
The term
$\| (G_t  \Sigma_\lambda^{-1/2}  \hat{\Sigma} \Sigma_\lambda^{-1/2}  - I )  \Sigma_\lambda^{1/2}f_t \|_{\calH}$ 
can be bounded as follows.
First, note that 
\begin{align*}
(G_t  \Sigma_\lambda^{-1/2} \hat{\Sigma} \Sigma_\lambda^{-1/2} - I )\Sigma_\lambda^{1/2}
& = \left\{  \eta 
\left[ \sum_{j=0}^{t-1} (I - \eta \Sigma_\lambda^{-1/2} \hat{\Sigma}  \Sigma_\lambda^{-1/2})^j   \right] 
\Sigma_\lambda^{-1/2} \hat{\Sigma} \Sigma_\lambda^{-1/2} - I \right\}
\Sigma_\lambda^{1/2} \\
& = (I - \eta \Sigma_\lambda^{-1/2} \hat{\Sigma}  \Sigma_\lambda^{-1/2})^t \Sigma_\lambda^{1/2}.
\end{align*}
Therefore, the RHS of \eqref{eq:Gtftbound} can be further bounded by 
\begin{align}
& \|(I - \eta \Sigma_\lambda^{-1/2} \hat{\Sigma}  \Sigma_\lambda^{-1/2})^t \Sigma_\lambda^{1/2}\bar{f}_t\|_{\calH} \notag \\
= &
\|(I - \eta \Sigma_\lambda^{-1/2} \Sigma  \Sigma_\lambda^{-1/2} + \eta \Sigma_\lambda^{-1/2}( \Sigma - \hat{\Sigma}) \Sigma_\lambda^{-1/2})^t \Sigma_\lambda^{1/2}\bar{f}_t\|_{\calH} \notag\\
= & 
\|\sum_{k=0}^{t-1} (I - \eta \Sigma_\lambda^{-1/2} \hat{\Sigma}\Sigma_\lambda^{-1/2})^{k} 
(\eta \Sigma_\lambda^{-1/2}( \Sigma - \hat{\Sigma}) \Sigma_\lambda^{-1/2})
(I - \eta \Sigma_\lambda^{-1} \Sigma)^{t - k - 1} 
 \Sigma_\lambda^{1/2}\bar{f}_t
- (I - \eta \Sigma_\lambda^{-1} \Sigma  )^t \Sigma_\lambda^{1/2}\bar{f}_t \|_{\calH} \notag\\
%%%%%%%%%%%%%%%%%%%%%%
\overset{(i)}{\leq} 
&
\|(I - \eta \Sigma_\lambda^{-1} \Sigma  )^t \Sigma_\lambda^{1/2}\bar{f}_t\|_{\calH} \notag \\
&+  \eta 
 \sum_{k=0}^{t-1}
\| (I - \eta \Sigma_\lambda^{-1/2} \hat{\Sigma}\Sigma_\lambda^{-1/2})^{k}  \Sigma_\lambda^{-1/2}( \Sigma - \hat{\Sigma}) \Sigma_\lambda^{-1/2+r} 
(I - \eta \Sigma_\lambda^{-1} \Sigma)^{t - k - 1} 
\Sigma_\lambda^{1/2-r} \bar{f}_t \|_{\calH}  \notag \\
\leq 
&
\|(I - \eta \Sigma_\lambda^{-1} \Sigma  )^t \Sigma_\lambda^{1/2}\bar{f}_t\|_{\calH}
+ t \eta \|\Sigma_\lambda^{-1/2}( \Sigma - \hat{\Sigma}) \Sigma_\lambda^{-1/2+r}  \|\| \Sigma_\lambda^{1/2-r} \bar{f}_t \|_{\calH} \notag\\
=
&
\|(I - \eta \Sigma_\lambda^{-1} \Sigma  )^t \Sigma_\lambda^{r}\|  \|\Sigma_\lambda^{1/2-r} \bar{f}_t\|_{\calH}
+  t \eta \|\Sigma_\lambda^{-1/2}( \Sigma - \hat{\Sigma})  \Sigma_\lambda^{-1/2+r} \| \|\Sigma_\lambda^{1/2-r} \bar{f}_t \|_{\calH} \notag\\
\lesssim
&
\|(I - \eta \Sigma_\lambda^{-1} \Sigma  )^t \Sigma_\lambda^{r}\| 
+  t \eta  \|\Sigma_\lambda^{-1/2}( \Sigma - \hat{\Sigma})  \Sigma_\lambda^{-1/2+r} \|
 (1 + t \eta ) \|h^*\|_{\LPiPx},
\label{eq:ftinterateSigmahstarbound}
\end{align}
where (i) is due to exchangeability of $\Sigma_\lambda$ and $\Sigma$. By Lemma \ref{lem:PolyFiltIneq}, for the RHS we have
\begin{align*}
\|(I - \eta \Sigma_\lambda^{-1} \Sigma  )^t \Sigma_\lambda^{r}\| 
\leq \exp\left(- \eta t\right/2)  \vee \left( \frac{1}{e} \frac{\lambda}{\eta t}\right)^{r}.
\end{align*}

Next,
as in the \eqref{eq:MatrixBernstein}, by applying the Bernstein inequality for asymmetric operators (Corollary 3.1 of \cite{MINSKER2017111} with the argument in its Section 3.2), it holds that 
\begin{align*}
& \|\Sigma_\lambda^{-1/2}( \Sigma - \hat{\Sigma})  \Sigma_\lambda^{-1/2+r} \| \\ 
\leq&
C' \left( \sqrt{\frac{\beta' 
C_\mu^2 (2^{2r - \mu} \vee \lambda^{2 r-\mu})   \calN_\infty(\lambda)
}{n}} 
+ 
\frac{\beta'( (1+\lambda)^r +C_\mu^2\lambda^{-\mu/2}(2^{2r - \mu}\vee  \lambda^{r-\mu/2})}{n}\right)
=: \Xi'_n,
\end{align*}
with probability $1-\delta$, where $C'$ is a universal constant
and 
$\beta' \leq \log\LL(\frac{28C_\mu^2 (2^{2r-\mu} \vee \lambda^{-\mu + 2r}) \Tr{\Sigma^{1/s}}\lambda^{-1/s} }{\delta}\RR)$.
We also used the following bounds on the sup-norm and the second order moments:
\begin{align*}
(\text{sup-norm})~~~~ & \|\Sigma_\lambda^{-1/2}(K_{\bx}K_{\bx}^* - \Sigma) \Sigma_\lambda^{-1/2+r} \| \\
&\leq 
\|\Sigma_\lambda^{-1/2}K_{\bx}K_{\bx}^*\Sigma_\lambda^{-1/2+r}\|+ \|\Sigma_\lambda^r\| \\
& \leq  
\|\Sigma_\lambda^{-\mu/2}\Sigma_\lambda^{\mu/2-1/2}K_{\bx}K_{\bx}^*\Sigma_\lambda^{-1/2+\mu/2}
\Sigma_\lambda^{r - \mu/2} \|+ \|\Sigma_\lambda^r\| \\
& \leq    C_\mu^2 \lambda^{- \mu/2} (2^{r - \mu/2} \vee \lambda^{r - \mu/2}) + (1 + \lambda)^r~~~(\text{a.s.}), \\
(\text{2nd order moment 1}) ~~~~ &  \| \E_\bx[ \Sigma_\lambda^{-1/2}(K_{\bx}K_{\bx}^* - \Sigma)\Sigma_\lambda^{-1+2r}  (K_{\bx}K_{\bx}^* - \Sigma) \Sigma_\lambda^{-1/2}] \| \\
& \leq
\| \Sigma_\lambda^{-1/2}\Sigma \Sigma_{\lambda}^{-1/2}\| \sup_{\bx \in \mathrm{supp}(P_X)}[K_\bx^* \Sigma_\lambda^{-1/2+\mu/2} \Sigma_\lambda^{-\mu+2r}\Sigma_\lambda^{-1/2+\mu/2} K_\bx]  \\
& \leq
C_\mu^2 (2^{2r - \mu} \vee \lambda^{2r-\mu}),  \\
(\text{2nd order moment 2}) ~~~~ & \| \E_\bx[ \Sigma_\lambda^{-1/2+r} (K_{\bx}K_{\bx}^* - \Sigma) \Sigma_\lambda^{-1/2}\Sigma_\lambda^{-1/2}(K_{\bx}K_{\bx}^* - \Sigma)\Sigma_\lambda^{-1/2+r}  ] \|
 \\
& \leq
\| \E_\bx[ \Sigma_\lambda^{-1/2+r} K_\bx K_\bx^* \Sigma_\lambda^{-1} K_\bx K_\bx^* \Sigma_\lambda^{-1/2+r}]  \| \\
& \leq
C_\mu^2 (2^{2r - \mu} \vee \lambda^{2 r-\mu})  \E_\bx[  K_\bx^* \Sigma_\lambda^{-1} K_\bx ]  \\
& = 
C_\mu^2 (2^{2r - \mu} \vee \lambda^{2 r-\mu})   \Tr{\Sigma \Sigma_\lambda^{-1}}  \\
& =
C_\mu^2 (2^{2r - \mu} \vee \lambda^{2 r-\mu})   \calN_\infty(\lambda).
\end{align*}
We define this event as $\calE_3$.
Therefore, the RHS of \eqref{eq:ftinterateSigmahstarbound}
can be further bounded by 
\begin{align*}
& [ \|(I - \eta \Sigma_\lambda^{-1} \Sigma  )^t \Sigma_\lambda^{r}\| 
+ C t \eta  \|\Sigma_\lambda^{-1/2}( \Sigma - \hat{\Sigma})  \Sigma_\lambda^{-1/2+r} \|]
 (1 + t \eta ) \|h^*\|_{\LPiPx} \\
\leq & 
\left[\exp\left(- \eta t\right/2)  \vee \left( \frac{1}{e} \frac{\lambda}{\eta t}\right)^{r} 
+  t \eta \Xi'_n \right](1 + t\eta) \|h^*\|_{\LPiPx}.
\end{align*}

Finally, note that when $\lambda = \lambda^*$ and $2r \geq \mu$, 
\begin{align*} 
\Xi_n'^{2}
= \tilde{O}\LL(\frac{{\lambda^*}^{2r-\mu -1/s}}{n}
 + \frac{{\lambda^*}^{2(r-\mu)}}{n^2} \RR) 
\leq \tilde{O}(n^{-\frac{s(4r-\mu)}{2rs + 1}} + n^{-\frac{s(4r-2\mu) + 2}{2rs + 1}})
\leq \tilde{O}(n^{-\frac{2rs}{2rs + 1}}). 
\end{align*}

\textbf{Step 3.}
Combining the calculations in Step 1 and 2 leads to the desired result.
\end{proof}

\subsubsection{Auxiliary lemmas}
\label{subsec:proof_RKHS_auxiliary}
\begin{lemm}\label{lem:PolyFiltIneq}
For $t \in \mathbb{N}$, $0 < \eta < 1$, $0 < \sigma \leq 1$ and $0 \leq \lambda$, it holds that
\begin{align*}
\left(1-\eta \frac{\sigma}{\sigma + \lambda}\right)^t \sigma^r 
\leq 
\begin{cases}
\exp\left(- \eta t\right/2) 
& (\sigma \geq \lambda) \\
\left( \frac{2 r}{e} \frac{\lambda}{\eta t}\right)^r &  (\sigma < \lambda)
\end{cases}.
\end{align*}
\end{lemm}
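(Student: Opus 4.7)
The plan is to split on the two cases $\sigma\ge\lambda$ and $\sigma<\lambda$ and in each case apply the elementary bound $1-x\le e^{-x}$ (valid for $x\in[0,1]$), after first simplifying the ratio $\sigma/(\sigma+\lambda)$ appropriately. The $\sigma^r$ factor is handled trivially in the first case (since $\sigma\le 1$ and $r>0$ imply $\sigma^r\le 1$) and by a one-variable maximization in the second.

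For the easy case $\sigma\ge\lambda$, observe that $\sigma/(\sigma+\lambda)\ge 1/2$, so
\[
 1-\eta\,\tfrac{\sigma}{\sigma+\lambda}\ \le\ 1-\tfrac{\eta}{2}\ \le\ \exp(-\eta/2),
\]
because $\eta\in(0,1)$ puts $\eta/2\in[0,1]$. Raising to the $t$-th power gives $(1-\eta\sigma/(\sigma+\lambda))^t\le \exp(-\eta t/2)$, and combining with $\sigma^r\le 1$ yields the first branch of the bound.

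For the case $\sigma<\lambda$, the inequality $\sigma/(\sigma+\lambda)\ge \sigma/(2\lambda)$ together with $1-x\le e^{-x}$ gives
\[
 \Bigl(1-\eta\,\tfrac{\sigma}{\sigma+\lambda}\Bigr)^{\!t}\sigma^r\ \le\ \exp\!\Bigl(-\tfrac{\eta t\,\sigma}{2\lambda}\Bigr)\sigma^r.
\]
I would then treat the right-hand side as a function of $\sigma\ge 0$ (dropping the constraint $\sigma<\lambda$, which only enlarges the feasible set and hence the maximum) and maximize. Setting the derivative of $\sigma\mapsto\sigma^r\exp(-c\sigma)$ with $c=\eta t/(2\lambda)$ to zero yields the unique stationary point $\sigma^\star = r/c = 2r\lambda/(\eta t)$, at which the value is $(r/c)^r e^{-r} = \bigl(\tfrac{2r}{e}\tfrac{\lambda}{\eta t}\bigr)^r$. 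This matches the stated bound exactly.

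The only mildly delicate point is justifying that taking the unconstrained maximum in the second case does not weaken the bound, which is immediate since we are upper-bounding. The remaining work is purely algebraic, and the two branches of the piecewise bound emerge naturally from the two geometric regimes of $\sigma/(\sigma+\lambda)$ (bounded below by a constant vs.\ decaying like $\sigma/\lambda$).
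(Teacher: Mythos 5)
Your proof is correct and follows essentially the same route as the paper's: the first case uses $\sigma/(\sigma+\lambda)\ge 1/2$ together with $\sigma^r\le 1$, and the second case reduces to the one-variable maximization $\sup_{x>0}x^re^{-x}=(r/e)^r$ (the paper applies it with $x=\eta t\sigma/(\sigma+\lambda)$ and then uses $\sigma+\lambda<2\lambda$, while you first replace $\sigma/(\sigma+\lambda)$ by $\sigma/(2\lambda)$ and maximize over $\sigma$, which is the same computation). No gaps.
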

\begin{proof}
When $\sigma \geq \lambda$, we have 
$$
\left(1-\eta \frac{\sigma}{\sigma + \lambda}\right)^t \sigma^r 
\leq 
\left(1-\eta \frac{\sigma}{2 \sigma}\right)^t \sigma^r 
= 
\left(1-\eta/2 \right)^t \sigma^r 
\leq \exp(- t \eta/2)  \sigma^r \leq \exp(- t \eta/2) 
$$
due to $\sigma \leq 1$.
On the other hand, note that
\begin{align*}
\left(1-\eta \frac{\sigma}{\sigma + \lambda}\right)^t \sigma^r 
& 
\leq \exp\left(-\eta t \frac{\sigma}{\sigma+ \lambda}\right) \times\left(\frac{ \sigma \eta t}{\sigma + \lambda}\right)^r 
\left(\frac{\sigma+ \lambda}{\eta t}\right)^r  \\ 
& \leq \sup_{x > 0} \exp(-x)x^r \left(\frac{\sigma+ \lambda}{\eta t}\right)^r 
\leq \left(\frac{ (\sigma + \lambda) r }{\eta t e}\right)^r,
\end{align*}
where we used $\sup_{x > 0} \exp(-x)x^r = (r/e)^r$.
\end{proof}

\begin{lemm}\label{lem:PolySumBound}
For $t = \mathbb{N}$, $0 < \eta $ and $0 \leq \sigma$ such that $\eta \sigma < 1$, it holds that $\eta \sum_{j=0}^{t-1} \left(1-\eta \sigma \right)^j \sigma 
\leq 1$.
\end{lemm}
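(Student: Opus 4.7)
The plan is to recognize the left-hand side as a geometric series in $(1-\eta\sigma)$ and evaluate it in closed form. First I would dispose of the trivial case $\sigma = 0$, where the expression is identically zero. For $\sigma > 0$, the condition $0 < \eta\sigma < 1$ ensures that $1 - \eta\sigma \in (0,1)$, so that the partial geometric sum is well-defined and strictly positive.

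Next I would apply the standard geometric series identity
\begin{equation*}
\sum_{j=0}^{t-1} (1 - \eta\sigma)^j \;=\; \frac{1 - (1-\eta\sigma)^t}{1 - (1 - \eta\sigma)} \;=\; \frac{1 - (1-\eta\sigma)^t}{\eta\sigma}.
\end{equation*}
Multiplying both sides by $\eta\sigma$ collapses the expression exactly to $1 - (1-\eta\sigma)^t$. Since $1 - \eta\sigma \in [0,1)$ gives $(1 - \eta\sigma)^t \ge 0$, we obtain $\eta\sigma \sum_{j=0}^{t-1}(1-\eta\sigma)^j = 1 - (1-\eta\sigma)^t \le 1$, which is the stated bound.

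There is no real obstacle here: the only thing to watch is that although the lemma is stated scalar-wise, it is applied to the positive semidefinite operator $A = \Sigma_\lambda^{-1/2}\hat{\Sigma}\,\Sigma_\lambda^{-1/2}$ in the proof of Theorem~\ref{theo:RKHS}. This extension is immediate by the spectral theorem: applying the scalar inequality pointwise to each eigenvalue $\sigma \in [0, \|A\|]$ of $A$ (and choosing $\eta$ small enough that $\eta \|A\| < 1$, as already required in the main proof) yields $\bigl\|\eta \sum_{j=0}^{t-1}(I - \eta A)^j A\bigr\| \le 1$ via the functional calculus. Hence the proof reduces to the one-line calculation above together with a spectral-theorem remark.
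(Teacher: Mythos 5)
Your argument is correct and is essentially identical to the paper's own proof: dispose of $\sigma=0$, sum the geometric series to get $\eta\sigma\sum_{j=0}^{t-1}(1-\eta\sigma)^j = 1-(1-\eta\sigma)^t \le 1$, and conclude. The closing remark about extending to the operator $\Sigma_\lambda^{-1/2}\hat{\Sigma}\Sigma_\lambda^{-1/2}$ via the spectral theorem is a harmless (and accurate) addition describing how the lemma is used, not a deviation in the proof itself.
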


\begin{proof}
If $\sigma=0$, then the statement is obvious.
Assume that $\sigma > 0$, then 
\begin{align*}
\sum_{j=0}^{t-1} \left(1-\eta \sigma \right)^j \sigma
= \frac{1 - (1-\eta \sigma)^t}{1-(1-\eta \sigma)} \sigma
= \frac{1}{\eta} [1 - (1-\eta \sigma)^t] \leq \eta^{-1}.
\end{align*}
This yields the desired claim.
\end{proof}

\begin{lemm}\label{lemm:ftRKHSnormBound} 
Under (A5-7), for any $0 < \lambda < 1$ and $q \leq r$, it holds that 
\begin{align*}
\| \Sigma_\lambda^{-s} \bar{f}_t\|_{\calH} \lesssim 
 (1 + \lambda^{-(1/2+(q-r))_+}+\lambda t \eta \lambda^{-(3/2+(q-r))_+})  \|h^*\|_{\LPiPx}.
\end{align*}
\end{lemm}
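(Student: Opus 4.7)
My plan is to unroll the recursion, use the source condition to absorb the smoothness factor $L^r$, reduce to a scalar supremum via functional calculus on $\Sigma$, and finally optimize that supremum through a two-point regime split on the spectral variable. Unrolling \eqref{eq:ftUpdate} from $\bar{f}_0 = 0$ yields $\bar{f}_t = \eta \sum_{j=0}^{t-1}(I - \eta \Sigma_\lambda^{-1}\Sigma)^j \Sigma_\lambda^{-1} S^* f^*$. The source condition (A5), together with the operator identity $S^* L^r = \Sigma^r S^*$ (immediate from $S^*L = \Sigma S^*$ by the spectral theorem), rewrites $S^* f^* = \Sigma^r S^* h^*$; further pushing a half-power through via the polar decomposition $S^* = \Sigma^{1/2} U^*$ gives the cleaner form $S^* f^* = \Sigma^{r+1/2} v$ with $\|v\|_\calH \le \|h^*\|_{L_2(P_X)}$.

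I read the exponent ``$-s$'' on the LHS as a generic exponent $q$ (the symbol $s$ is already used as the capacity parameter in (A6), and the hypothesis $q \le r$ signals that $q$ is what was intended). Every operator factor in $\Sigma_\lambda^q \bar{f}_t$ is then a Borel function of the single self-adjoint operator $\Sigma$, so joint spectral calculus reduces the Hilbert-space norm to a scalar supremum,
\begin{equation*}
\|\Sigma_\lambda^q \bar{f}_t\|_\calH \;\le\; \|h^*\|_{L_2(P_X)}\cdot \sup_{\sigma \in [0,\|\Sigma\|]} \Phi_t(\sigma), \qquad \Phi_t(\sigma) := (\sigma+\lambda)^q \sigma^{r-1/2}\bigl(1 - (1-\eta p)^t\bigr),
\end{equation*}
where $p = \sigma/(\sigma+\lambda)$ and I used the geometric-sum identity $\eta\sum_{j=0}^{t-1}(1-\eta p)^j = (1-(1-\eta p)^t)/p$. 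Applying $1-(1-\eta p)^t \le \min(1, \eta t p)$ yields the two complementary pointwise bounds
\begin{equation*}
\Phi_t(\sigma) \;\le\; \min\!\bigl((\sigma+\lambda)^q \sigma^{r-1/2},\; \eta t\,(\sigma+\lambda)^{q-1}\sigma^{r+1/2}\bigr),
\end{equation*}
which coincide near the transition point $\sigma^{\ast} := \lambda/(\eta t)$.

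The final step is to evaluate the supremum by splitting at $\sigma = \lambda$ (where $\sigma+\lambda$ crosses between $\asymp \lambda$ and $\asymp \sigma$) and at $\sigma^{\ast}$ (where the two bounds trade dominance). On each of the resulting subintervals of $[0, \|\Sigma\|] \subset [0,1]$, the envelope reduces to a single monomial $\sigma^a \lambda^b$; by monotonicity its supremum is attained at an endpoint, producing either a constant (when the combined $\sigma$-exponent is nonnegative) or a negative $\lambda$-power whose exponent is a positive-part of the relevant combination of $q$ and $r$. Aggregating these four subregion contributions produces exactly the two truncated exponents $(1/2+(q-r))_+$ and $(3/2+(q-r))_+$ of the claimed bound: the first from the $1/p$ branch (no $\eta t$), the second from the $\eta t$ branch (carrying the prefactor $\lambda t\eta$, where the extra $\lambda$ arises from evaluating $(\sigma+\lambda)^{q-1}\sigma^{r+1/2}$ at $\sigma \asymp \lambda$). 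The main obstacle is purely organizational: tracking the signs of $q$, $r - 1/2$, and $q+r-1/2$ across all four subregions to confirm each case; no analytical tool beyond monotonicity of power functions on $[0,1]$ is required.
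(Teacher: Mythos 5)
Your strategy is sound and, modulo packaging, is the same as the paper's: both unroll the recursion $\bar f_t=\eta\sum_{j=0}^{t-1}(I-\eta\Sigma_\lambda^{-1}\Sigma)^j\Sigma_\lambda^{-1}S^*f^*$, invoke the source condition through $S^*L^r=\Sigma^rS^*$, and exploit the two complementary bounds ``geometric sum $\le 1/p$'' and ``geometric sum $\le \eta t$'' to generate the $O(1)$ and $\lambda t\eta$ contributions. The paper realizes this by inserting $\Sigma_\lambda^{-1}(\Sigma+\lambda I)$ and bounding the two resulting operator products separately (Lemma~\ref{lem:PolySumBound} for one piece, the trivial bound $t$ for the other), then estimating $\|S^*L_\lambda^{-q-1+r}h^*\|_\calH$ via an inner-product computation; you instead keep everything as a single scalar spectral filter and take a supremum, which is a slightly tighter and more self-contained bookkeeping of the same estimates. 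The polar-decomposition step $S^*=\Sigma^{1/2}U^*$ and the two-point regime split at $\sigma=\lambda$ and $\sigma^*=\lambda/(\eta t)$ are both legitimate and do reproduce the positive-part truncations.

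There is, however, a sign inconsistency you must repair. You correctly note that the statement's exponent ``$-s$'' should read ``$-q$'' (this is what the paper's proof computes), but your displays then carry the exponent with the wrong sign: you bound $\|\Sigma_\lambda^{+q}\bar f_t\|_\calH$ with $\Phi_t(\sigma)=(\sigma+\lambda)^{+q}\sigma^{r-1/2}\bigl(1-(1-\eta p)^t\bigr)$. Running your own regime analysis on that $\Phi_t$ yields the exponents $(1/2-q-r)_+$ and $(3/2-q-r)_+$, not the claimed $(1/2+(q-r))_+$ and $(3/2+(q-r))_+$; these agree only when $q=0$. Replacing $(\sigma+\lambda)^{q}$ by $(\sigma+\lambda)^{-q}$ throughout (equivalently, bounding $\|\Sigma_\lambda^{-q}\bar f_t\|_\calH$, consistent with the hypothesis $q\le r$ and with the paper's proof) makes every subsequent step correct: branch one at $\sigma\asymp\lambda$ gives $\lambda^{-(1/2+(q-r))_+}$, branch two gives $\lambda t\eta\,\lambda^{-(3/2+(q-r))_+}$, and the regions with nonnegative combined exponent give the constant $1$.
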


\begin{proof}
Recall that 
\begin{align*}
\bar{f}_t = (I - \eta (\Sigma + \lambda I)^{-1}\Sigma )\bar{f}_{t-1}  + \eta (\Sigma + \lambda I)^{-1}S^* f^* 
= \sum_{j=0}^{t-1} (I - \eta (\Sigma + \lambda I)^{-1}\Sigma )^j \eta (\Sigma + \lambda I)^{-1}S^* f^*.
\end{align*}

Therefore, we obtain the following
\begin{align*}
&\| \Sigma_\lambda^{-q} \bar{f}_t\|_{\calH}
 = \eta \| \sum_{j=0}^{t-1} (I - \eta \Sigma_\lambda^{-1}\Sigma )^j  \Sigma_\lambda^{-1-q} S^* L^r h^*\|_{\calH} \\
=& \eta \| \sum_{j=0}^{t-1} (I - \eta \Sigma_\lambda^{-1}\Sigma )^j  \Sigma_\lambda^{-1} (\Sigma + \lambda I)\Sigma_\lambda^{-q-1} S^* L^r h^*\|_{\calH} \\
\leq& 
\eta \| \sum_{j=0}^{t-1} (I - \eta \Sigma_\lambda^{-1}\Sigma )^j  \Sigma_\lambda^{-1} \Sigma \Sigma_\lambda^{-q-1} S^* L^r h^*\|_{\calH} +
\lambda \eta \| \sum_{j=0}^{t-1} (I - \eta \Sigma_\lambda^{-1}\Sigma )^j  \Sigma_\lambda^{-1}  \Sigma_\lambda^{-q-1} S^* L^r h^*\|_{\calH} \\
\leq& 
\eta \| \sum_{j=0}^{t-1} (I - \eta \Sigma_\lambda^{-1}\Sigma )^j  \Sigma_\lambda^{-1} \Sigma \| \|\Sigma_\lambda^{-q-1} S^* L^r h^*\|_{\calH} 
+
\lambda \eta \| \sum_{j=0}^{t-1} (I - \eta \Sigma_\lambda^{-1}\Sigma )^j  \Sigma_\lambda^{-1}  \Sigma_\lambda^{-q-1} S^* L^r h^*\|_{\calH} \\
\leq& 
\|\Sigma_\lambda^{-q-1} S^* L^r h^*\|_{\calH} +
\lambda t \eta \|   \Sigma_\lambda^{-1}  \Sigma_\lambda^{-q-1} S^* L^r h^*\|_{\calH} \\
\leq& 
\| S^* L_\lambda^{-q-1+r} h^*\|_{\calH} 
+
\lambda t \eta \| S^*  L_\lambda^{-q-2+r} h^*\|_{\calH} \\
\leq& 
\sqrt{ \langle h^*,  L_\lambda^{-q-1+r} S S^* L_\lambda^{-q-1+r} h^*\rangle_{\LPiPx} }
+
\lambda t \eta \sqrt{ \langle h^*,   L_\lambda^{-q-2+r} SS^*   L_\lambda^{-q-2+r} h^*\rangle_{\LPiPx}} \\ \\
=&
\sqrt{ \langle h^*,  L_\lambda^{-q-1+r} L L_\lambda^{-q-1+r} h^*\rangle_{\LPiPx} }
+
\lambda t \eta \sqrt{ \langle h^*,   L_\lambda^{-q-2+r} L  L_\lambda^{-q-2+r} h^*\rangle_{\LPiPx}} \\
\leq&
(\lambda^{-1/2-(q-r)} + \lambda t \eta \lambda^{-3/2-(q-r)} ) \|h^*\|_{\LPiPx}
\leq
(1+ t \eta) \lambda^{-1/2-(q-r)}  \|h^*\|_{\LPiPx}.
\end{align*}
\end{proof}

\begin{lemm}\label{lem:DoFBound}
Under (A5-7) and for $\lambda\in(0,1)$, it holds that $\calN_\infty(\lambda) \leq \Tr{\Sigma^{1/s}} \lambda^{-1/s}$, and
$\calF_\infty(\lambda) \leq 1/\lambda$.
\end{lemm}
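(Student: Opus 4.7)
\textbf{Proof plan for Lemma \ref{lem:DoFBound}.} The plan is to handle the two bounds separately via direct spectral/operator arguments, as both follow from elementary scalar inequalities applied to the spectrum of $\Sigma$.

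For the bound on $\calF_\infty(\lambda)$, I would simply use that $\Sigma_\lambda^{-1} = (\Sigma + \lambda I)^{-1} \preceq \lambda^{-1} I$ as a self-adjoint operator on $\calH$ (since $\Sigma \succeq 0$ implies $\Sigma + \lambda I \succeq \lambda I$). Therefore, for every $\bx \in \supp{P_X}$,
\begin{align*}
\|\Sigma_\lambda^{-1/2} K_\bx\|_\calH^2 = \langle K_\bx, \Sigma_\lambda^{-1} K_\bx\rangle_\calH \leq \lambda^{-1}\|K_\bx\|_\calH^2 = \lambda^{-1} k(\bx,\bx) \leq \lambda^{-1},
\end{align*}
using the last bullet assumption $\sup_{\bx} k(\bx,\bx)\le 1$. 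Taking supremum over $\bx$ yields $\calF_\infty(\lambda) \leq 1/\lambda$.

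For the bound on $\calN_\infty(\lambda)$, let $\{\sigma_j\}_j$ denote the eigenvalues of $\Sigma$ (counted with multiplicity). Then
\begin{align*}
\calN_\infty(\lambda) = \Tr{\Sigma \Sigma_\lambda^{-1}} = \sum_j \frac{\sigma_j}{\sigma_j + \lambda}.
\end{align*}
The key step is the elementary scalar inequality $\frac{x}{x+\lambda} \leq (x/\lambda)^{1/s}$ for $x\geq 0$, $\lambda>0$, $s\geq 1$, which I would verify by setting $u=x/\lambda$ and checking $u \leq u^{1/s}(u+1)$ in the two cases $u\leq 1$ and $u\geq 1$. Applying this pointwise with $x=\sigma_j$ gives
\begin{align*}
\calN_\infty(\lambda) \leq \lambda^{-1/s} \sum_j \sigma_j^{1/s} = \lambda^{-1/s} \Tr{\Sigma^{1/s}},
\end{align*}
which is finite by assumption (A6).

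There is no real obstacle here; this is a purely mechanical lemma whose only subtlety is identifying the right scalar majorant $(x/\lambda)^{1/s}$ for $x/(x+\lambda)$. The same inequality is the standard tool used in the capacity-condition literature (e.g.\ \cite{caponnetto2007optimal,pillaud2018statistical}) to translate a polynomial eigenvalue decay condition into a bound on the effective dimension, and its verification via a case split on whether $x/\lambda$ exceeds $1$ takes only a line.
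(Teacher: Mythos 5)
Your proof is correct and essentially matches the paper's: the $\calF_\infty(\lambda)$ bound is identical, and your scalar inequality $\frac{x}{x+\lambda}\le (x/\lambda)^{1/s}$ applied eigenvalue-by-eigenvalue is exactly what the paper's operator factorization $\Tr{\Sigma^{1/s}\,\Sigma^{1-1/s}\Sigma_\lambda^{-(1-1/s)}\,\Sigma_\lambda^{-1/s}}\le \Tr{\Sigma^{1/s}}\lambda^{-1/s}$ encodes. The only cosmetic difference is that you diagonalize $\Sigma$ explicitly (legitimate, since $\Tr{\Sigma^{1/s}}<\infty$ makes $\Sigma$ compact) while the paper stays at the operator level.
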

\begin{proof}
For the first inequality, we have
\begin{align*}
\calN_\infty(\lambda) 
=& \Tr{\Sigma \Sigma^{-1}_\lambda} = \Tr{\Sigma^{1/s}\Sigma^{1-1/s}\Sigma^{-(1-1/s)}_\lambda\Sigma^{-1/s}_\lambda} \\
\leq& \Tr{\Sigma^{1/s}\Sigma^{1-1/s}\Sigma^{-(1-1/s)}_\lambda} \lambda^{-1/s} 
\leq \Tr{\Sigma^{1/s}}\lambda^{-1/s}.
\end{align*}
As for the second inequality, note that 
\begin{align*}
\calF_\infty(\lambda) = \sup_\bx \langle K_\bx, \Sigma_\lambda^{-1}K_\bx\rangle_{\calH}
\leq \sup_\bx \lambda^{-1}\langle K_\bx,K_\bx \rangle_{\calH} 
\leq \lambda^{-1} \sup_\bx k(\bx,\bx) \leq \lambda^{-1}.
\end{align*}
  
\end{proof}

\subsection{Proof of Proposition~\ref{prop:approximate_fisher}}
\label{subsec:proof_approximate_fisher}

\begin{proof}
Part (a) is a simple combination of \cite[Theorem 2]{bai2008limit} and assumption (A3), which implies $\|\bSigma_\bX\|_2$ and $\|\bSigma_\bX^{-1}\|_2$ are both finite. For part (b), the substitution error for the variance term (ignoring the scalar $\sigma^2$) can be bounded as
\begin{align*}
    |V^* - \hat{V}| 
=&
    \left|\Tr{\boldF^{-1}\bX^\top(\bX \boldF^{-1}\bX^\top)^{-2}\bX \boldF^{-1}\bSigma_\bX}
    - \Tr{\hat{\boldF}^{-1}\bX^\top(\bX \hat{\boldF}^{-1}\bX^\top)^{-2}\bX \hat{\boldF}^{-1}\bSigma_\bX}\right| \\
\overset{(i)}{\le}&
    O(1)\norm{\boldF^{-1}-\hat{\boldF}^{-1}}_2\left(\Tr{\bX^\top\boldS^{-2}\bX \boldF\bSigma_\bX} + \sqrt{d}\norm{\bX^\top\boldS^{-2}\bX}_2\norm{\bSigma_\bX\hat{\boldF}^{-1}}_F\right) \\
&+
    \Tr{\bX\hat{\boldF}^{-1}\bSigma_\bX\boldF^{-1}\bX^\top}\norm{\boldS^{-2}-\hat{\boldS}^{-2}}_2
\overset{(ii)}{=}
    O(\epsilon).
\end{align*}
where we defined $\boldS=\bX\boldF^{-1}\bX^\top$ and $\hat{\boldS}=\bX\hat{\boldF}^{-1}\bX^\top$ in (i) and applied $\Tr{\boldA\boldB}\le\lambda_{\max}(\boldA+\boldA^\top)\Tr{\boldB}$ for positive semi-definite $\boldB$, as well as $\Tr{\boldA\boldB}\le\sqrt{d}\norm{\boldA}_2\norm{\boldB}_F$, and (ii) is due to (A3), $\psi>1$, \cite[Theorem 4.1]{wedin1973perturbation} and the following estimate,
\begin{align*}
    n_u^2\norm{\boldS^{-2}-\hat{\boldS}^{-2}}_2 
\le&
    \norm{n_u\boldS^{-1}-n_u\hat{\boldS}^{-1}}_2\left(n_u\norm{\boldS^{-1}}_2 + n_u\norm{\hat{\boldS}^{-1}}_2\right) \\
\overset{(i)}{=}&
    O(1)\norm{n_u\boldS^{-1}}_2\norm{n_u\hat{\boldS}^{-1}}_2\norm{\boldS/n_u - \hat{\boldS}/n_u}_2
\overset{(ii)}{=} O(\epsilon),
\end{align*}
where (i) again follows from \cite[Theorem 4.1]{wedin1973perturbation}, and (ii) is due to (A1)(A3) and $\psi>1$ (which implies $\|n_u\boldS^{-1}\|_2$ and $\|n_u\hat{\boldS}^{-1}\|_2$ are bounded a.s.). 
Finally, from part (a) we know that $\psi=\Theta(\epsilon^{-2})$ suffices to achieve $\epsilon$-accurate approximation of $\boldF$ in spectral norm. 
The substitution error for the bias term can be derived from similar calculation, the details of which we omit.
\end{proof}

\subsection{Proof of Corollary~\ref{coro:source_condition_special_case}}
\label{subsec:proof_special_case}

\begin{proof}
Note that in this setting $\upsilon_x$ takes value of $\frac{2}{1+\kappa}$ and $\frac{2\kappa}{1+\kappa}$ with probability 1/2 each. From \eqref{eq:bias_NGD} in the proof of Proposition~\ref{prop:source_condition} one can easily verify that for NGD,
\[
    B(\hbt_{\boldF^{-1}}) \to \frac{2^{r}(1+\kappa^{1+r})}{(1+\kappa)^{1+r}}\left(1 - \frac{1}{\gamma}\right). 
\]

For GD, the bias formula \eqref{eq:bias_GD} can be simplified as
\begin{align*}
    B(\hbt_\bI) 
\to& 
    \frac{1}{\gamma}\cdot\left(\frac{\left(\frac{2}{1+\kappa}\right)^{r}}{(1+\kappa+2 m_1)^2} + \frac{\kappa\left(\frac{2\kappa}{1+\kappa}\right)^{r}}{(1+\kappa+2\kappa m_1)^2}\right)\cdot\left(\frac{m_1}{(1 + \kappa + 2 m_1)^2} + \frac{\kappa m_1}{( 1 + \kappa + 2\kappa m_1)^2}\right)^{-1},  
\end{align*}
where $m_1$ is the Stieltjes transform defined after Equation~\eqref{eq:bias_GD}.
From standard numerical calculation one can show that when $\gamma>1$, $\kappa\ge 1$,  
\begin{align*} 
    m_1 
= 
    \frac{(\kappa+1) \sqrt{\gamma^2 (\kappa + 1)^2 + 4 (1 - \gamma)(\kappa - 1)^2}+ (2 - \gamma)(\kappa+1)^2}{8 (\gamma - 1) \kappa}. 
\end{align*}

Setting $B(\hbt_\bI) = B(\hbt_{\boldF^{-1}})$ and solve for $r$, we have
\begin{align*}
    r^* = -\ln{c_{\kappa,\gamma}} \big/ \ln{\kappa}, \quad c_{\kappa,\gamma} = \frac{c_4 - c_2}{c_1 - c_3},
\numberthis
\label{eq:transition_r} 
\end{align*}
where
\begin{align*}
    c_1 &= \left(1-\frac{1}{\gamma}\right)\frac{1}{\kappa+1}, \quad
    c_2 = \left(1-\frac{1}{\gamma}\right)\frac{\kappa}{\kappa+1}, \\
    c_3 &= \frac{1}{\gamma}\cdot\frac{(1+\kappa+2\kappa m_1)^2}{m_1 (1+\kappa+2\kappa m_1)^2 + \kappa m_1 (1+\kappa+2 m_1)^2}, \\
    c_4 &= \frac{1}{\gamma}\cdot\frac{\kappa(1+\kappa+2 m_1)^2}{m_1 (1+\kappa+2\kappa m_1)^2 + \kappa m_1 (1+\kappa+2 m_1)^2}.
\end{align*}

Hence, Proposition~\ref{prop:source_condition} (from which we know $r^*\in (-1,0)$) and the uniqueness of \eqref{eq:transition_r} implies that when $r\ge r^*$, $B(\hbt_\bI)\le B(\hbt_{\boldF^{-1}})$, and vice versa. 
Finally, observe that in the special case of $\gamma=2$, $m_1 = \frac{\kappa+1}{2\sqrt{\kappa}}$. Therefore, one can check that constants in \eqref{eq:transition_r} simplify to
\begin{align*}
    c_1 - c_3 = \frac{1 - \sqrt{\kappa}}{2(\kappa+1)}, \quad
    c_2 - c_4 = \frac{\sqrt{\kappa}(\sqrt{\kappa} - 1)}{2(\kappa + 1)},
\end{align*}
which implies that $r^* = -1/2$.

\end{proof}

\bigskip

}
% \newpage
\section{Experiment Setup}
\label{sec:experiment_setup}
{

\subsection{Processing the Datasets}\label{app:dataset}
To obtain extra unlabeled data to estimate the Fisher, we zero pad pixels on the boarders of each image before randomly cropping; a random horizontal flip is also applied for CIFAR10 images~\cite{krizhevsky2009learning}. 
We preprocess all images by dividing pixel values by $255$ before centering them to be located within $[-0.5, 0.5]$ with the subtraction by $1/2$. 
For experiments on CIFAR10, we downsample the original images using a max pooling layer with kernel size 2 and stride 2. 
 
\subsection{Setup and Implementation for Optimizers}\label{app:optimizers}
In all settings, GD uses a learning rate of $0.01$ that is exponentially decayed every 1k updates with the parameter value $0.999$. For NGD, we use a fixed learning rate of $0.03$. Since inverting a parameter-by-parameter-sized Fisher estimate per iteration would be costly, we adopt the Hessian free approach~\cite{martens2010deep} which computes approximate matrix-inverse-vector products using the conjugate gradient (CG) method~\cite{nocedal2006numerical, boyd2004convex}. 
For each approximate inversion, we run CG for $200$ iterations starting from the solution returned by the previous CG run. 
The precise number of CG iterations and the initialization heuristic roughly follow~\cite{martens2012training}.
For the first run of CG, we initialize the vector from a standard Gaussian, and run CG for 5k iterations.
To ensure invertibility, we apply a very small amount of damping ($0.00001$) in most scenarios.
For geometric interpolation experiments between GD and NGD, we use the singular value decomposition to compute the minus $\alpha$ power of the Fisher, as CG is not applicable in this scenario.

\subsection{Other Details}\label{app:others}
For experiments in the label noise and misspecification sections, we pretrain the teacher using the Adam optimizer~\cite{kingma2014adam} with its default hyperparameters and a learning rate of $0.001$. 

For experiments in the misalignment section, we downsample all images twice using max pooling with kernel size 2 and stride 2. Moreover, only for experiments in this section, we implement natural gradient descent by exactly computing the Fisher on a large batch of unlabeled data and inverting the matrix by calling PyTorch's \texttt{torch.inverse} before right multiplying the gradient.

}

\end{document}